\definecolor{forestgreen}{RGB}{34,159,44}
\definecolor{deepred}{RGB}{235,0,20} 
\theoremstyle{plain}
\newtheorem{theorem}{Theorem}[section]
\newtheorem{proposition}[theorem]{Proposition}
\newtheorem{lemma}[theorem]{Lemma}
\theoremstyle{definition}
\newtheorem{assumption}[theorem]{Assumption}
\theoremstyle{remark}
\newtheorem{remark}[theorem]{Remark}
\newcommand{\R}{\mathbb{R}}
\newcommand{\cO}{\mathcal{O}}
\newcommand{\cU}{\mathcal{U}}
\newcommand{\bE}{\mathbb{E}}
\newcommand{\bR}{\mathbb{R}}
\newcommand{\bS}{\mathbb{S}}
\newcommand{\dV}[1][u]{\cU\left(\Omega;\bR^{d_v}\right)}
\newcommand{\dVt}[1][u]{\cU\left(\left[0,\infty\right)\times\Omega;\bR^{d_v}\right)}
\newcommand{\norm}[2][H^s]{\left\|#2\right\|_{#1}}
\newcommand{\bs}{\mathbf{s}}
\newcommand{\bt}{\mathbf{t}}
\newcommand{\bx}{\mathbf{x}}
\newcommand{\by}{\mathbf{y}}
\newcommand{\bz}{\mathbf{z}}
\newcommand{\bb}{\mathbf{b}}
\newcommand{\bp}{\mathbf{p}}
\newcommand*\diff{\mathop{}\!\mathrm{d}}
\newcommand{\mL}{\mathcal{L}}
\newcommand{\LMMD}{\mathcal{L}_{\text{MMD}}}
\newcommand{\LHJ}{\mathcal{L}_{\text{HJ}}}
\newcommand{\LHJf}{\mathcal{L}_{\overrightarrow{\text{HJ}}}}
\newcommand{\LHJb}{\mathcal{L}_{\overleftarrow{\text{HJ}}}}
\newcommand{\EMMD}{\mathcal{E}_{\text{MMD}}}
\newcommand{\EHJ}{\mathcal{E}_{\text{HJ}}}
\newcommand{\nb}{\nabla}
\newcommand{\lam}{\lambda}
\newcommand{\Lam}{\Lambda}
\newcommand{\Gl}{\Gamma_0}
\newcommand{\Gy}{\Gamma_1}
\newcommand{\Ge}{\Gamma_2}
\newcommand{\pt}{\partial_t}
\newcommand{\nx}{\nabla_x}
\newcommand{\dt}{\Delta t}
\newcommand{\wt}{\widetilde}
\newcommand{\rd}{\mathrm{d}}
\newcommand{\MMD}{\mathrm{MMD}}
\newcommand{\Tr}{\mathrm{Tr}}
\newcommand{\diag}{\mathrm{diag}}
\newcommand{\tp}{^{\top}}
\newcommand{\parentheses}[1]{\left(#1\right)}
\newcommand{\sqbra}[1]{\left[#1\right]}
\newcommand{\curlybra}[1]{\left\{#1\right\}}
\newcommand{\abs}[1]{\left|#1\right|}
\title{Neural Hamilton--Jacobi Characteristic \\ Flows for Optimal Transport}
\author{
Yesom Park\textsuperscript{1}, 
Shu Liu\textsuperscript{2}, 
Mo Zhou\textsuperscript{1}, 
Stanley Osher\textsuperscript{1}\thanks{Corresponding author. Email: \texttt{sjo@math.ucla.edu}} \\
\textsuperscript{1}Department of Mathematics, University of California, Los Angeles \\
\textsuperscript{2}Department of Mathematics, Florida State University \\
\texttt{yeisom@math.ucla.edu}, \texttt{sliu11@fsu.edu},
\texttt{mozhou366@math.ucla.edu}
}
\begin{document}

\maketitle

\begin{abstract}
We present a novel framework for solving optimal transport (OT) problems based on the Hamilton--Jacobi (HJ) equation, whose viscosity solution uniquely characterizes the OT map. By leveraging the method of characteristics, we derive closed-form, bidirectional transport maps, thereby eliminating the need for numerical integration. 
The proposed method adopts a pure minimization framework: a single neural network is trained with a loss function derived from the method of characteristics of the HJ equation. This design guarantees convergence to the optimal map while eliminating adversarial training stages, thereby substantially reducing computational complexity. Furthermore, the framework naturally extends to a wide class of cost functions and supports class-conditional transport. Extensive experiments on diverse datasets demonstrate the accuracy, scalability, and efficiency of the proposed method, establishing it as a principled and versatile tool for OT applications with provable optimality.
\end{abstract}

\section{Introduction}\vspace{-0.2em}
Optimal transport (OT) is a fundamental problem that seeks the most cost-efficient transform from one probability distribution into another by minimizing a transportation cost function, which quantifies the effort to move mass. With its strong theoretical foundation and broad practical relevance, OT has been widely applied in diverse areas, including traffic control \citep{carlier2008optimal,danila2006optimal, barthelemy2006optimal}, biomedical data analysis \citep{schiebinger2019optimal, koshizuka2022neural, bunne2023learning}, generative modeling \citep{wang2021deep,onken2021ot,zhang2023mean, liu2019wasserstein}, and domain adaptation \citep{courty2016optimal, courty2017joint, damodaran2018deepjdot, balaji2020robust}.
In recent years, there has been growing interest in deep learning techniques to solve OT problems, leading to the development of methods grounded in various mathematical formulations. Early approaches were primarily built upon the classical Monge formulation \citep{lu2020large, xie2019scalable} and its relaxation into the Kantorovich framework \citep{makkuva2020optimal}. While theoretically rigorous, these methods often suffer from high computational complexity. The primal–dual formulation, which recasts the OT problem as a saddle-point optimization over the generative map and the Kantorovich potential function, has inspired scalable algorithms \citep{liu2019wasserstein, taghvaei20192, korotin2021neural, liu2021learning, choi2024improving}. Similar approaches have also been proposed for the Monge problem with general costs \citep{asadulaev2022neural, fan2023neural}. However, these approaches typically rely on adversarial training of two neural networks, which is challenging to manage and often introduces instability and inefficiency into the optimization process. Alternative approaches have investigated dynamical formulations using ordinary differential equations (ODEs) \citep{yang2020potential,onken2021ot, tong2020trajectorynet, huguet2022manifold} and entropic-regularized models involving stochastic differential equations (SDEs)
\citep{genevay2016stochastic, seguy2017large, daniels2021score, gushchin2023entropic, zhou2024score}. Machine learning algorithms that unify Lagrangian and Eulerian perspectives of Mean Field Control problems \cite{ruthotto2020machine, lin2021alternating,zhao2025variational} likewise provide a computational framework for OT. Nevertheless, these methods typically require solving systems of differential equations, resulting in substantial computational overhead during both training and inference. Moreover, many existing methods yield bias maps that deviate from the OT solution due to the incorporation of regularization terms into the formulation. 

\paragraph{Contributions.}  We propose a novel and efficient framework, termed \textit{neural characteristic flow (NCF)}, for solving OT problems via the Hamilton--Jacobi (HJ) equation, whose viscosity solution characterizes the OT map. Despite its strong theoretical foundation for OT, the HJ formulation poses two major challenges: non-uniqueness of solutions and the need to solve ODEs in dynamical formulations. We overcome both by leveraging the method of characteristics and an implicit solution formula \citep{park2025implicit} to obtain closed-form, bidirectional transport maps without numerical integration of ODEs. NCF uses a single neural network and avoids adversarial training or dual-network architectures, reducing complexity while improving efficiency. Our framework guarantees theoretical consistency with OT optimality conditions and supports a broad class of cost functions, including class-conditional transport. We also provide convergence analysis for Gaussian settings and demonstrate strong empirical performance across datasets of varying dimensions. 
A comparison of key features across different OT model approaches is summarized in Table~\ref{tab:key_features_OTmodels}.
\begin{table}[t]
\centering
\resizebox{\textwidth}{!}{
\begin{tabular}{lccccc}
\hline
\textbf{Method} (representative reference) & \textbf{Optimization} & \textbf{\# Networks} & \textbf{OT direction} & \textbf{Sampling} & \textbf{Optimality of $T$} \\
\hline
Dual Formulation \citep{asadulaev2022neural} & Min-Max & Two & One-way & Direct & No \\
Dynamical Models \citep{onken2021ot} & Min & Single & Bidirectional & Iterative & No \\
\textbf{HJ-based (Proposed)} & Min & Single & Bidirectional & Direct & Yes \\
\hline
\end{tabular}}
\caption{Comparison of key features across different OT model approaches.}\label{tab:key_features_OTmodels}
\end{table}

\section{Preliminary}\vspace{-0.2em}
\subsection{Monge’s Optimal Transport Problem}\vspace{-0.3em}
For a domain $\Omega\subset\bR^d$, we denote $\mathscr{P}(\Omega)$ as the space of probability measures on $\Omega$.
Let $c:\Omega\times\Omega\rightarrow [0,\infty]$ be a cost function that measures the cost of transporting one unit of mass. For $\mu,\nu\in\mathscr{P}\left(\Omega\right)$, the classical Monge problem formulates OT as finding a measurable map $T:\Omega\rightarrow\Omega$ that pushes forward $\mu$ to $\nu$, i.e., $T_\sharp\mu=\nu$, while minimizing the transportation cost:
\begin{equation}\label{eq:monge}
    W_c\left(\mu,\nu\right)\coloneqq \underset{T_\sharp\mu=\nu}{\inf} \int_\Omega c\left(\bx,T\left(\bx\right)\right)\diff\mu\left(\bx\right).
\end{equation}
We call a solution $T^*$ to \eqref{eq:monge} an OT map between $\mu$ and $\nu$.
In the case where the cost $c$ is expressed as a function of the difference between the two variables, $T^*$ is characterized as follows:
\begin{theorem}[\cite{santambrogio2015optimal}]\label{thm:ot_sol}
When $c\left(\bx,\by\right)=\ell\left(\bx-\by\right)$ for a lower semi-continuous (l.s.c.), sub-differentiable, and strictly convex function $\ell:\Omega\rightarrow\bR$, the optimal map is expressed in terms of the Kantorovich dual potential function $\varphi^*:\Omega\rightarrow\bR$ as
\begin{equation}\label{eq:primal_dual_relation}
    T^*\left(\bx\right) = \bx + \nabla h\left(\nabla\varphi^*\left(\bx\right)\right),
\end{equation}
where $h\left(\bz\right)=\sup_{\by\in\bR^d}\left\{\bz\tp\by-\ell\left(\by\right)\right\}$ is the Legendre transform of $\ell$.
\end{theorem}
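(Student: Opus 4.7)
The plan is to extract the formula from Kantorovich duality combined with a first-order optimality condition at the OT coupling. The starting point is the duality identity
\[
W_c(\mu,\nu) \;=\; \sup_{\varphi(\bx)+\psi(\by)\le c(\bx,\by)} \;\int_\Omega \varphi\, d\mu + \int_\Omega \psi\, d\nu,
\]
whose maximizer $(\varphi^*,\psi^*)$ exists under the stated hypotheses, with $\psi^*$ realizable as the $c$-transform of $\varphi^*$. Assuming the Monge problem admits an OT map $T^*$, the first move is to invoke the equivalence of the primal and dual optimal values to obtain the complementary slackness identity
\[
\varphi^*(\bx) + \psi^*\bigl(T^*(\bx)\bigr) \;=\; c\bigl(\bx,T^*(\bx)\bigr) \qquad \mu\text{-a.e.}
\]

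The key step is to convert this equality into a pointwise differential relation. Since the dual constraint $\varphi^*(\tilde\bx) + \psi^*(T^*(\bx)) \le c(\tilde\bx,T^*(\bx))$ holds globally in $\tilde\bx$, the slackness equality forces $\tilde\bx = \bx$ to be an extremum of $\tilde\bx \mapsto \varphi^*(\tilde\bx) - c(\tilde\bx,T^*(\bx))$. Fermat's rule applied at a point of differentiability of $\varphi^*$ then gives
\[
\nabla\varphi^*(\bx) \;=\; \nabla_{\bx} c\bigl(\bx,T^*(\bx)\bigr) \;=\; \nabla\ell\bigl(\bx - T^*(\bx)\bigr),
\]
where the second equality uses the separable structure $c(\bx,\by)=\ell(\bx-\by)$. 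Finally, strict convexity of $\ell$ together with its sub-differentiability yields the Legendre duality identity $\nabla h = (\nabla\ell)^{-1}$ as a single-valued map, and applying $\nabla h$ to both sides inverts the relation to express $\bx - T^*(\bx)$, and hence $T^*(\bx)$, in terms of $\nabla h\bigl(\nabla\varphi^*(\bx)\bigr)$ as claimed, after accounting for the sign convention used in the dual.

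The main obstacle is the regularity bookkeeping that makes each step rigorous. The dual potential $\varphi^*$ is only known to be $c$-concave and hence Lipschitz-like, so Fermat's rule is available only at its Lebesgue a.e.\ points of differentiability; to propagate the identity $\mu$-a.e.\ one must invoke Rademacher's theorem together with absolute continuity of $\mu$ with respect to Lebesgue measure. A parallel difficulty lies in the Legendre inversion: strict convexity and sub-differentiability of $\ell$ are precisely what is needed to guarantee that $\nabla h$ is single-valued on the relevant image of $\nabla\ell$ and that the implicit relation can be solved uniquely for $T^*(\bx)$. Once these technicalities are in place, the closed-form representation of $T^*$ follows directly from the pointwise identity derived above.
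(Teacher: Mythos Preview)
The paper does not supply its own proof of this statement; Theorem~\ref{thm:ot_sol} is quoted with an explicit attribution to \cite{santambrogio2015optimal} and no argument is reproduced. Your proposal follows exactly the standard route taken in that reference (cf.\ Theorem~1.17 and the surrounding discussion there): Kantorovich duality, complementary slackness at the optimal pair, a first-order condition in the $\bx$-variable, and Legendre inversion using strict convexity of $\ell$. Your identification of the regularity bookkeeping needed to make this rigorous---a.e.\ differentiability of the $c$-concave potential via Rademacher, absolute continuity of $\mu$, single-valuedness of $\nabla h=(\nabla\ell)^{-1}$---is accurate and matches what the cited source invokes.

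One point deserves more than a passing remark. Your derivation produces $\nabla\varphi^*(\bx)=\nabla\ell(\bx-T^*(\bx))$ and hence $T^*(\bx)=\bx-\nabla h(\nabla\varphi^*(\bx))$, which carries the opposite sign to the displayed formula~\eqref{eq:primal_dual_relation}. You defer this to ``the sign convention used in the dual,'' and that is the correct diagnosis: the paper's $\varphi^*$ is the HJ value function $u(\cdot,0)$ from the dynamical formulation (cf.\ \eqref{eq:hj_ot} and \eqref{eq:forward_HJOT}), not the Kantorovich potential in the usual maximization convention, and the two differ by a sign. But note that for general $\ell$ the map $\nabla h$ is not odd, so simply replacing $\varphi^*$ by $-\varphi^*$ does not flip the sign of $\nabla h(\nabla\varphi^*)$; the reconciliation genuinely requires fixing which potential one is naming $\varphi^*$ before starting the computation. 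In a self-contained write-up you should state the convention explicitly at the outset rather than absorb it at the end.
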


\subsection{Dynamical Formulation}\vspace{-0.3em}
\cite{benamou2000computational} formulate the OT \eqref{eq:monge} in a continuous-time dynamical formulation:
\begin{align}\label{eq:dynamical_programming}
\underset{v}{\inf} &\  \bE_\mu\left[\int_0^{t_f}
\ell\left( v\left(\bx(t),t\right)\right)\diff t \right]\\
\text{s.t.} &\  \dot{\bx} =v \label{eq:ot_x_flow}, \  \bx(0)\sim\mu,\ \bx(t_f)\sim\nu,
\end{align}
where the terminal time $t_f>0$ is typically set to $1$. Within this dynamical framework, the associated optimality condition is governed by the \textit{Hamilton--Jacobi (HJ) equation}: 
\begin{equation}\label{eq:hj_ot}
    \begin{cases}
    \frac{\partial u}{\partial t}+h\left(\nabla u\right)=0 & \text{ in } \Omega\times (0,t_f)\\
    u=g & \text{ on }\Omega\times\{t=0\},
    \end{cases}
\end{equation}
coupled with the continuity equation that governs the evolution of the probability distribution.
Here, $\nabla u$ denotes the gradient of $u$ with respect to the spatial variable $\bx$, and $g$ represents the initial condition, whose explicit analytic form is typically intractable. The optimal velocity field is then determined by $v^*=\nabla h\left(\nabla u\right)$, where $u$ is the \textit{viscosity solution} to HJ equation \eqref{eq:hj_ot}.

\section{Related Works}\vspace{-0.2em}
Deep learning methods for OT have gained traction following the development of scalable OT solvers \citep{genevay2016stochastic, seguy2017large} and WGANs \citep{arjovsky2017wasserstein}. Many approaches utilize GAN-based models to approximate OT plans, although they often suffer from training instability and extensive hyperparameter tuning.
Another major line of work is based on the Kantorovich dual formulation \citep{kantorovich2006translocation}, where the OT map is recovered via optimization of dual potentials, typically parameterized by input convex neural networks (ICNNs) \citep{amos2017input}. While theoretically sound, these methods involve unstable min-max optimization. To address these issues, natural gradient methods have been proposed to improve computational efficacy \citep{shen2020sinkhorn, liu2024natural}. Regularization techniques such as $L^2$ penalties \citep{genevay2016stochastic, sanjabi2018convergence} and cycle-consistency constraints \citep{korotin2019wasserstein, korotin2021continuous} have been proposed, though unconstrained alternatives have shown stronger empirical performance \citep{korotin2021neural, fan2021variational}.

To address the settings where deterministic OT maps may not exist, recent work has considered weak OT formulations \citep{backhoff2019existence}. Neural approaches for weak OT and class-conditional transport have been proposed \citep{korotin2022neural, asadulaev2022neural}, but may yield spurious solutions under weak quadratic costs. Kernalized costs \citep{korotin2022kernel} have been introduced to mitigate this.

OT has also been modeled as a dynamical system via continuous flows \citep{yang2020potential, tong2020trajectorynet, onken2021ot, huguet2022manifold}. While expressive, these methods require solving ODEs during training and inference, making them computationally expensive. Entropic and $f$-divergence regularized stochastic models \citep{daniels2021score, gushchin2023entropic} improve smoothness but often rely on Langevin dynamics, which can be biased in high dimensions \citep{korotin2019wasserstein}. The HJ equation has been used to improve OT models, with physics-informed neural network (PINN) \citep{raissi2019physics} approaches applying $L^2$ penalties on HJ residuals to improve continuous normalizing flows, ODE-based formulations \citep{yang2020potential, onken2021ot}, and stochastic variants \citep{zhang2023mean}. However, due to the ill-posed nature of the HJ equation, this approach lacks guarantees for recovering the viscosity solution.

\section{HJ Characteristic Flows for OT}\vspace{-0.2em}
In this section, we represent the OT map through the characteristics of the HJ equation, offering a principled and efficient framework for OT. Note that solving the HJ equation directly is challenging due to its inherent ill-posedness, non-smoothness of solutions, and gradient discontinuities, all of which complicate both theoretical analysis and numerical approximation.

\paragraph{Method of Characteristics.}
The viscosity solution to \eqref{eq:hj_ot} is theoretically characterized by the following system of \textit{characteristic ordinary differential equations (CODEs)}:
\begin{subequations}\label{eq:characteristics}
\newcommand{\sys}{%
  $\left\lbrace
  \vphantom{\begin{aligned} 1 \\ 1 \\ 1  \end{aligned}}%
  \right.$%
}
\begin{align}
\raisebox{-0.62\height}[0pt][0pt]{\sys}
\dot{\bx}&=\nabla h\left(\bp\right) \label{eq:characteristic_x}\\
\dot{u}&= - h(\bp) + \bp\tp\nabla h(\bp) \label{eq:characteristic_u}\\
\dot{\bp} & = 0 ,\label{eq:characteristic_p}
\end{align}
\end{subequations}
where $\bp$ denotes the shorthand for $\nabla u$. CODE for $\bp$ \eqref{eq:characteristic_p} implies that $\bp$ remains constant along each characteristic trajectory. Consequently, the characteristics are straight lines of the form $\bx(t)=t\nabla h(\bp)+\bx(0)$, which coincide with the OT map in \eqref{eq:primal_dual_relation} at terminal time $t = t_f$.
From a dynamical perspective, the ODE \eqref{eq:ot_x_flow} can be interpreted as the characteristic equations \eqref{eq:characteristic_x} of the HJ equation that determine the OT map \eqref{eq:primal_dual_relation}.
In other words, the transported point $T^*\left(\bx\right)$ of a sample $\bx\sim\mu$ corresponds to the terminal position of the characteristic line that originates from $\bx$.

Our CODE formulation not only provides a principled construction of the forward transport map but also naturally characterizes the backward map. 
We denote by \( T^{\nu*}_\mu \) the forward OT map transporting \( \mu \) to \( \nu \), and by \( T^{\mu*}_\nu \) the backward map transporting \( \nu \) to \( \mu \).
\begin{proposition}[Bidirectional OT Map] 
There exists a viscosity solution $u^*$ to the HJ equation
\eqref{eq:hj_ot} that characterizes both the forward and backward OT maps through its forward and backward characteristic flows:
\vspace{-0.11cm}
\begin{align}
T^{\nu *}_{\mu}(\bx)&=\bx+t_f\nabla h\left(\nabla u^*\left(\bx,0\right)\right),\quad \bx\sim\mu, \label{eq:forward_HJOT}\\
T^{\mu *}_\nu\left(\by\right)&= \by-t_f\nabla h\left(\nabla u^*\left(\by,t_f\right)\right),\quad \by\sim\nu.
\label{eq:backward_HJOT}
\end{align}
\end{proposition}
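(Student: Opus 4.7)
The plan is to construct the desired viscosity solution $u^*$ by taking the Kantorovich dual potential $\varphi^*$ of Theorem~\ref{thm:ot_sol} as the initial datum $g$, propagating it in time via the Hopf--Lax representation
\begin{equation*}
u^*(\by,t) \;=\; \inf_{\bx\in\Omega}\bigl\{\varphi^*(\bx) + t\,\ell\bigl((\by-\bx)/t\bigr)\bigr\},
\end{equation*}
and then reading off the OT maps from the characteristic system \eqref{eq:characteristics}. Since $\ell$ is l.s.c., strictly convex and sub-differentiable, $h$ is $C^1$ with $\nabla h$ single-valued, and Fenchel biconjugation gives $\ell = h^*$; hence the classical Hopf--Lax theory identifies this $u^*$ as a viscosity solution of \eqref{eq:hj_ot} with $u^*(\cdot,0) = \varphi^*$.

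For the forward identity, at any $\bx$ where $\varphi^*$ is differentiable, I would set $\bp = \nabla u^*(\bx,0) = \nabla\varphi^*(\bx)$ and integrate \eqref{eq:characteristic_x}--\eqref{eq:characteristic_p}. The conservation law \eqref{eq:characteristic_p} forces $\bp$ to remain constant, giving the straight characteristic line $\bx(t) = \bx + t\,\nabla h(\bp)$. Evaluating at $t = t_f$ produces $T^{\nu*}_\mu(\bx) = \bx + t_f\,\nabla h(\nabla u^*(\bx,0))$, which matches \eqref{eq:primal_dual_relation} and is therefore the forward OT map by Theorem~\ref{thm:ot_sol}. The backward formula is an immediate consequence of the same conservation: for $\by = \bx + t_f\,\nabla h(\nabla\varphi^*(\bx))$, the identity $\nabla u^*(\by,t_f) = \nabla u^*(\bx,0)$ yields $\by - t_f\,\nabla h(\nabla u^*(\by,t_f)) = \bx$. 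Because strict convexity of $\ell$ makes $T^{\nu*}_\mu$ $\mu$-a.e.~injective, its inverse coincides $\nu$-a.e.~with $T^{\mu*}_\nu$, establishing \eqref{eq:backward_HJOT}.

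The main obstacle is the regularity of $u^*$ at the terminal time, since Hopf--Lax solutions can develop kinks where several characteristics collide, and there $\nabla u^*(\cdot,t_f)$ would be undefined. I would handle this by exploiting that $\varphi^*$ is $c$-concave and therefore semiconcave, so the set of points at which the Hopf--Lax infimum is attained non-uniquely has zero Lebesgue measure; because $T^{\nu*}_\mu$ pushes $\mu$ forward to $\nu$ and sends characteristic-starting points to their endpoints, the image of this exceptional set is also $\nu$-null. This a.e.~regularity, standard for Hopf--Lax solutions with semiconcave initial data, is precisely what is needed to make both characteristic formulas well defined in the almost-everywhere sense required for optimal transport.
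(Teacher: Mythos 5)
Your proposal is essentially a fleshed-out version of the paper's own reasoning: the paper never gives a formal proof of this proposition, relying only on the informal discussion preceding it (constancy of $\bp$ along characteristics, straight-line flows coinciding with \eqref{eq:primal_dual_relation} at $t=t_f$). What you add beyond the paper is genuine content: you exhibit the viscosity solution explicitly via the Hopf--Lax formula with the Kantorovich potential $\varphi^*$ as initial datum (so the existence claim is actually proved rather than asserted), and you recognize that the backward identity \eqref{eq:backward_HJOT} needs $\nabla u^*(\cdot,t_f)$ to exist $\nu$-a.e., which you address through semiconcavity of the Hopf--Lax solution. Two points deserve tightening. First, with $g=\varphi^*$ the characteristic endpoint is $\bx+t_f\nabla h(\nabla\varphi^*(\bx))$, which matches the OT map of Theorem~\ref{thm:ot_sol} only when $t_f=1$; for general $t_f$ the initial datum must be rescaled so that $t_f\nabla h(\nabla g)=\nabla h(\nabla\varphi^*)$ (e.g.\ $g=\varphi^*/t_f$ for the quadratic cost), exactly as the factor $t_f$ in \eqref{eq:forward_HJOT}--\eqref{eq:backward_HJOT} demands. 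Second, your measure-theoretic step is phrased incorrectly: showing the kink set of $u^*(\cdot,t_f)$ is $\nu$-negligible is not a matter of "the image of the exceptional set being $\nu$-null" --- pushing forward a $\mu$-null set says nothing about the $\nu$-measure of a given target-side set. The clean argument is that semiconcavity makes the non-differentiability set Lebesgue-null, and then $\nu$-nullity follows from absolute continuity of $\nu$, an assumption the proposition implicitly requires anyway for the backward OT map $T^{\mu*}_\nu$ to exist (the paper is equally silent on this). With those two repairs your construction gives a complete proof, and arguably a more rigorous one than the paper provides.
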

\vspace{-0.17cm}
Accordingly, the viscosity solution of the HJ equation enables a bidirectional characterization of the OT map via forward and backward characteristic flows.
Notably, since the characteristics are straight lines, both the forward and inverse transport maps admit explicit closed-form expressions. This obviates the need for numerical integration of ODEs typically required in conventional dynamical formulations. Consequently, the CODE-based formulation addresses a key computational bottleneck, enabling efficient and direct computation of bidirectional transport maps.

\paragraph{Implicit Solution Formula.}
Recently, a novel mathematical formulation for the viscosity solution of HJ equations has been developed using the system of CODEs \citep{park2025implicit}. Within this formulation, the viscosity solution admits the following implicit formula:
\begin{equation}\label{eq:implicit_formula}
     u\left(\bx,t\right) =-th\left(\nabla u\right) + t\nabla u\tp\nabla h\left(\nabla u\right) + g\left(\bx-t\nabla h\left(\nabla u\right)\right).
\end{equation}
\begin{proposition}\label{prop:viscosity_sol_implicitHJ}
    For OT problems \eqref{eq:monge} where $\ell$ satisfies the conditions in Theorem \ref{thm:ot_sol}, the implicit solution formula \eqref{eq:implicit_formula} characterizes the viscosity solution of the HJ equation \eqref{eq:hj_ot} almost everywhere.
\end{proposition}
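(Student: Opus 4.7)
The plan is to show that the implicit formula \eqref{eq:implicit_formula} is, at every point of differentiability of $u$, equivalent to the Hopf--Lax representation of the viscosity solution of \eqref{eq:hj_ot}. Under the hypotheses of Theorem~\ref{thm:ot_sol}, the Hamiltonian $h$ is the Legendre transform of the l.s.c.\ strictly convex $\ell$, so standard Crandall--Lions/Hopf--Lax theory gives the unique viscosity solution
\begin{equation*}
u(\bx,t)=\inf_{\by\in\bR^d}\bigl\{\,t\,\ell\bigl((\bx-\by)/t\bigr)+g(\by)\bigr\},
\end{equation*}
and this $u$ is locally Lipschitz and semiconcave on $\Omega\times(0,t_f)$, hence (by Rademacher and Alexandrov) differentiable a.e., which will ultimately explain the ``almost everywhere'' qualifier.

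First I would fix a point $(\bx,t)$ at which $u$ is differentiable and let $\bp=\nabla u(\bx,t)$. By the method of characteristics recalled in \eqref{eq:characteristics}, there is a (backward) characteristic line $s\mapsto \bx-(t-s)\nabla h(\bp)$ along which $\nabla u$ is constant and equal to $\bp$; in particular the foot $\by\coloneqq \bx-t\,\nabla h(\bp)$ realizes the infimum in the Hopf--Lax formula and satisfies $\bp=\nabla g(\by)$. This identification of the minimizer is the crux of the argument, and I would justify it either by differentiating the Hopf--Lax expression in $\bx$ (yielding $\nabla u(\bx,t)=\ell'((\bx-\by)/t)$ at the minimizer, and then inverting via $\nabla h=(\ell')^{-1}$), or by directly citing the well-known correspondence between characteristics of \eqref{eq:hj_ot} and Hopf--Lax minimizers for convex Hamiltonians.

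Next I would substitute $\by=\bx-t\nabla h(\bp)$ into the Hopf--Lax formula and convert $\ell$ to $h$ by Legendre duality. Since $\bp\in\partial\ell\bigl(\nabla h(\bp)\bigr)$, the Fenchel identity gives
\begin{equation*}
\ell\bigl(\nabla h(\bp)\bigr)=\bp\tp\nabla h(\bp)-h(\bp),
\end{equation*}
so
\begin{equation*}
u(\bx,t)=t\,\ell\bigl((\bx-\by)/t\bigr)+g(\by)=-t\,h(\bp)+t\,\bp\tp\nabla h(\bp)+g\bigl(\bx-t\nabla h(\bp)\bigr),
\end{equation*}
which is exactly \eqref{eq:implicit_formula}.

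Finally, I would close by invoking differentiability almost everywhere: semiconcavity of Hopf--Lax solutions for convex Hamiltonians ensures that the set of non-differentiability points of $u$ in $\Omega\times(0,t_f)$ has Lebesgue measure zero, and the argument above holds pointwise on the complement. The main obstacle is the regularity point rather than the algebra: at shock points where multiple characteristics collide, $\nabla u$ is not well defined and one must be content with the a.e.\ statement, which is why the proposition is phrased that way. A brief remark could also note that at differentiability points the Hopf--Lax minimizer $\by$ is unique, ruling out ambiguity in the substitution and showing that \eqref{eq:implicit_formula} is not merely a necessary condition but in fact selects the unique viscosity branch.
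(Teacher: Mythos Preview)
Your argument is correct but takes a genuinely different route from the paper. The paper's proof is essentially a two-line citation: it observes that the hypotheses on $\ell$ (l.s.c., sub-differentiable, strictly convex) transfer via Legendre duality to the Hamiltonian $h$, and then invokes the result of \citep{park2025implicit}, which establishes the implicit formula \eqref{eq:implicit_formula} for viscosity solutions of HJ equations with such convex Hamiltonians. No further detail is given.

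You instead reconstruct the mechanism from first principles via Hopf--Lax: at each point of differentiability you identify the Hopf--Lax minimizer $\by$ with the foot of the backward characteristic $\bx-t\nabla h(\bp)$, and then use the Fenchel--Young identity $\ell(\nabla h(\bp))=\bp\tp\nabla h(\bp)-h(\bp)$ to rewrite the Hopf--Lax value exactly as \eqref{eq:implicit_formula}. This is more self-contained and makes transparent \emph{why} the implicit formula holds---it is the Hopf--Lax formula with the minimizer eliminated through the first-order optimality condition. The price is that you must carry a few regularity caveats (Lipschitz $g$ for semiconcavity of $u$, uniqueness of the Hopf--Lax minimizer at differentiability points) which you handle by appeal to standard theory; the paper avoids these checks entirely by outsourcing them to the cited reference.
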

\vspace{-0.1cm}
\begin{proof}
    Detailed proof is provided in Appendix \ref{appen:pf_prop_viscosity_sol_implicitHJ}.
\end{proof}

\section{Methods}\vspace{-0.2em}
\subsection{OT with General Costs}\vspace{-0.3em}
We propose a novel deep learning method, termed \textit{neural characteristic flow (NCF)}, for learning bidirectional OT maps under general cost $\ell$ by solving the HJ equation \eqref{eq:hj_ot} vis its implicit solution formula \eqref{eq:implicit_formula}. 
The HJ equation characterizes the OT map as the gradient of the viscosity solution, ensuring that the resulting map minimizes the given cost functional. When coupled with the continuity equation, it also describes the evolution of probability distributions, thus guaranteeing correct mass transport from source to target.
However, jointly solving this coupled system of PDEs is computationally expensive. To address this, the proposed NCF computes the OT map solely through the HJ equation, avoiding the need to solve the continuity equation explicitly.

\paragraph{Implicit Neural Representation.}
We represent the solution $u$ of the HJ equation using an implicit neural representation (INR) $u_\theta:\bR^d\times\bR\rightarrow\bR$ parameterized by $\theta$. The network takes the spatial variable $\bx$ and temporal variable $t$ as input. By the universal approximation theorem \citep{hornik1989multilayer,leshno1993multilayer}, the INR can approximate the viscosity solution to the HJ equation.
We denote by $T_\mu^\nu\left[u_\theta\right]$ as the transport map that aims to map $\mu$ to $\nu$ defined by \eqref{eq:forward_HJOT} through $u_\theta$:
\begin{equation}
    T^{\nu}_\mu\left[u_\theta\right]\left(\bx\right) = \bx + t_f\nabla h\left(\nabla u_\theta\left(\bx,0\right)\right).
\end{equation}
The backward map $T_\nu^\mu\left[u_\theta\right]$ is analogously defined according to \eqref{eq:backward_HJOT} via $u_\theta$ evaluated at $t=t_f$.

\paragraph{HJ-based Training Loss.}
While the HJ equation does not directly encode distributional information, it can recover the desired OT map, provided that an appropriate initial function $g$ reflects the relationship between the source and target distributions. However, in practice, where only finite samples from these distributions are available, deriving an analytic form for $g$ is generally intractable. To address this challenge, we introduce a loss term to ensure that the initial condition is appropriately learned during training, thereby steering the HJ solution toward accurately solving the desired OT problem.
Specifically, this term enforces alignment between the generated samples obtained via $T\left[u_\theta\right]$ and the given target data. This alignment can be effectively quantified using discrepancy measures such as the maximum mean discrepancy (MMD) \citep{smola2006maximum}, whose value between two distributions $\mu$ and $\nu$ are defined as follows:
\begin{equation} 
  \mathrm{MMD}(\mu, \nu)^2 = \iint_{\Omega\times \Omega} k(\bx, \by)\, \rd(\mu(\bx) - \nu(\bx)) \, \rd(\mu(\by) - \nu(\by)),  \label{def: mmd}
\end{equation}
where $k(\cdot, \cdot):\Omega \times \Omega \rightarrow \mathbb{R}$ is a kernel function. The population loss for the MMD is
\begin{equation}\label{eq:loss_mmd}
\LMMD(u_\theta) = \mathrm{MMD}(T_\mu^\nu[ u_\theta]_\sharp\mu, \nu)^2.
\end{equation}

We adopt the negative distance kernel $k\left(\bx,\by\right)=-\left\Vert\bx-\by \right\Vert_2$, which has proved to handle high-dimensional problems efficiently \citep{hertrich2023generative}. With this kernel, the MMD loss becomes the squared energy distance \citep{rizzo2016energy}.

In our implementation of the implicit solution formula, we replace the initial function $g$ with $u_\theta$ evaluated at $t=0$, and train the model using the following $\varrho$-weighted loss function 
\begin{equation}\label{eq:implicit_loss}
\LHJ\left(u_\theta\right)\!=\!\iint_{\Omega\times [0, t_f]} \Bigl(u_\theta + th\left(\nabla u_\theta\right) - t\nabla u_\theta\tp\nabla h\left(\nabla u_\theta\right) - u_\theta\left(\bx-t\nabla h\left(\nabla u_\theta\right),0\right)\Bigr)^2 \rd\varrho(\bx) \,\rd t,
\end{equation}
\vspace{-0.1cm}
where $\varrho$ denotes a probability distribution on $\Omega$. 

The overall loss combines the implicit HJ loss and the MMD loss with a weight $\lambda > 0$:
\begin{equation}\label{eq:population_loss}
\LHJ\left(u_\theta\right) + \lambda  \LMMD(u_\theta).
\end{equation}
We refer to Appendix \ref{appen:implementation} for practical choices of $\varrho$ and the Monte Carlo estimation of the loss. 

\paragraph{Advantages of the Proposed Approach.}
Our method offers several key advantages over existing OT frameworks, as summarized in Table~\ref{tab:key_features_OTmodels}.
First, it jointly learns both forward and backward OT maps using a single neural network in one training phase. This contrasts with prior methods that require multiple networks, either due to the lack of invertibility or the use of adversarial dual formulations—leading to increased model complexity and training cost. Our method also avoids the instability of min-max optimization common in dual approaches, resulting in more stable training.
Second, unlike dynamical OT models that require solving ODEs or SDEs, we use the method of characteristics to obtain OT maps in closed form. This removes the need for iterative solvers and improves sampling efficiency at both training and inference time.
Third, our model directly incorporates the HJ equation via an implicit solution formula that reliably recovers the viscosity solution, as supported by the numerical results in Section~\ref{sec:experiments}. This not only aligns with the theoretical optimality conditions of OT but also helps identify and correct deviations from the target solution during training.
Finally, our framework supports a broad class of cost functions beyond the quadratic case, offering greater flexibility and wider applicability across OT tasks.

\subsection{Theoretical Analyses}\vspace{-0.3em}
In this section, we present theoretical analyses of our method, focusing on the OT problem with $\Omega=\mathbb{R}^d$ and the quadratic cost $\ell(\cdot)=\frac12\|\cdot\|^2$, for which the corresponding Hamiltonian is given by \(h(\cdot) = \frac{1}{2}\|\cdot\|^2\) as well.
We prove that the minimizer of the loss \eqref{eq:population_loss} exactly recovers the true OT maps. Moreover, in the Gaussian setting, we establish stability analysis by showing that a small loss guarantees convergence to the true solution.

\vspace{-0.1cm}
\paragraph{Consistency Analysis} With some mild convexity  assumption, we establish that the minimizer of \eqref{eq:population_loss} leads precisely to the optimal transport map.
\begin{theorem}[Consistency of loss]\label{thm:consistency}
Suppose the probability distributions $\mu, \nu$ have finite second moments and $\varrho\in\mathscr{P}(\mathbb{R}^d)$ is strictly positive. 
Assume $u\in C^1_{\text{loc}}(\mathbb{R}^d\times[0, t_f])$, and define $u_1(\cdot) := u(\cdot, t_f) \in C^2_{\text{loc}}(\mathbb{R}^d)$ with $\nabla u_1 \in L^2(\mathbb{R}^d, \mathbb{R}^d; \nu)$. 
  If $u$ minimizes the loss functional \eqref{eq:population_loss}, i.e., 
  \[\mathcal L_{\text{HJ}}(u) + \lambda \mathcal L_{\text{MMD}}(u) = 0,
  \]
  and the map $T_\nu^\mu[u]$ is bijective
  with its Jacobian $D_x T_\nu^\mu[u](x)$ is positive definite
  for any $x\in\mathbb{R}^d$, then $T_\nu^\mu[u]$ and $T_\mu^\nu[u]$ are the optimal transport maps from $\nu$ to $\mu$, and vice versa.
\end{theorem}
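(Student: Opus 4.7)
The strategy is to exploit the two pieces of the loss separately and then combine them with the positive-definite Jacobian hypothesis to identify the learned maps as gradients of convex functions, finally invoking Brenier's theorem.

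First, from $\LMMD(u)=0$: since the negative-distance kernel is characteristic on probability measures with finite first moment (its MMD coincides with the energy distance, a genuine metric), the push-forward equality $T_\mu^\nu[u]_\sharp\mu=\nu$ holds. From $\LHJ(u)=0$: the integrand is continuous under $u\in C^1_{\text{loc}}$ and $\varrho$ is strictly positive on $\bR^d$, so, after specializing $h(\bz)=\tfrac12\|\bz\|^2$ and $\nabla h(\bz)=\bz$, the pointwise implicit identity
$$u(\bx,t)-\tfrac{t}{2}\|\nabla u(\bx,t)\|^2=u(\bx-t\nabla u(\bx,t),0)$$
must hold for all $(\bx,t)\in\bR^d\times[0,t_f]$.

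Next I would differentiate this identity in $\bx$ at $t=t_f$. The hypothesis $u(\cdot,t_f)\in C^2_{\text{loc}}$ combined with $u\in C^1_{\text{loc}}$ justifies the chain rule on the right-hand side, and a short computation collapses the equation to
$$\bigl(I-t_f D^2u(\bx,t_f)\bigr)\nabla u(\bx,t_f)=\bigl(I-t_f D^2 u(\bx,t_f)\bigr)\nabla u(T_\nu^\mu[u](\bx),0).$$
The prefactor is precisely $D T_\nu^\mu[u](\bx)$, assumed positive-definite and hence invertible, so one obtains the key identity $\nabla u(\bx,t_f)=\nabla u(T_\nu^\mu[u](\bx),0)$. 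Plugging this into the definition of $T_\mu^\nu[u]$ gives $T_\mu^\nu[u]\circ T_\nu^\mu[u]=\mathrm{id}$; combined with the assumed bijectivity of $T_\nu^\mu[u]$, the two maps are mutual inverses, and pushing $\mu$ through $T_\mu^\nu[u]_\sharp\mu=\nu$ then yields $T_\nu^\mu[u]_\sharp\nu=\mu$.

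Finally, observe that $T_\nu^\mu[u](\by)=\nabla\Psi(\by)$ for $\Psi(\by):=\tfrac12\|\by\|^2-t_f u(\by,t_f)$, whose Hessian $D^2\Psi=D T_\nu^\mu[u]$ is positive-definite by hypothesis; hence $\Psi$ is strictly convex. Under the finite second-moment assumptions, the converse direction of Brenier's theorem---a map that is both a gradient of a convex function and a valid push-forward is the OT map for quadratic cost---identifies $T_\nu^\mu[u]$ as the OT map from $\nu$ to $\mu$. As its inverse, $T_\mu^\nu[u]=\nabla\Psi^*$ is likewise a gradient of a convex function (the Legendre transform of $\Psi$) that pushes $\mu$ to $\nu$, so a second application of Brenier yields that $T_\mu^\nu[u]$ is the OT map from $\mu$ to $\nu$. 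The main obstacle I anticipate is the careful bookkeeping of the differentiation step---verifying that the chain rule applies to $u(\bx-t_f\nabla u(\bx,t_f),0)$ with only the stated regularity---and making sure to invoke the converse Brenier characterization, which does not require absolute continuity of $\mu$ because cyclical monotonicity of the graph of $\nabla\Psi$ directly implies optimality of the induced transport plan.
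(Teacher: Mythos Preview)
Your proposal is correct and follows essentially the same path as the paper's proof: derive the pointwise implicit identity from $\LHJ=0$ using strict positivity of $\varrho$ and continuity, differentiate it at $t=t_f$ to obtain $\nabla u(\bx,t_f)=\nabla u(T_\nu^\mu[u](\bx),0)$, deduce that $T_\mu^\nu[u]$ and $T_\nu^\mu[u]$ are mutual inverses, and conclude via the Brenier-type sufficient condition (gradient of a convex function pushing forward correctly). Two small remarks: the paper inserts an explicit check that $\int\|T_\mu^\nu[u]\|^2\,\rd\mu<\infty$ and $\int\|T_\nu^\mu[u]\|^2\,\rd\nu<\infty$---needed to invoke the Brenier criterion, and obtained by combining the finite second moments with $\nabla u_1\in L^2(\nu)$ and the change of variables $T_\nu^\mu[u]_\sharp\nu=\mu$---which you should include; and for the forward map the paper argues via a short lemma that strict monotonicity passes to the inverse, while your Legendre-transform route ($T_\mu^\nu[u]=\nabla\Psi^*$) reaches the same conclusion a bit more directly.
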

The proof is provided in Appendix~\ref{appen:thm_consistency}. See also Remark~\ref{remark:monotonicity_cond} for further discussion on the monotonicity condition for $D_x T_\nu^\mu[u]$.
\begin{remark}[On regularity assumption of $u$]
It is worth noting that the transport curves associated with the Wasserstein-$2$ OT problem do not intersect for $t \in [0, t_f]$ (cf. Chap. 8 of \citep{villani2008optimal}). Since these curves constitute the characteristics of the HJ equation associated with the OT problem, we can expect classical solutions to the HJ equation, provided that $\mu$ and $\nu$ admit sufficiently regular density functions. This observation motivates the regularity assumption on $u$ in Theorem \ref{thm:consistency}. Moreover, $u$ is parametrized with neural networks in practice, which naturally preserve the regularity.
\end{remark}

\vspace{-0.1cm}
\paragraph{Stability Analysis} The loss \eqref{eq:population_loss} also exhibits favorable stability properties, which we illustrate in the Gaussian setting. 
Let $\mu=N(\bb_\mu,\Sigma_\mu)$, $\nu=N(\bb_\nu,\Sigma_\nu)$, then the OT map is
\begin{equation}\label{eq:OT_Gaussian}
T^{\nu*}_{\mu}(\bx) = A(\bx-\bb_\mu) + \bb_\nu,
\end{equation}
where $A := \Sigma_\mu^{-\frac12} (\Sigma_\mu^{\frac12} \Sigma_\nu \Sigma_\mu^{\frac12})^{\frac12} \Sigma_\mu^{-\frac12}$. For analytical tractability, we consider a simplified quadratic parameterization
$u_\theta(\bx,t) = -(\frac12 \bx\tp \theta_2(t) \bx + \theta_1(t)\tp \bx + \theta_0(t))$, where $\theta = [\theta_2(\cdot), \theta_1(\cdot), \theta_0(\cdot)]:[0,t_f] \to \R^{d\times d}_{\text{sym}} \times \R^d \times \R$. Although this represents a restricted subclass of neural networks, it permits rigorous analysis and yields insights relevant to more general architectures.
\begin{assumption}\label{assump:bound}
$\theta(t)$ is bounded by $K$ and $K$-Lipschitz. $\norm[]{\bb_\mu},\norm[]{\bb_\nu},\norm[F]{\Sigma_\mu}, \norm[F]{\Sigma_\nu} \le K$. $A$ is strictly positive definite with smallest eigenvalue $\lam_A > 0$.
\end{assumption}
\begin{theorem}[Stability of loss]\label{thm:stability}
Under Assumption \ref{assump:bound}, the errors for $u_\theta$ and $T_\mu^\nu[u_\theta]$ satisfy
\begin{equation}\label{eq:main_result}
\norm[L^{\infty}({[-1,1]^d})]{u_\theta-u^*} + \norm[L^\infty({[-1,1]^d})]{T^\nu_\mu[u_\theta]-T^{\nu*}_\mu} \le C \parentheses{ \LHJ^{\frac13} + \LMMD^{\frac14}},
\end{equation}
where $u^*$ and $T^{\nu*}_\mu$ are the true solution and OT map. $C$ only depends on $d$, $K$ and $\lam_A$.
\end{theorem}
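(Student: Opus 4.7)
The plan is to exploit the quadratic structure of $u_\theta$ and $u^*$ to reduce everything to tracking the time-dependent coefficients $\theta(t) = (\theta_2(t), \theta_1(t), \theta_0(t))$. First I identify the exact solution: plugging the quadratic ansatz into the HJ equation $\partial_t u + \tfrac{1}{2}\|\nabla u\|^2 = 0$ yields the Riccati-type system $\dot\theta_2^* = (\theta_2^*)^2$, $\dot\theta_1^* = \theta_2^*\theta_1^*$, $\dot\theta_0^* = \tfrac{1}{2}\|\theta_1^*\|^2$, which admits the closed form $\theta_2^*(t) = \theta_2^*(0)(I - t\theta_2^*(0))^{-1}$ and analogous formulas for $\theta_1^*, \theta_0^*$; the initial data $\theta_2^*(0) = (I-A)/t_f$ and $\theta_1^*(0) = (A\bb_\mu - \bb_\nu)/t_f$ are read off from \eqref{eq:OT_Gaussian}. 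I then analyze the two loss terms separately and combine via an algebraic inversion.

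\paragraph{HJ loss to residual coefficients.} Plugging the quadratic ansatz into the integrand of \eqref{eq:implicit_loss} gives a polynomial residual $r(\bx,t) = \tfrac{1}{2}\bx\tp R_2(t)\bx + R_1(t)\tp\bx + R_0(t)$, whose coefficient $R_i(t)$ are explicit polynomial expressions in $\theta(t)$ and $\theta(0)$ whose simultaneous zeros recover the Riccati formulas above. Since $\varrho$ is a probability measure with finite moments (up to degree four suffices) and $\theta$ is bounded by $K$, $\LHJ$ lower-bounds $\int_0^{t_f}(\|R_2\|_F^2 + \|R_1\|^2 + R_0^2)\,dt$ with a constant depending only on $\varrho$ and $K$. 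Because $\theta$ is $K$-Lipschitz in $t$ by Assumption \ref{assump:bound}, so is each $R_i$, with Lipschitz constant depending only on $K$. The elementary interpolation $\|g\|_\infty \le C(LM)^{1/3}$ for any $L$-Lipschitz function $g$ on $[0, t_f]$ with $\int g^2 \le M$ (proved by integrating $(|g(t_*)|-L|t-t_*|)^2$ over a small neighborhood of the maximizer) then upgrades this to $\|R_i\|_{L^\infty([0, t_f])} \le C\,\LHJ^{1/3}$, producing the exponent $1/3$.

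\paragraph{MMD loss to initial coefficients.} The forward map is affine: $T^\nu_\mu[u_\theta](\bx) = B\bx - t_f\theta_1(0)$ with $B := I - t_f\theta_2(0)$, so the pushforward is the Gaussian $N\bigl(B\bb_\mu - t_f\theta_1(0),\, B\Sigma_\mu B\bigr)$, which I match against $\nu = N(\bb_\nu, \Sigma_\nu)$. Using the closed-form energy distance between two Gaussians and a local expansion about the true pair $(B, \bb) = (A, \bb_\nu)$ (for which $A\Sigma_\mu A = \Sigma_\nu$), together with the bounds in Assumption \ref{assump:bound}, one obtains a quantitative lower bound on $\LMMD$ in terms of $\|B\Sigma_\mu B - \Sigma_\nu\|_F$ and the mean mismatch. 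Lipschitz invertibility of the Lyapunov-type map $B \mapsto B\Sigma_\mu B$ near the positive-definite $A$ (which holds thanks to $\lambda_A > 0$) then converts these into $\|\theta_2(0) - \theta_2^*(0)\| + \|\theta_1(0) - \theta_1^*(0)\| \le C\,\LMMD^{1/4}$; the $1/4$ exponent reflects the composition of the MMD–to–parameter bound with the nonlinear Lyapunov inversion.

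\paragraph{Combining and conclusion.} Setting $R_2(t) = 0$ recovers $\theta_2(t) = \theta_2(0)(I - t\theta_2(0))^{-1}$, so the residual bound from the HJ step forces the learned $\theta_2(t)$ to lie within $C\,\LHJ^{1/3}$ of this algebraic formula evaluated at the learned $\theta_2(0)$. Invertibility of $I - t\theta_2(0)$ for every $t \in [0, t_f]$ follows from Assumption \ref{assump:bound} together with the proximity $\theta_2(0) \approx \theta_2^*(0)$ from the MMD step (as $I - t\theta_2^*(0)$ is invertible by $\lambda_A > 0$). Lipschitz dependence of $X \mapsto X(I - tX)^{-1}$ at $X = \theta_2^*(0)$ then propagates the MMD bound through time, yielding $\|\theta_i - \theta_i^*\|_{L^\infty([0, t_f])} \le C(\LHJ^{1/3} + \LMMD^{1/4})$ for $i \in \{0, 1, 2\}$, with $C$ depending only on $d, K, \lambda_A$. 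Since on $[-1,1]^d$ both $u_\theta - u^*$ and $T^\nu_\mu[u_\theta] - T^{\nu *}_\mu$ are polynomials (quadratic and affine, respectively) with coefficient errors of this size, their $L^\infty$ norms obey the same bound, yielding \eqref{eq:main_result}. The main obstacle is the MMD step: sharply lower-bounding the non-closed-form energy distance between two Gaussians in terms of their parameters requires a careful second-order expansion and is responsible for the $1/4$ exponent; a secondary difficulty is guaranteeing uniform invertibility of $I - t\theta_2(0)$ over the whole interval $[0, t_f]$, where the strict positivity $\lambda_A > 0$ is essential.
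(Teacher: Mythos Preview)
Your overall architecture matches the paper's: expand the HJ residual as a quadratic polynomial in $\bx$, bound its coefficients in $L^\infty_t$ via the Lipschitz interpolation inequality (yielding the $1/3$), control the initial coefficients via MMD (yielding the $1/4$), and propagate through the Riccati formula. The HJ step and the final combination are essentially as in the paper.

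The genuine gap is in your MMD step. From $\LMMD$ small you correctly deduce that $B\Sigma_\mu B \approx \Sigma_\nu$ with $B = I - t_f\theta_2(0)$ symmetric. But the equation $B\Sigma_\mu B = \Sigma_\nu$ has $2^d$ symmetric solutions (in the eigenbasis of $A$, each eigenvalue of $B$ can be $\pm$ the corresponding eigenvalue of $A$), and the MMD loss cannot distinguish them: any of these $B$'s, paired with the right affine shift, pushes $\mu$ exactly to $\nu$ and drives $\LMMD$ to zero. Consequently the MMD bound gives you only $\bigl\| |I - t_f\theta_2(0)| - A \bigr\| \le C\,\LMMD^{1/4}$, where $|\cdot|$ is the spectral absolute value, \emph{not} $\|\theta_2(0) - \theta_2^*(0)\| \le C\,\LMMD^{1/4}$. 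Your appeal to ``Lipschitz invertibility of the Lyapunov-type map $B \mapsto B\Sigma_\mu B$ near the positive-definite $A$'' presupposes that $B$ lies in that neighborhood, which is exactly what has not been shown; nothing in Assumption~\ref{assump:bound} forces $I - t_f\theta_2(0)$ to be positive definite.

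Resolving this sign ambiguity is the heart of the paper's proof (its Step~3), and it must come from the HJ loss, not from MMD. The paper diagonalizes $\theta_2(t)$, writes the diagonal entries of the HJ residual as $(1 + t\lambda_i(t))\bigl[(1 - t\tau_i(t))\lambda_i(t) - \tau_i(t)\bigr]$ with $\tau_i(t)$ the diagonal of $Q(t)^\top \theta_2(0) Q(t)$, and argues by contradiction: if some eigenvalue satisfies $\lambda_i(0) > 1$ (wrong sign), then $1 - t\tau_j(t)$ vanishes for some $t^* \in (0,1)$, forcing $\lambda_j(t)$ to switch between the two branches $-1/t$ and $\tau_j(t)/(1 - t\tau_j(t))$; but these branches are uniformly separated by a gap depending only on $K$ and $\lambda_A$, so at the switching point the product above is bounded below by a positive constant, contradicting the $\LHJ^{1/3}$ smallness. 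Only after this step does one obtain $\|\theta_2(0) - (I-A)\|\le C\,\LMMD^{1/4}$ and the uniform invertibility of $I - t\theta_2(0)$ on $[0,t_f]$ that your ``Combining'' paragraph relies on. Your proposal identifies the invertibility issue as ``secondary,'' but in fact it is the principal obstruction and requires substantially more than what you have outlined.
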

The theorem implies that sufficiently small loss guarantees convergence of the approximate solution $u_\theta$—and consequently the resulting transport map $T_\mu^\nu[u_\theta]$—to their true counterparts. Furthermore, the proof shows that while multiple transport maps may minimize the MMD loss, the implicit HJ loss ensures that the OT map is uniquely recovered. The detailed description and proof for the theorem are deferred to Appendix \ref{sec:proof_stability}.

\vspace{-0.1cm}

\subsection{Class-Conditional OT}\vspace{-0.3em}
We extend our HJ-based framework to class-conditional OT, transporting source to target independently within each of the $K$ labeled classes so as to preserve label consistency and class-specific structure.
This formulation is particularly well-suited for domain adaptation and class-conditional generative modeling, where preserving class-specific features is crucial.

The OT map between samples of the $k$-th class must satisfy the HJ equation within the support of the corresponding class-specific distribution, as dictated by the optimality condition. Consequently, the global transport map $T^{\nu *}_{\mu}$ satisfies the HJ equation \eqref{eq:hj_ot} across the entire domain. Although non-differentiable regions may arise due to intersections between transport maps of different classes, such discontinuities occur primarily in the boundaries between class supports. Since the gradient of the HJ solution is computed only within the support of each class-specific distribution, the transport map remains expressible in these regions.
Accordingly, we retain the implicit HJ loss function \eqref{eq:implicit_loss} and modify the MMD loss to account for class conditioning as follows:
\vspace{-0.09cm}
\begin{equation}
\mathcal{E}_{\text{class}}(\left(T_\mu^\nu\left[u_\theta\right]\right)_\sharp\mu, \nu) = \frac{1}{K}\sum_{k=1}^K \mathcal{E}(\left(T_\mu^\nu\left[u_\theta\right]\right)_\sharp\mu_k, \nu_k).    
\end{equation}
\vspace{-0.05cm}
A similar approach was proposed by \cite{asadulaev2022neural}.

\section{Experimental Results}\label{sec:experiments}\vspace{-0.3em}
We evaluate the effectiveness of the proposed \textit{neural characteristic flow (NCF)} across diverse OT tasks.
All experiments in this section employ the quadratic cost function $\ell=\frac{1}{2}\left\Vert\cdot\right\Vert_2^2$, which is the canonical cost associated with the Wasserstein-2 distance.
Computations were performed on a single NVIDIA GV100 (TITAN V) GPU.
Further implementation details are provided in Appendix~\ref{appen:implementation}.

\vspace{-0.1cm}

\subsection{Unconditional OT}\vspace{-0.2em}
\subsubsection{2D Toy Examples}\label{sec:2D_unconditional}
\vspace{-0.5em}
We test the proposed NCF on a 2D toy dataset. We also compare our model with the neural optimal transport (NOT) framework \citep{korotin2022neural}, including both the strong (deterministic) and weak (stochastic) variants. Since NOT directly parameterizes the transport map, it requires separate training for each transport direction. Additionally, we include an ablation study replacing our implicit solution formula loss \eqref{eq:implicit_loss} with a PINN loss on the HJ equation, referred to as HJ-PINN.

Figure \ref{fig:2d_spiral_doublemoons}  shows bidirectional transport results on 2D distributions. 
In addition to visualizing the transported distributions, we overlay the learned transport maps as black solid lines to assess whether each model has captured an OT plan.
For weak NOT, the map is the average over noise inputs, as in the original work.
Compared to all baselines, our method captures source and target distributions more accurately and learns transport maps closely aligned with the optimal solution. Strong NOT produces noisy, incoherent transport. Weak NOT performs better but still shows overlapping trajectories, indicating an incomplete OT representation. HJ-PINN yields noisy, intersecting transport paths, suggesting failure to learn OT dynamics. In contrast, our model learns accurate OT maps without trajectory crossings. Moreover, unlike NOT, which requires four separate networks for bidirectional training, our method achieves more accurate bidirectional transport with a single network. These results highlight the superior accuracy and efficiency of our approach. 
For further experimental results on the 2D example, please refer to Appendix~\ref{appen:2dtoy}.

\begin{figure}
    \centering
    \begin{tikzpicture}[every node/.style={font=\small}]
        \node[anchor=north west, inner sep=0] (source) 
        at (0, 0) {\includegraphics[width=0.17\linewidth]{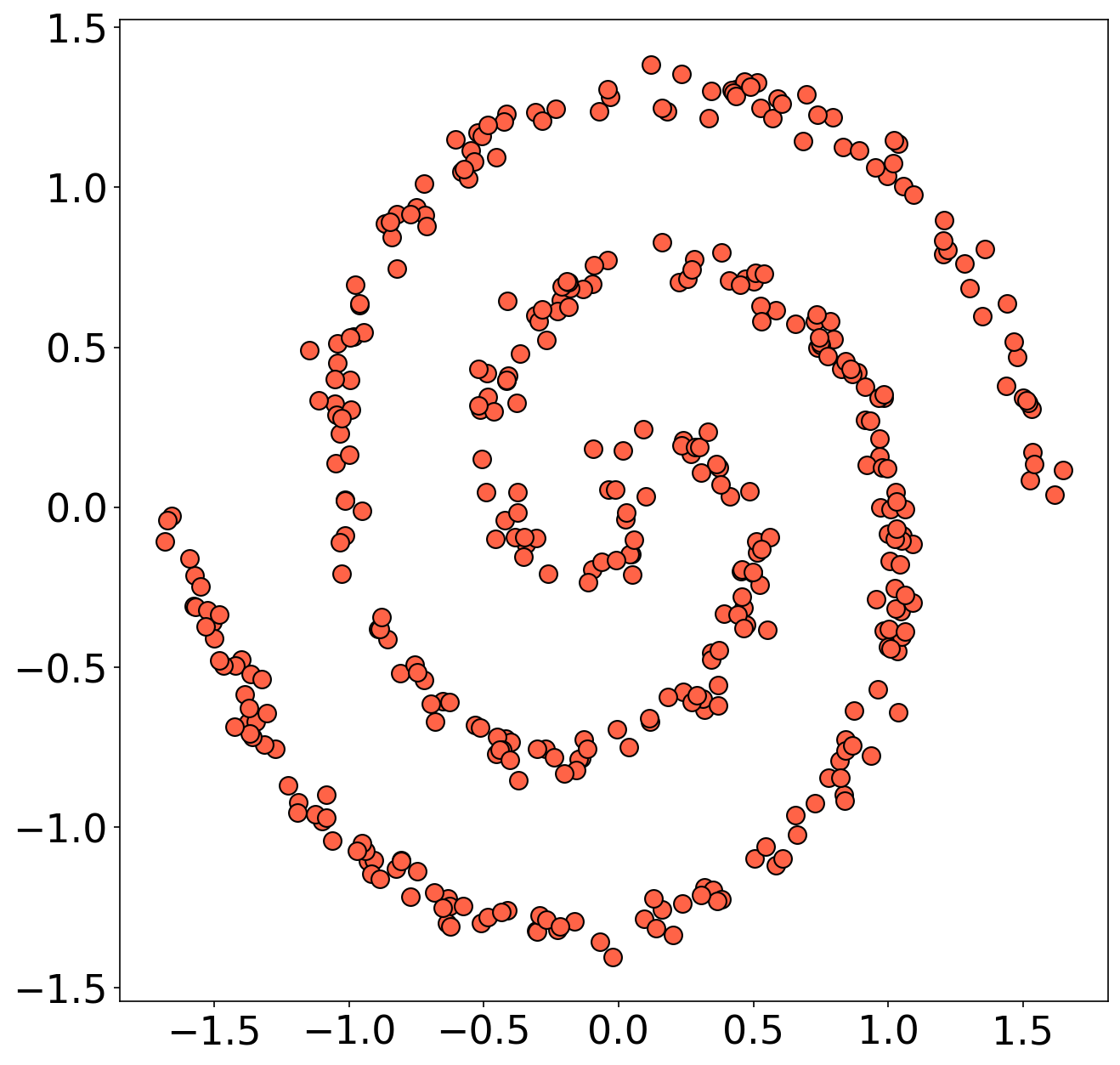}};
        \node[anchor=north west, inner sep=0] (target) 
        at (0, -2.8) {\includegraphics[width=0.17\linewidth]{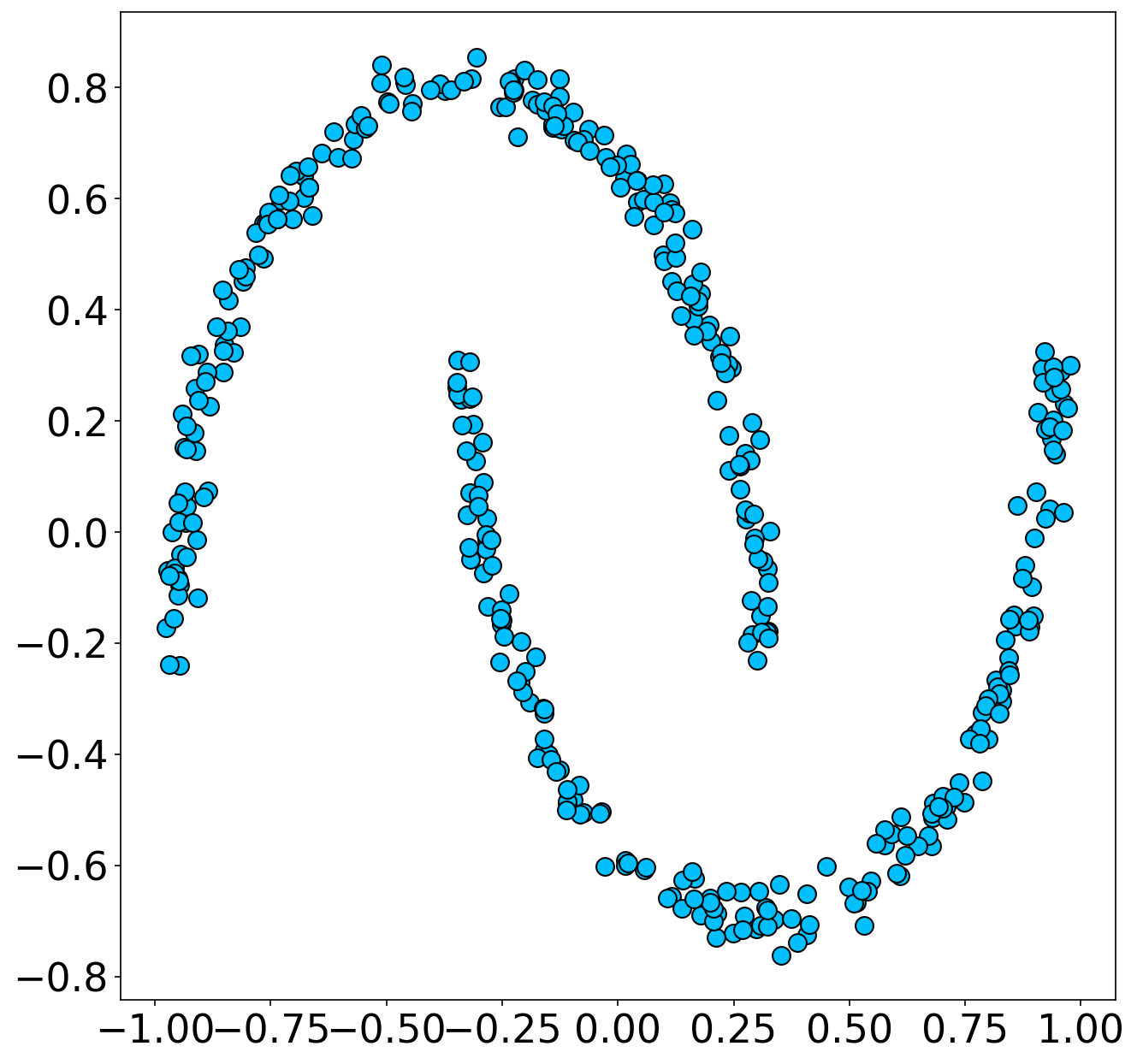}};
        
        \node[anchor=north west, inner sep=0] (not1) 
        at (0.20\textwidth, 0) {\includegraphics[width=0.17\linewidth]{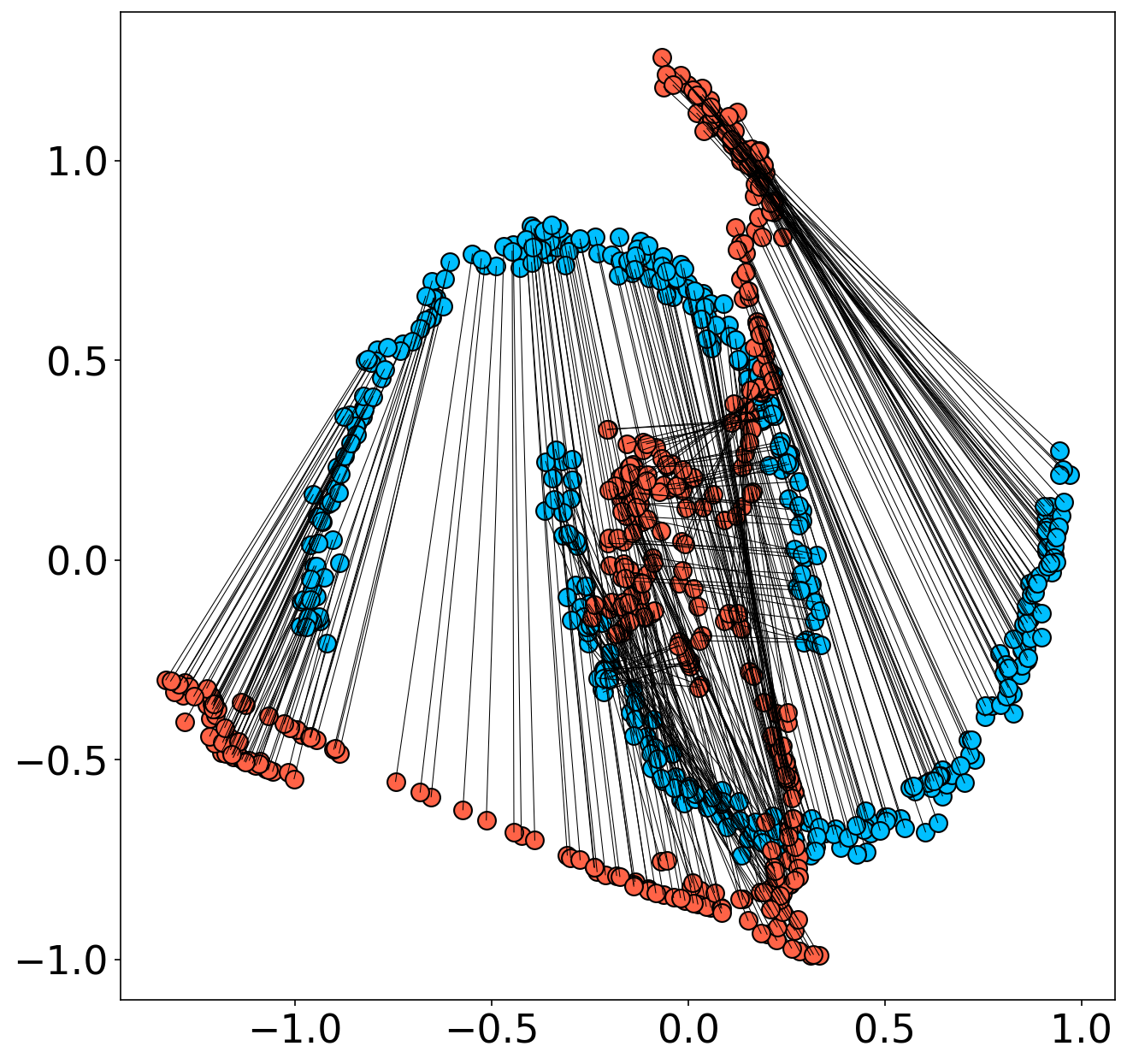}};
        \node[anchor=north west, inner sep=0] (notweak1) 
        at (0.40\textwidth, 0) {\includegraphics[width=0.17\linewidth]{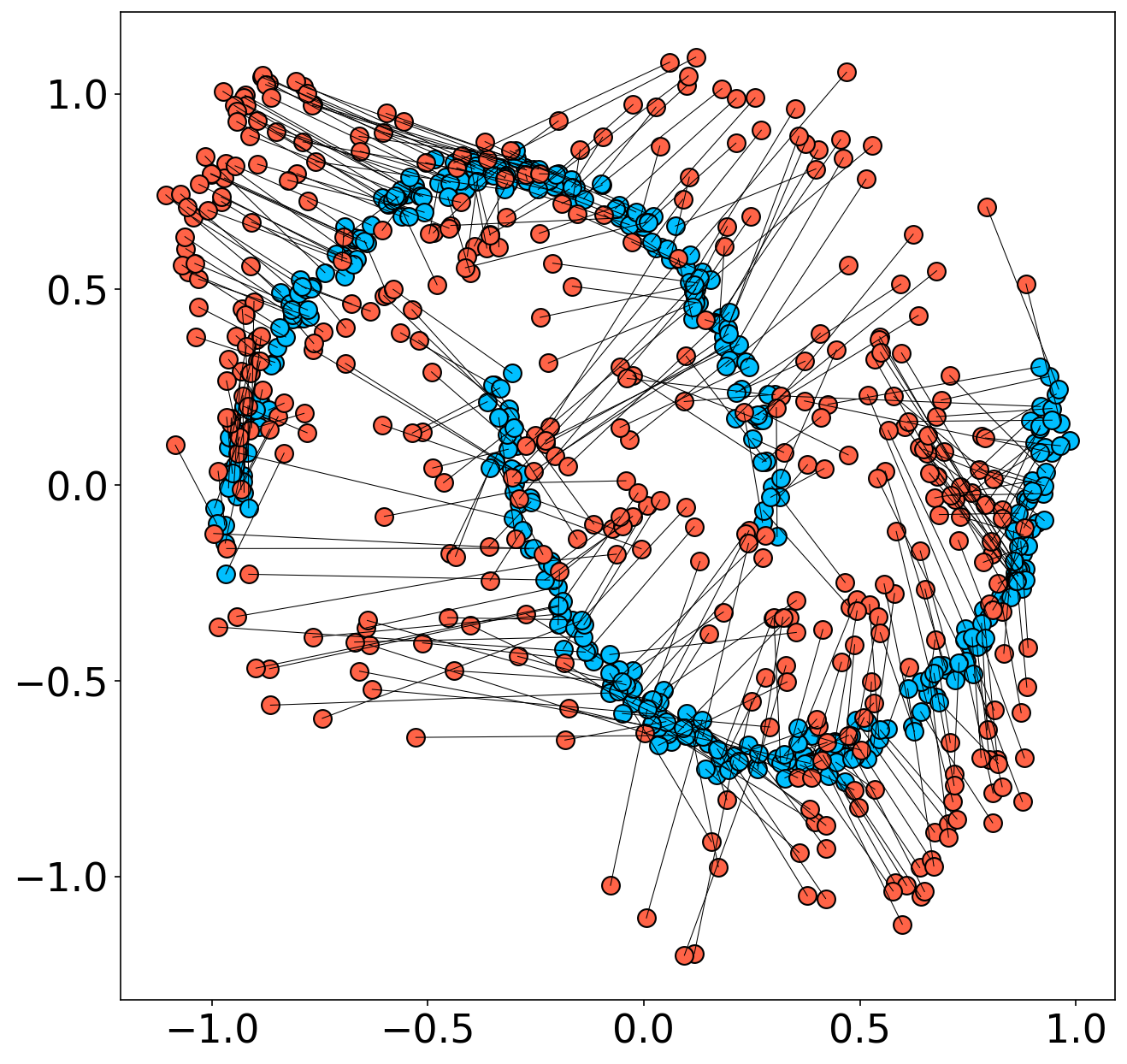}};
        \node[anchor=north west, inner sep=0] (pinn1) 
        at (0.60\textwidth, 0) {\includegraphics[width=0.17\linewidth]{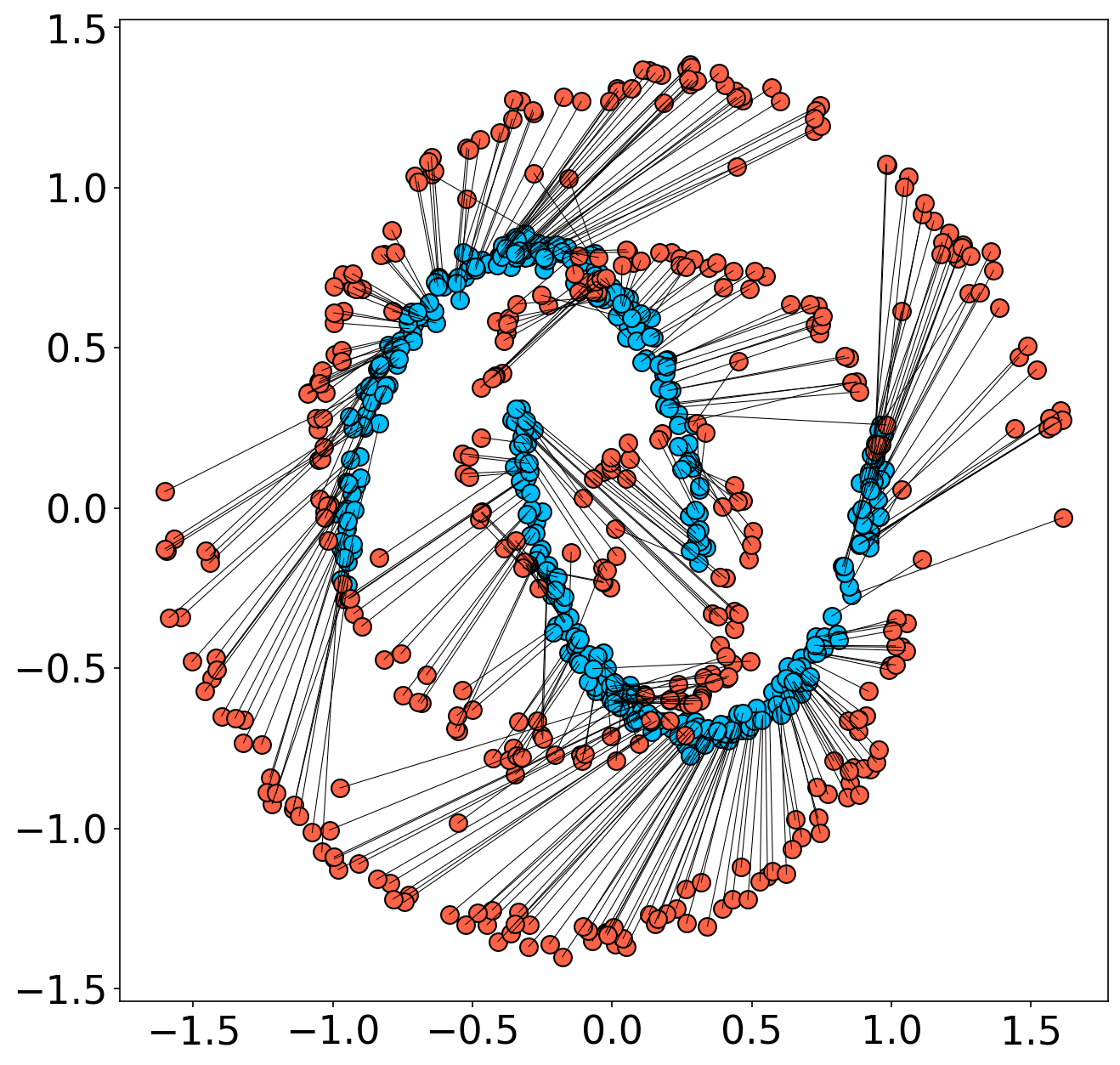}};
        \node[anchor=north west, inner sep=0] (ours1) 
        at (0.80\textwidth, 0) {\includegraphics[width=0.17\linewidth]{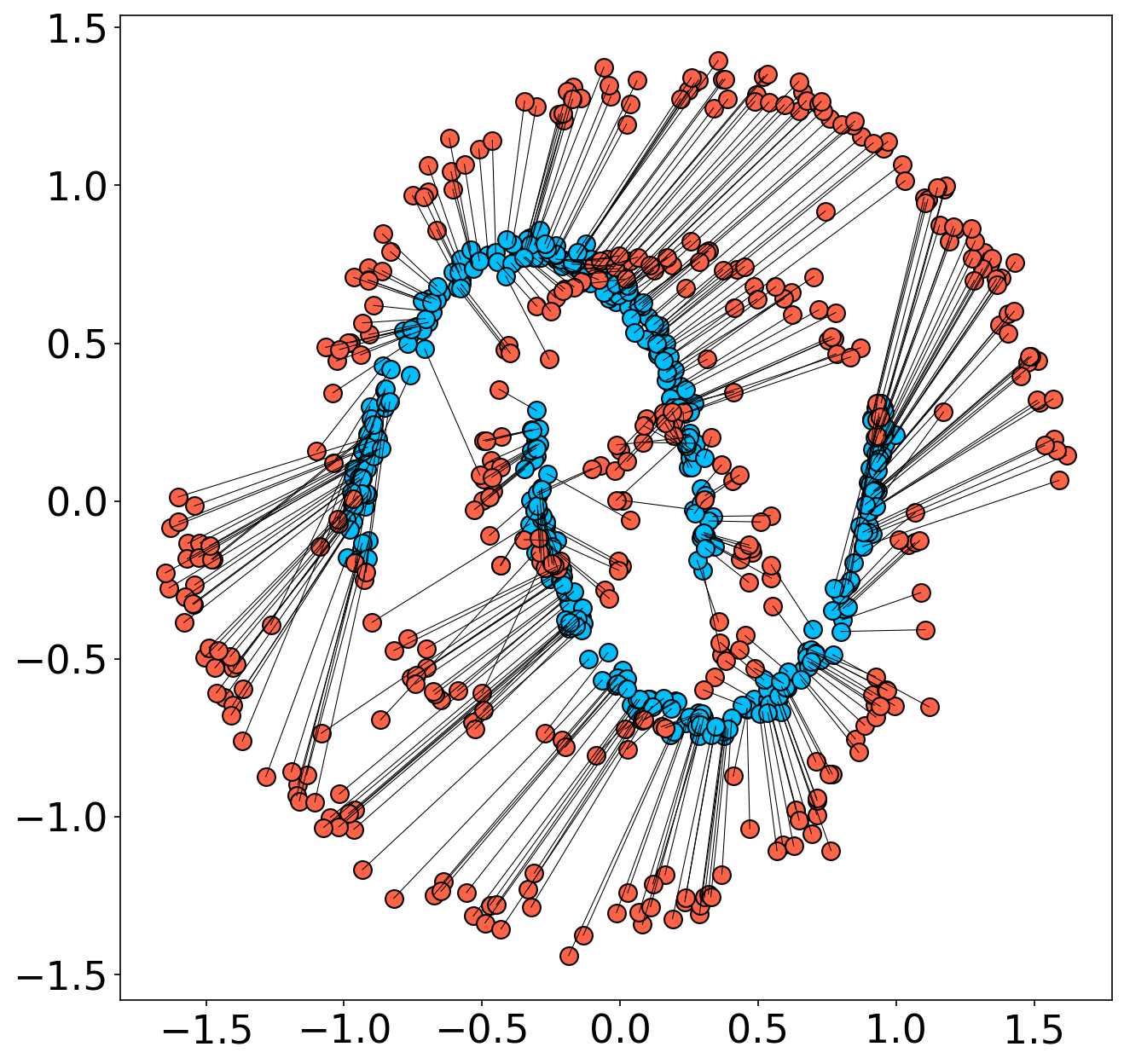}};
        
        \node[anchor=north west, inner sep=0] (not2) 
        at (0.20\textwidth, -2.8) {\includegraphics[width=0.17\linewidth]{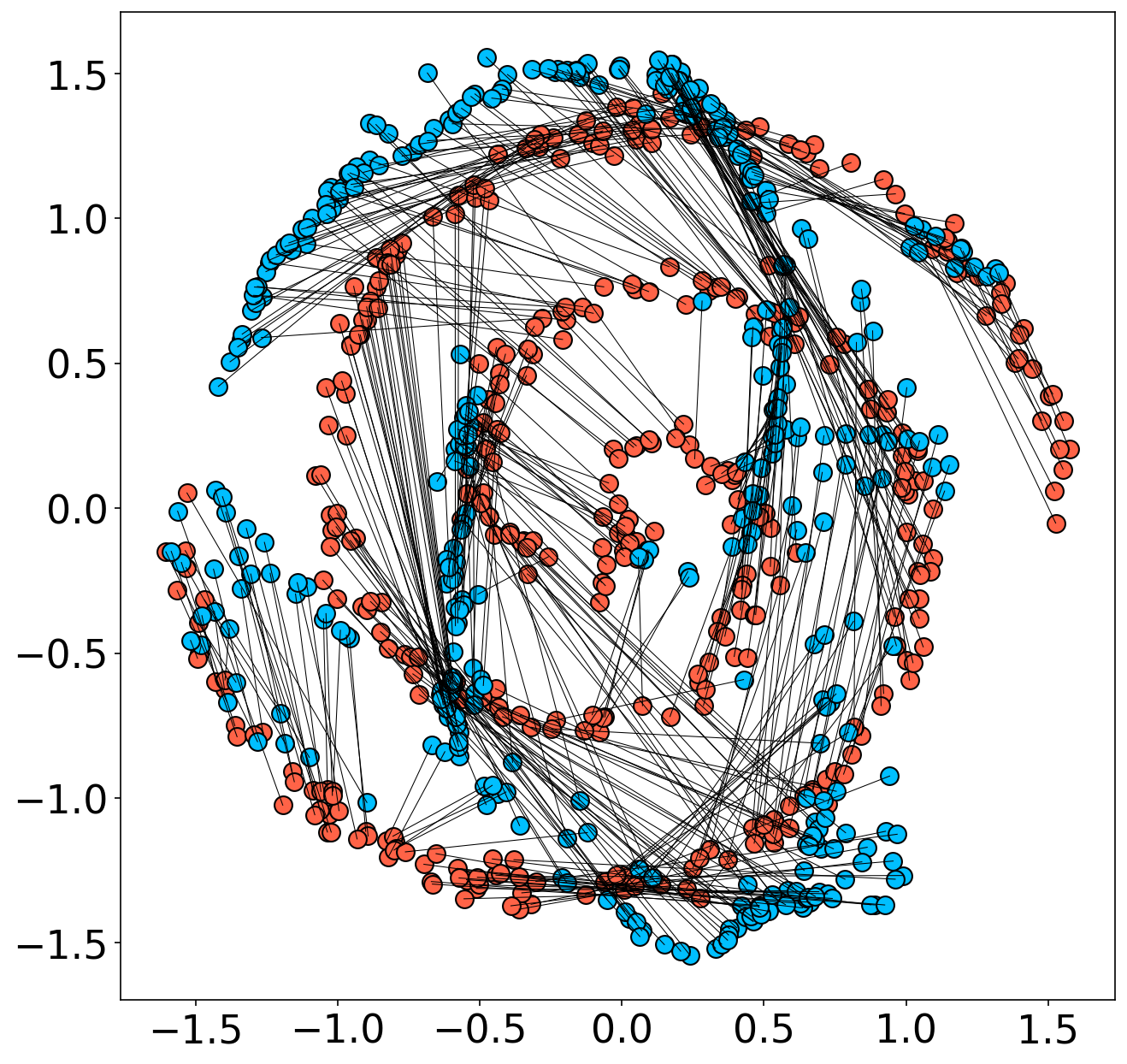}};
        \node[anchor=north west, inner sep=0] (notweak2) 
        at (0.40\textwidth, -2.8) {\includegraphics[width=0.17\linewidth]{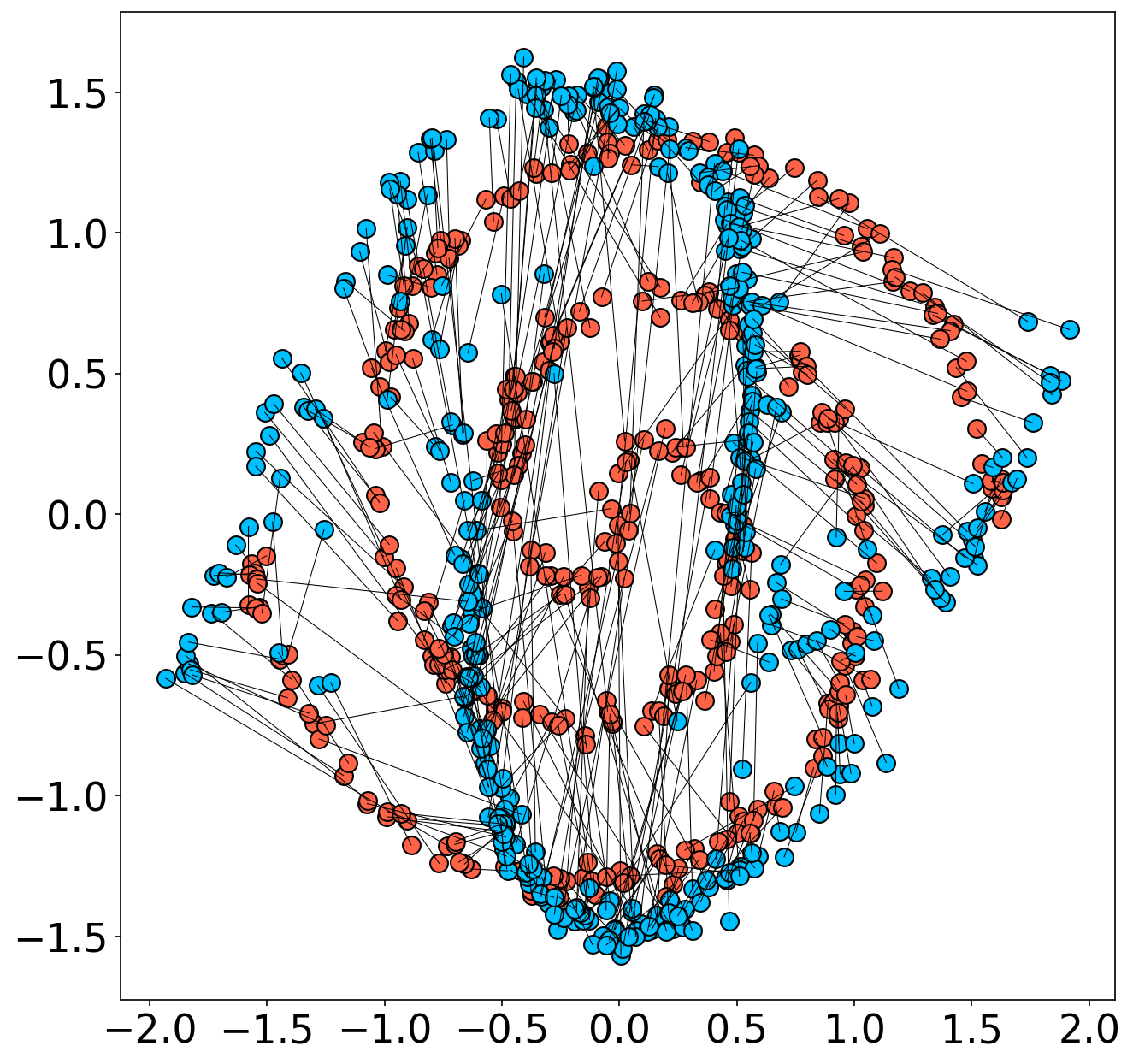}};
        \node[anchor=north west, inner sep=0] (pinn2) 
        at (0.60\textwidth, -2.8) {\includegraphics[width=0.17\linewidth]{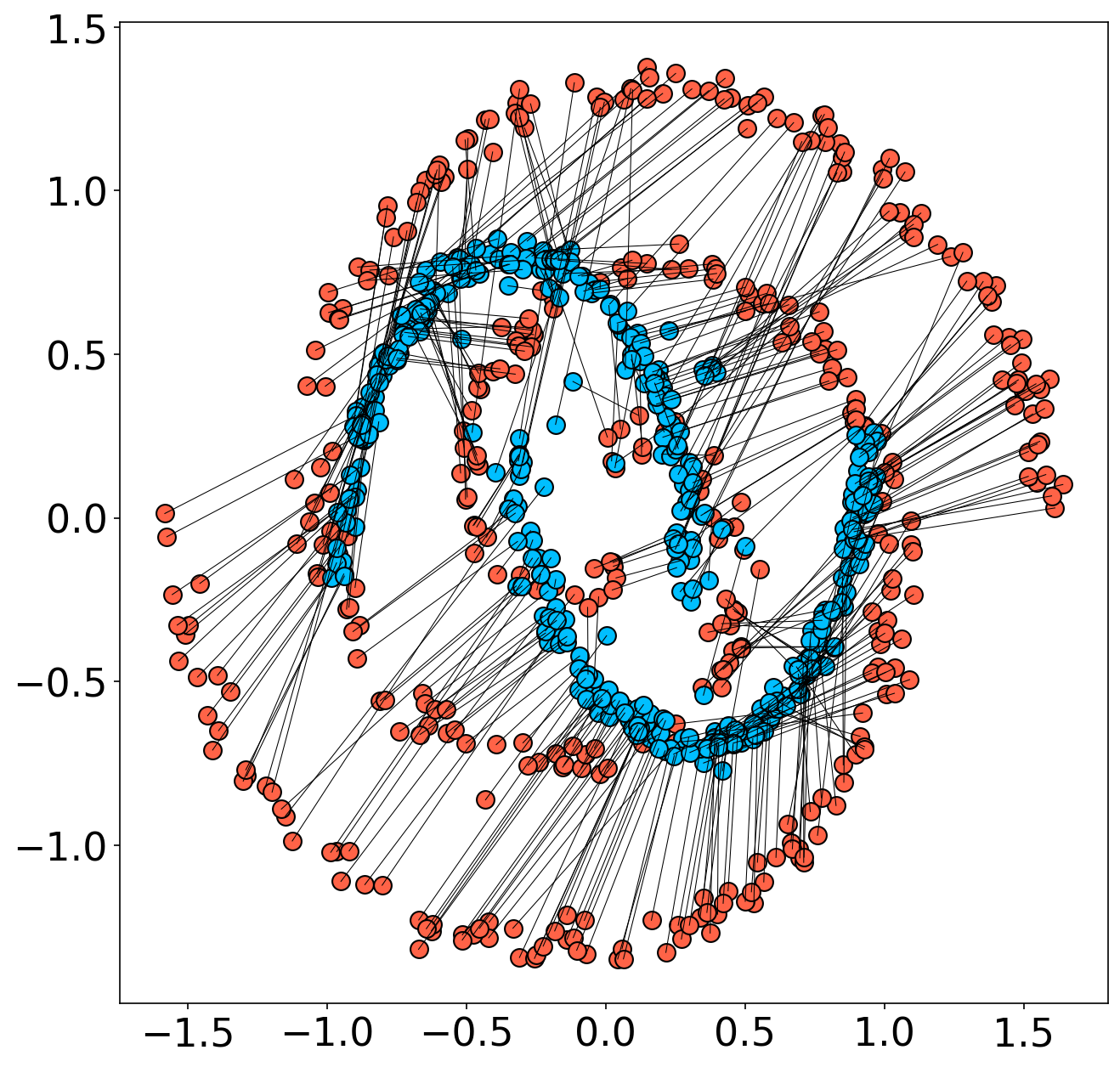}};
        \node[anchor=north west, inner sep=0] (ours2) 
        at (0.80\textwidth, -2.8) {\includegraphics[width=0.17\linewidth]{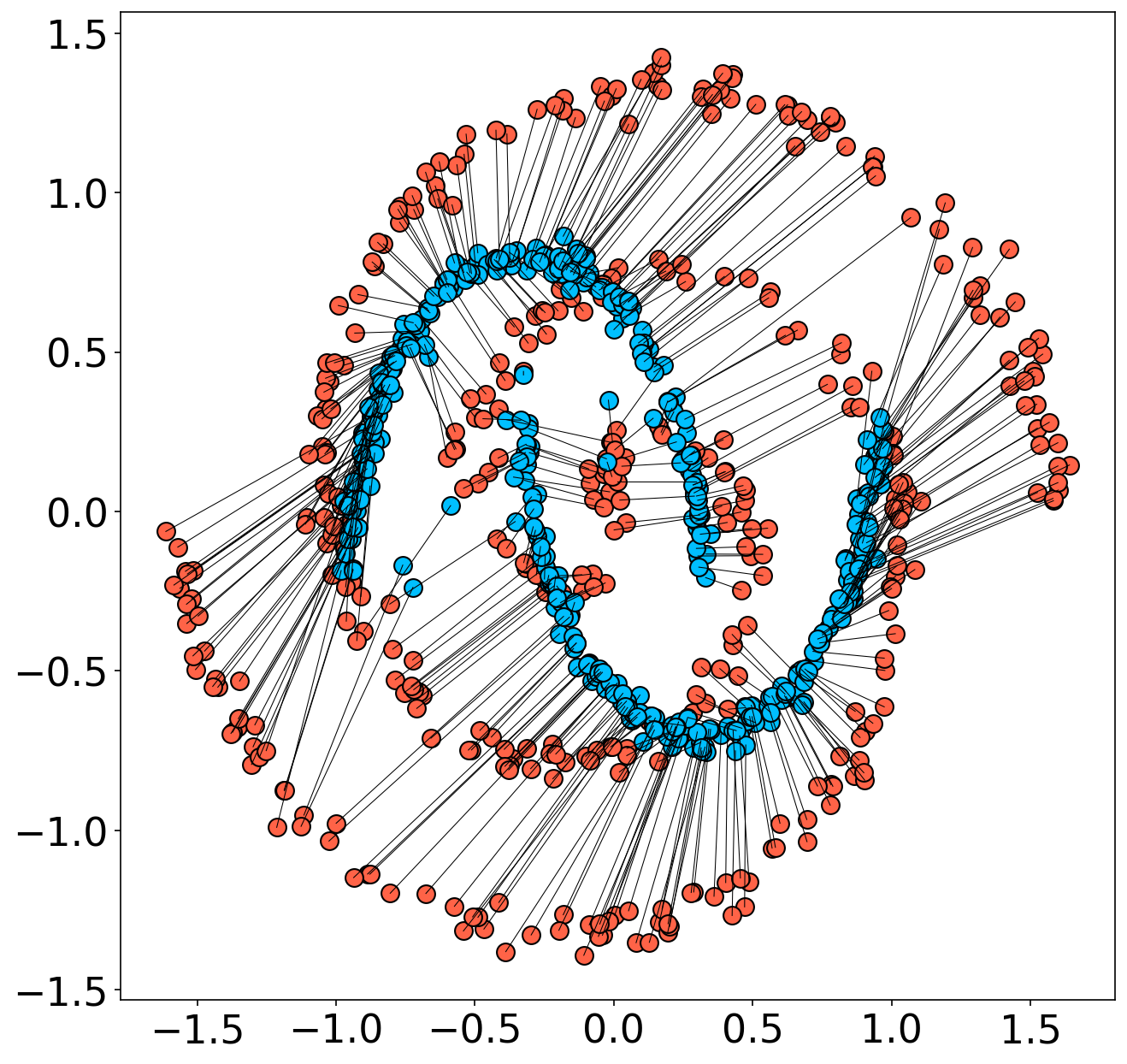}};
        
        \node[rotate=90, anchor=north] at ($(source.west) + (-0.5, -0.2)$) {$\nu \rightarrow \mu$};
        \node[rotate=90, anchor=north] at ($(target.west) + (-0.5, -0.2)$) {$\mu \rightarrow \nu$};

        \node[anchor=north] at (source.south) {(a) Source $\mu$};
        \node[anchor=north] at (not1.south) {(b) Strong NOT};
        \node[anchor=north] at (notweak1.south) {(c) Weak NOT};
        \node[anchor=north] at (pinn1.south) {(d) HJ-PINN};
        \node[anchor=north] at (ours1.south) {(e) NCF};

        \node[anchor=north] at (target.south) {(f) Target $\nu$};
        \node[anchor=north] at (not2.south) {(g) Strong NOT};
        \node[anchor=north] at (notweak2.south) {(h) Weak NOT};
        \node[anchor=north] at (pinn2.south) {(i) HJ-PINN};
        \node[anchor=north] at (ours2.south) {(j) NCF};
    \end{tikzpicture}
    \caption{\textit{Swiss roll ($\mu$) $\rightleftarrows$ Double moons ($\nu$)}: The top row shows transport in the direction $\nu \rightarrow \mu$, and the bottom row shows $\mu \rightarrow \nu$. The leftmost column displays $\mu$ and $\nu$ for reference.}
    \label{fig:2d_spiral_doublemoons}
\end{figure}

\vspace{-0.1cm}

\subsubsection{Evaluation on High-dimensional Gaussians}\label{sec:high_dim_gaussians}
\vspace{-0.5em}
For general distributions, the ground truth OT solution is unknown, making quantitative evaluation challenging. To enable precise assessment, we consider the Gaussian case: $\mu=\mathcal{N}\left(\mathbf{0},\Sigma_\mu\right)$ and $\nu=\mathcal{N}\left(\mathbf{0},\Sigma_\nu\right)$, where a closed-form solution is available via \eqref{eq:OT_Gaussian}.
Following \citet{korotin2021neural}, we vary the dimension $d$ from 2 to 64, with $\Sigma_\mu$ and $\Sigma_\nu$ generated using random eigenvectors uniformly sampled on the unit sphere and logarithms of eigenvalues drawn uniformly from $[-2, 2]$.
In addition to strong NOT and HJ-PINN, we evaluate several established OT methods: MM-v1 \citep{taghvaei20192,korotin2021neural}, which solves a min-max dual problem using input-convex neural networks (ICNNs); LS \citep{seguy2017large}, which addresses the dual problem via entropic regularization; and WGAN-QC \citep{liu2019wasserstein}, which employs a WGAN architecture with quadratic cost. Except for NOT—which directly parameterizes transport maps—all models use a shared architecture for potential functions.

Performance is measured using the unexplained variance percentage (UVP) \citep{korotin2019wasserstein}, which quantifies the $L^2$ error of the estimated transport map, normalized by Var$(\nu)$. Computational efficiency is also evaluated in terms of training and inference time, peak memory usage, and memory required to store bidirectional OT maps.
Table \ref{tab:gaussians_uvp} reports UVP across models and dimensions, while Figure \ref{fig:landscape_comparison} summarizes computational metrics. 
Our method consistently yields accurate OT maps with favorable scaling behavior, outperforming NOT, WGAN-QC, and LS, which exhibit greater deviation from the ground-truth transport. While MM-v1 achieves marginally lower UVP in higher dimensions, it incurs over 20× longer training time and significantly higher memory usage. In contrast, our approach avoids expensive nested min-max optimization and leverages a single network, resulting in faster and more memory-efficient training.
At inference, NOT offers the lowest latency due to its direct map parameterization, whereas other methods, including ours, require gradient-based evaluation, introducing additional overhead. This overhead, however, decreases with increasing dimension. Lastly, comparison with HJ-PINN underscores the superior effectiveness of our implicit loss in approximating the viscosity solution to the underlying HJ equation.

\begin{table}[t]
\centering
\vspace{-2mm}
\caption{\textit{Quantitative evaluation on Gaussian distributions.} UVP $(\downarrow)$  is measured across different OT methods as the data dimension $d$ increases. }
\label{tab:gaussians_uvp}
\scalebox{0.86}{
\begin{tabular}{lcccccc}
\toprule
Method & $d=2$ & $d=4$ & $d=8$ & $d=16$ & $d=32$ & $d=64$ \\
\midrule
NOT &   77.248 & 125.419 & 114.056 & 176.086 & 182.287 & 196.831\\
WGAN-QC   &   1.596 & 5.897 & 31.0367 &  59.314 & 113.237 & 141.407\\
LS    & 5.806 & 9.781 & 15.963 & 25.232 & 41.445 & 55.360 \\
MM-v1   & 0.161 & 0.172 &  0.173 & 0.210 & \textbf{0.374} & \textbf{0.415} \\
HJ-PINN  & 0.080 & 0.069 & 0.163 & 0.458 & 0.576 & 1.683\\
\textbf{NCF} & \textbf{0.010} & \textbf{0.021} & \textbf{0.086} & \textbf{0.146} & 0.436 & 0.858\\
\bottomrule
\end{tabular}}
\end{table}

\begin{figure}
    \centering
    \includegraphics[width=\linewidth]{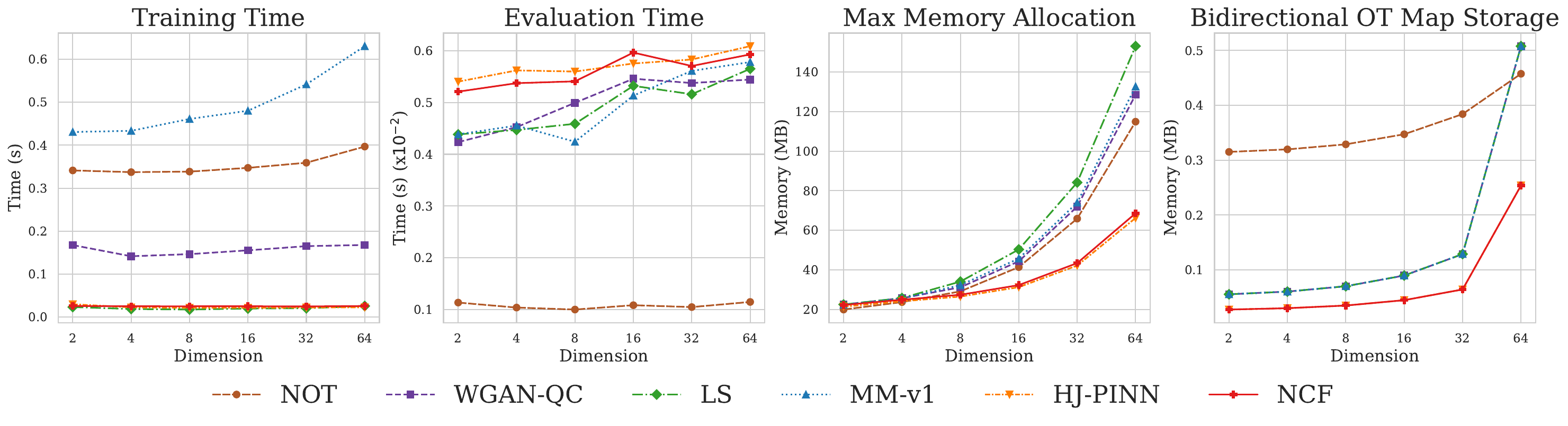}
    \caption{\textit{Computational comparison.} Training time (s/epoch), evaluation time (s/epoch), peak memory (MB) during training, and memory (MB) for storing bidirectional OT maps are reported. 
    }
    \label{fig:landscape_comparison}
\end{figure}

\vspace{-0.1cm}

\subsubsection{Application to Color Transfer}\label{sec:exp_color_transfer} 
\vspace{-0.5em}
We employ the dataset provided by CycleGAN \citep{zhu2017unpaired} for image color transfer experiments. From each of the three available groups of image pairs, we selected 10 representative pairs. For each pair, we perform both forward and backward color transfer. 
To evaluate the effectiveness of our model, we include comparisons with two widely used classical color transfer methods: a standard per-channel histogram matching technique and the approach of \citet{reinhard2001color}, which aligns the mean and standard deviation of color channels. These baselines represent statistical methods that do not rely on OT, providing a complementary perspective on performance. We include NOT and MM-v1 as deep learning OT baselines.

To quantitatively evaluate color fidelity and distributional consistency, we employ two widely used histogram-based metrics: Earth Mover’s distance (EMD) and histogram intersection (HI), summarized in Table~\ref{tab:color_transform}. Across all three domains, our method consistently achieves superior performance compared to all baselines in both metrics. In particular, our proposed method exhibits superior robustness in handling more complex and multimodal color distributions compared to MM-v1, especially in contrast to the simpler Gaussian settings examined in the previous section. Qualitative results are provided in Appendix~\ref{appen:extra_results_color_transfer}.
\begin{table}[t]
\centering
\caption{\textit{Quantitative evaluation of color transfer.} Earth mover distance (EMD) and histogram intersection (HI) between color distributions of target and transported images are reported.}
\label{tab:color_transform}
\scalebox{0.90}{
\begin{tabular}{lcccccc}
\toprule
\multirow{2}{*}{Method} & \multicolumn{2}{c}{Winter-Summer} & \multicolumn{2}{c}{Monet-Photograph} & \multicolumn{2}{c}{Gogh-Photograph} \\
    & EMD ($\downarrow$)  & HI ($\uparrow$) & EMD ($\downarrow$) & HI ($\uparrow$)  & EMD ($\downarrow$) & HI ($\uparrow$)     \\
\midrule
HisMatching  & 0.0012  & 0.7296   & 0.0013 & 0.7532 & 0.0010 & 0.7668 \\
Reinhard  & 0.0013 & 0.6255  & 0.0012 & 0.7255  & 0.0009 & 0.7406  \\
NOT   & 0.0008 & 0.8002 & 0.0008 & 0.8210 & 0.0008 & 0.8247\\
MM-v1  & 0.0014 & 0.7295 & 0.0011 & 0.7722  & 0.0007 & 0.8265 \\
NCF  & \textbf{0.0005} & \textbf{0.8914} & \textbf{0.0004} & \textbf{0.9174} & \textbf{0.0003} & \textbf{0.9117} \\
\bottomrule
\end{tabular}}
\end{table}

\vspace{-0.15cm}

\subsection{Class-Conditional OT}\vspace{-0.2em}

\subsubsection{2D Toy Examples}\label{sec:2D_conditional}\vspace{-0.5em}
We present experimental results on a 2D synthetic dataset consisting of class-labeled samples, designed to evaluate class-conditional OT. 
To assess the ability of the proposed class-conditional NCF variant to model class-guided transport, we compare it against an unconditional NCF, which does not utilize label information. Furthermore, to benchmark our method against existing approaches, we include NOT with general cost functionals (GNOT) \citep{asadulaev2022neural}, a recent model designed to perform class-conditional OT.

Figure \ref{fig:class_2d} presents results on a 2D Gaussian mixture dataset, where each data point is associated with a class label.
The unconditional NCF, lacking access to label information, learns a global transport map that ignores class structure, aligning source and target points purely based on $W^2$ distance. 
In contrast, both GNOT and the proposed class-conditional NCF learn separate transport maps per class. However, GNOT exhibits intersecting transport paths between classes, suggesting suboptimality with respect to the transport cost.  The class-conditional NCF effectively disentangles transport across classes and yields maps that closely approximate the optimal solutions. These results highlight the accuracy and effectiveness of our approach, grounded in a CODE-based formulation of the HJ equation, for learning class-conditional transport in structured settings.

\begin{figure}
    \centering
    \begin{tikzpicture}[every node/.style={font=\small}]
		\node[anchor=north west, inner sep=0] (not1) 
		at (0,0) {\includegraphics[width=0.2\linewidth]{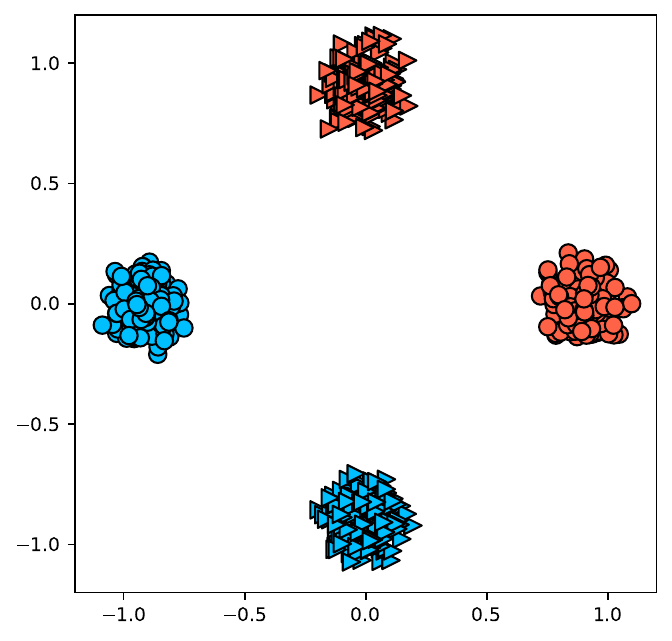}};
		\node[anchor=north west, inner sep=0] (notweak1) 
		at (0.25\textwidth,0) {\includegraphics[width=0.2\linewidth]{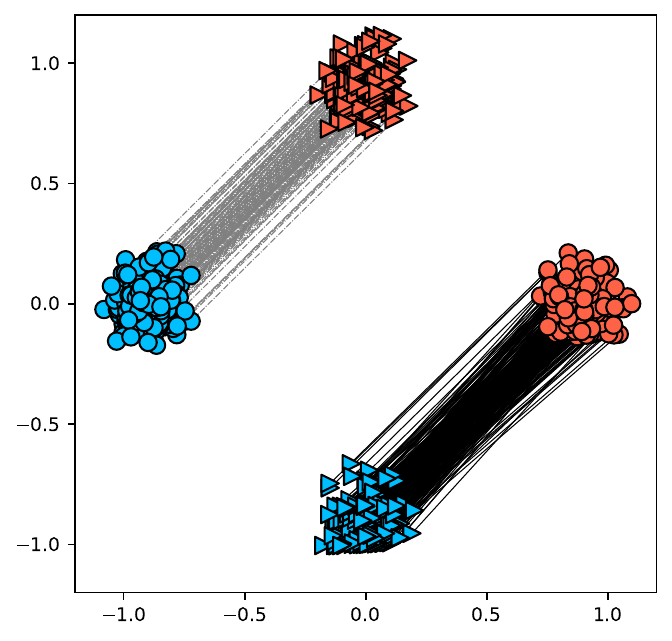}};
		\node[anchor=north west, inner sep=0] (pinn1) 
		at (0.50\textwidth,0) {\includegraphics[width=0.2\linewidth]{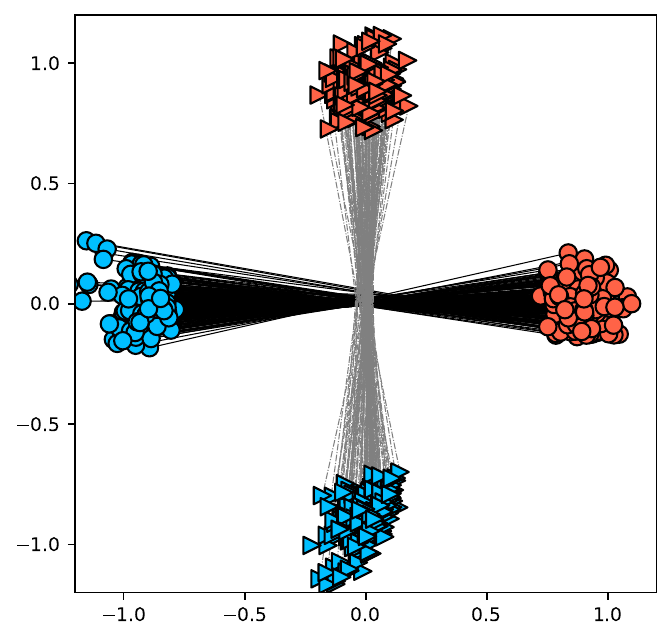}};
		\node[anchor=north west, inner sep=0] (ours1) 
		at (0.75\textwidth,0) {\includegraphics[width=0.2\linewidth]{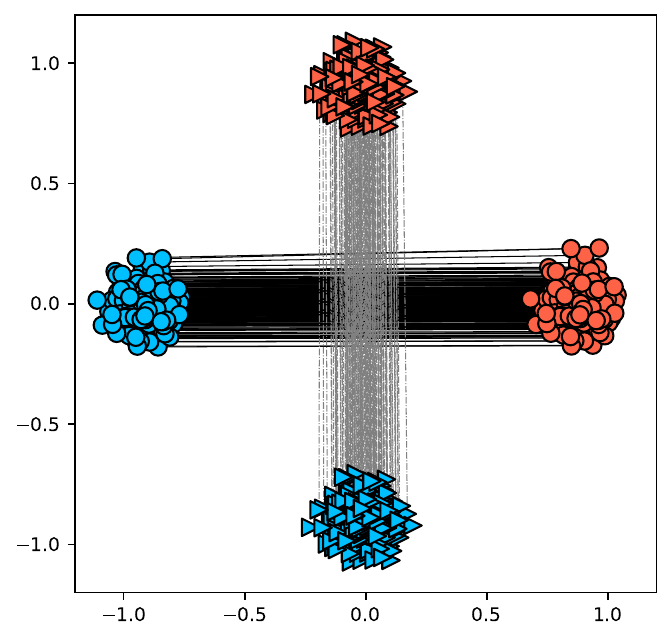}};
    	
    	\node[anchor=north west, inner sep=0] (not2) 
    	at (0,-3.1) {\includegraphics[width=0.2\linewidth]{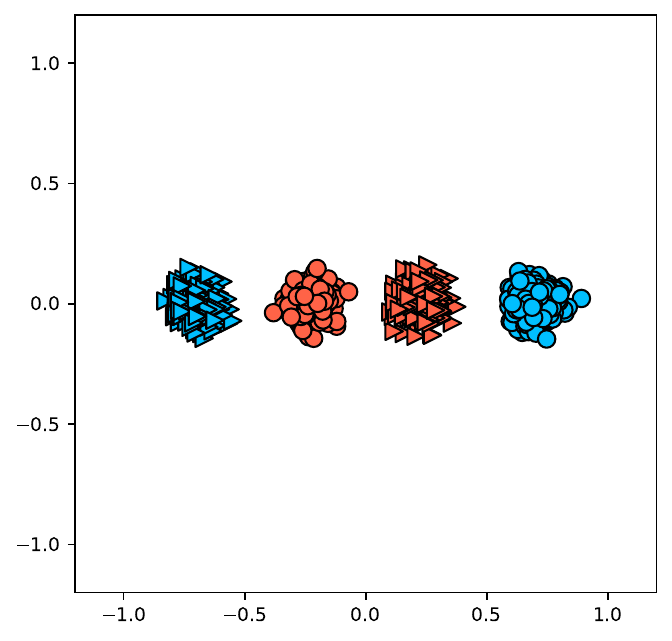}};
    	\node[anchor=north west, inner sep=0] (notweak2) 
    	at (0.25\textwidth,-3.1) {\includegraphics[width=0.2\linewidth]{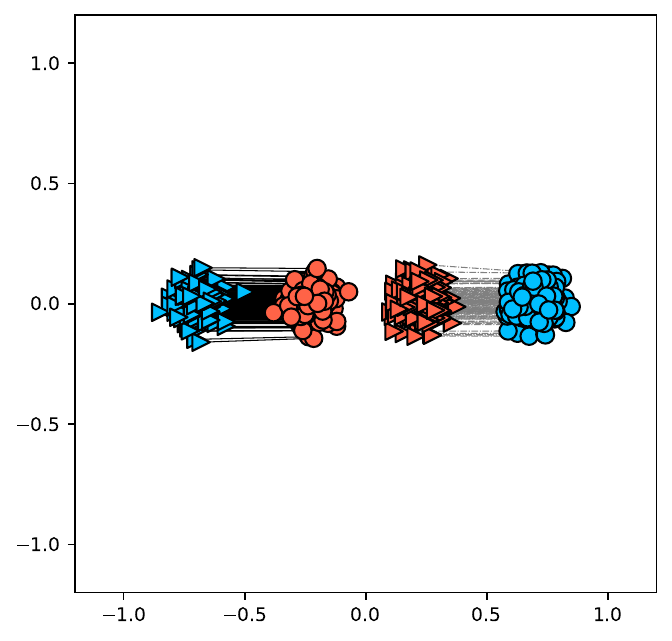}};
    	\node[anchor=north west, inner sep=0] (pinn2) 
    	at (0.50\textwidth,-3.1) {\includegraphics[width=0.2\linewidth]{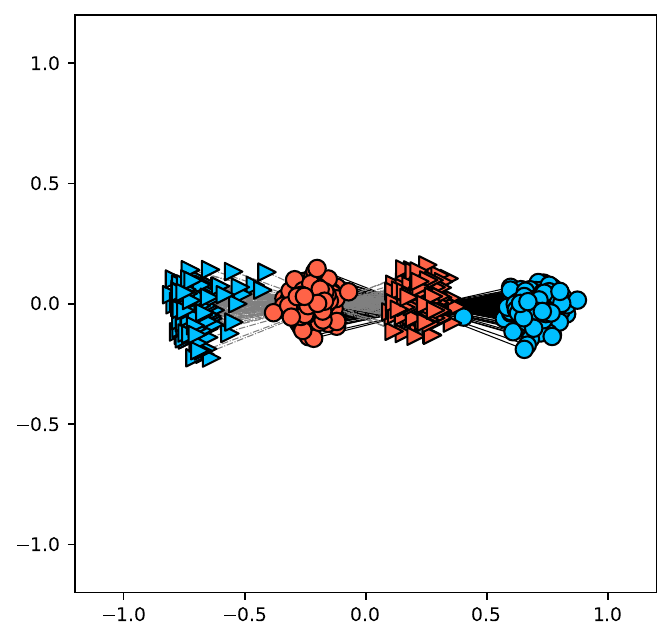}};
    	\node[anchor=north west, inner sep=0] (ours2) 
    	at (0.75\textwidth,-3.1) {\includegraphics[width=0.2\linewidth]{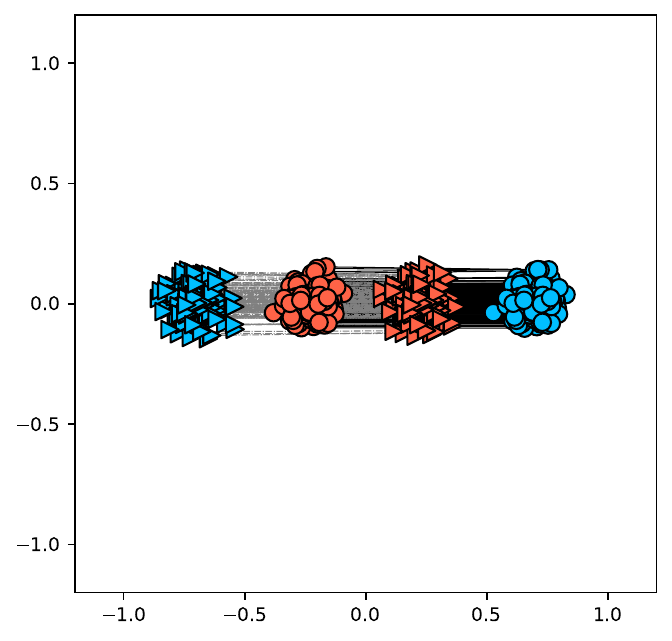}};
    	
    	\node[rotate=90, anchor=north] at ($(not1.west)!0.5!(not1.west) + (-0.5,0)$) {Vertical Gaussians};
    	\node[rotate=90, anchor=north] at ($(not2.west)!0.5!(not2.west) + (-0.5,0.0)$) {Horizontal Gaussians};

        \node[anchor=north] at (not2.south) {(a) Distributions};
        \node[anchor=north] at (notweak2.south) {(b) Unconditional NCF};
        \node[anchor=north] at (pinn2.south) {(c) GNOT};
        \node[anchor=north] at (ours2.south) {(d) Class-conditional NCF};
    \end{tikzpicture}
    \caption{\textit{2D class-conditional OT.} The leftmost column displays $\mu$ (red) and $\nu$ (blue), with class labels indicated by distinct markers.
In the remaining columns, blue dots denote transported samples, while solid black and dotted gray lines represent the learned transport maps for each class.}
    \label{fig:class_2d}
\end{figure}

\vspace{-0.1cm}

\subsubsection{MNIST \& Fashion MNIST}\label{subsection:MNIST_FMNIST}\vspace{-0.5em}

We apply our model to the MNIST \citep{lecun1998mnist} and Fashion MNIST \citep{xiao2017fashion} datasets, each comprising 10 classes. Given their substantially lower intrinsic dimensionality relative to the ambient space \citep{pope2021intrinsic}, we solve class-conditional OT problems in latent spaces obtained via $\beta$-VAEs \citep{higgins2017beta}; see Appendix~\ref{append:details_of_MNIST_FMNIST} for details. 

{We consider transport from each Fashion MNIST class to its corresponding MNIST class; additional class-conditional OT tasks on MNIST are provided in Appendix~\ref{append:extra_results_MNIST_FMNIST}.
We compare against baselines from \citet{asadulaev2022neural}, including NOT and GNOT, as well as a domain adaptation OT method \citep{courty2016optimal, flamary2021pot} that uses discrete OT with label-supervised regularization. Additionally, we evaluate unsupervised image translation methods AugCycleGAN \citep{almahairi2018augmented} and MUNIT \citep{huang2018multimodal}. }

{Figure~\ref{fig:demo_MNIST_F_MNIST} shows bidirectional transported samples by NCF; uncurated results are in Appendix~\ref{append:extra_results_MNIST_FMNIST}. 
These results qualitatively demonstrate NCF’s ability to perform bidirectional, class-conditional OT on real images.
For quantitative evaluation, we report Fréchet Inception Distance (FID) \citep{heusel2017gans} and class-wise accuracy, which measures how well the class identity is preserved during transport, in Table~\ref{tab:accuracy_fid}. Our method achieves the highest accuracy, indicating its strong class-aware transport performance. Although the FID score is relatively high, this is largely due to the discrepancy introduced by the VAE decoder. To isolate this effect, we compute the FID between the NCF outputs and the VAE-decoded images. The resulting low score 2.73 indicates that the transport map in the latent space faithfully reproduces the target distribution. This is further supported by the KDE plots in Figure~\ref{fig:kde_plot}, showing close alignment between the transported and target latent distributions along principal components.}

\begin{figure*}
    \centering
    \subfigure{\includegraphics[trim={0 0 0 0},clip, width=0.49\textwidth]{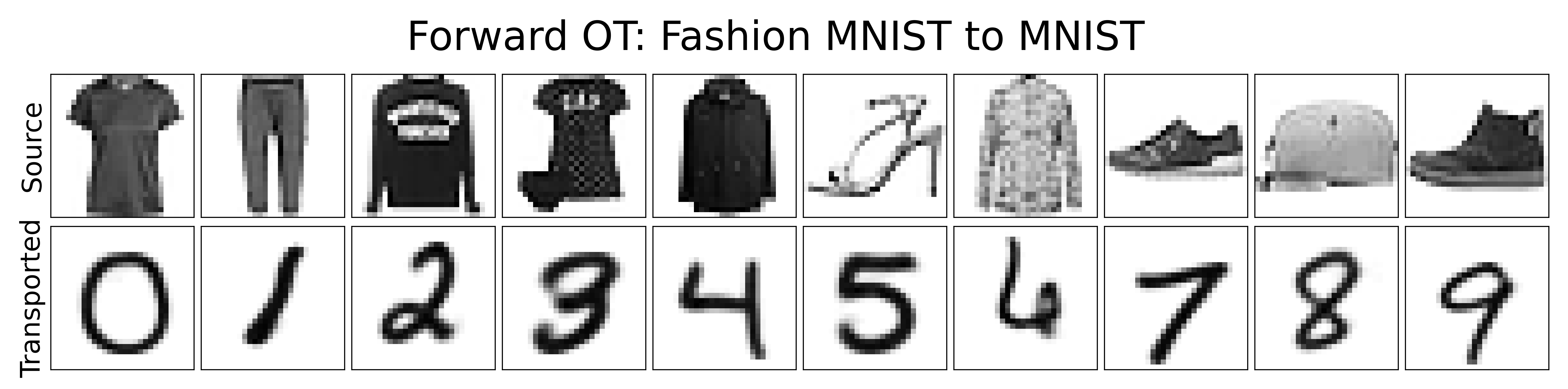}}\label{fig:FMNIST2MNIST_OT_02T} 
    \subfigure{\includegraphics[trim={0 0 0 0},clip, width=0.49\textwidth]{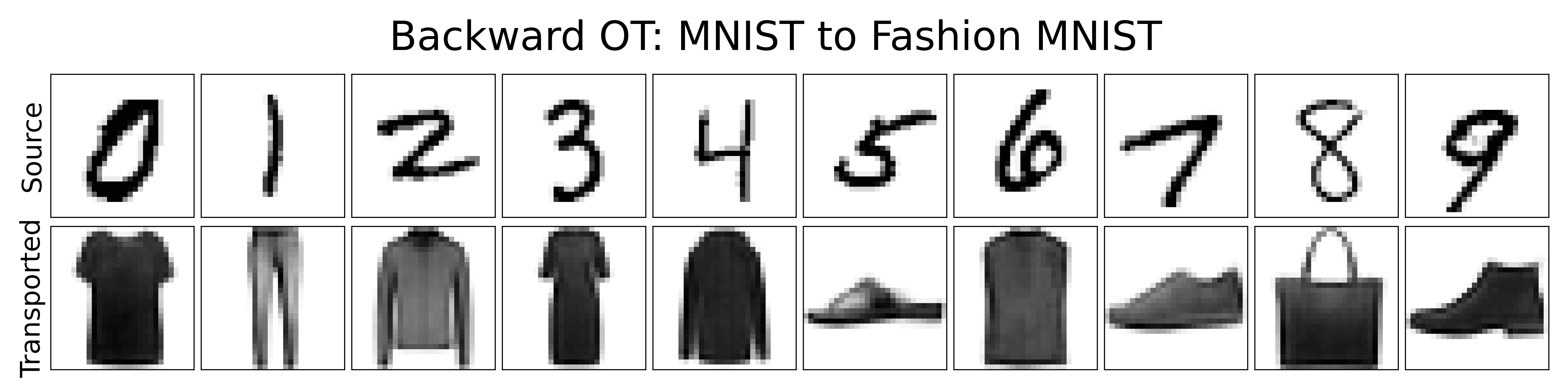}}\label{fig:FMNIST2MNIST_OT_T20}
    \caption{\textit{Class-conditional OT between MNIST and Fashion MNIST}. 
    \textbf{Left}: Forward OT. \textbf{Right}: Backward OT. The first row shows the source data, while the second row presents the data generated by learned OT map.}\label{fig:demo_MNIST_F_MNIST}
\end{figure*}

\begin{table}[t!]
\centering
\caption{Comparison of the accuracy and FID scores for the forward class-conditioned maps (Fashion MNIST $\rightarrow$ MNIST) learned using different methods. The accuracy and FID scores for the baseline methods are adopted from \citep{asadulaev2022neural}. 
}
\label{tab:accuracy_fid}
\resizebox{\textwidth}{!}{
\renewcommand{\arraystretch}{1.2}
\setlength{\tabcolsep}{6pt}
\begin{threeparttable}
\begin{tabular}{|c|c|c|c|c|c|c|}
\hline
\multirow{2}{*}{\textbf{Metric}} 
  & \textbf{NOT} & \textbf{GNOT} & \textbf{Discrete OT} & \multirow{2}{*}{\textbf{AugCycleGAN}} & \multirow{2}{*}{\textbf{MUNIT}} & \multirow{2}{*}{\textbf{NCF [Ours]}} \\
  & $L^2$ cost & Stochastic map & SinkhornLpL1 & & & \\
\hline
Accuracy(\%) $\uparrow$ & 10.96 & 83.22 & 10.67 & 12.03 & 8.93 & \textbf{83.42} \\
\hline
FID $\downarrow$ & 7.51 & \textbf{5.26} & $>$100 & 26.35 & 7.91 & 
18.27
\\
\hline
\end{tabular}
\end{threeparttable}}
\end{table}

\vspace{-0.2cm}

\section{Conclusion}\vspace{-0.4em}
We introduced a theoretically grounded OT framework that recovers forward and backward maps in closed form via HJ characteristics. The resulting single-network, integration-free algorithm gives accurate, bidirectional maps, supports a broad class of costs, and extends to class-conditional transport with pairwise MMD alignment. We establish consistency and stability. Several tasks including synthetic, color-transfer, and MNIST demonstrate accuracy and efficiency of our algorithm.

Future directions include improving high-dimensional performance beyond latent-space implementations by developing more efficient gradient evaluations and scalable network designs. Extending the stability analysis to general neural architectures would provide a deeper theoretical understanding of our method and its convergence behavior.

\section{Acknowledgments}
Y. Park and S. Osher were supported in part by DARPA under grant HR00112590074. S. Liu, M. Zhou, and S. Osher were supported in part by NSF under grant 2208272. The work of S. Liu and M. Zhou was supported in part by the AFOSR YIP award No. FA9550-23-1-0087. M. Zhou and S. Osher were additionally supported by AFOSR under MURI grant N00014-20-1-2787. S. Osher was further supported by ARO under grant W911NF-24-1-0157 and by NSF under grant 1554564.

\bibliography{mybib_iclr}
\bibliographystyle{iclr2025_conference}

\appendix

\section{Proof}
\subsection{Proof of Proposition \ref{prop:viscosity_sol_implicitHJ}}\label{appen:pf_prop_viscosity_sol_implicitHJ}

Since $\ell$ is l.s.c., sub-differentiable, and strictly convex, its Legendre transform $h$ is also l.s.c., sub-differentiable, and strictly convex. For such a Hamiltonian $h$, it has been proven in \citep{park2025implicit} that the viscosity solution satisfies the implicit solution formula \eqref{eq:implicit_formula} almost everywhere. Consequently, the optimal solution to these OT problem is characterized by the implicit solution formula.

\subsection{Proof of Theorem \ref{thm:consistency}}\label{appen:thm_consistency}
\begin{lemma}\label{lemma:sufficient_cond_for_OT} 
  Suppose that $\mu, \nu$ are probability distributions on $\mathbb{R}^d$. Suppose $\mu$ has finite second order moment, i.e.,  $\int_{\mathbb{R}^d} \|x\|^2 d\mu(x) < \infty$. Assume that $\psi:\mathbb{R}^d \rightarrow \mathbb{R}\bigcup \{+\infty \}$ is convex and differentiable $\mu$-a.e. Set $T = \nabla \psi$ and suppose $\int_{\mathbb{R}^d} \|T(x)\|^2 d\mu(x) < + \infty$. Then $T$ is optimal for the transport cost $\frac12 \|x - y\|^2$ between $\mu$ and $\nu$. 
\end{lemma}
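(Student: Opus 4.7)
The statement is essentially a restatement of Brenier's theorem giving a sufficient condition for optimality via convex potentials, so my plan is to follow the classical Kantorovich--duality proof of Brenier. First I note that for the claim to make sense we must have $T_\sharp \mu = \nu$ (this appears to be implicit in the statement, since otherwise $T$ is not a transport map at all); I would state this explicitly at the start of the proof. The finite second moment on $\mu$ together with $\int \|T(x)\|^2 \diff\mu(x) < \infty$ and $T_\sharp \mu = \nu$ immediately gives that $\nu$ also has finite second moment, which is needed to make all integrals well-defined.

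The core reduction is algebraic: for any transport plan $\pi \in \Pi(\mu,\nu)$,
\begin{equation*}
\int \tfrac{1}{2}\|x-y\|^2 \diff\pi(x,y) = \tfrac{1}{2}\int \|x\|^2 \diff\mu(x) + \tfrac{1}{2}\int \|y\|^2 \diff\nu(y) - \int x\tp y \diff\pi(x,y),
\end{equation*}
so minimizing the quadratic cost is equivalent to maximizing the bilinear term $\int x\tp y \diff\pi$. Here the first two terms are finite by the moment hypotheses and are independent of $\pi$.

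The main step is then the Fenchel--Young inequality applied to the convex function $\psi$ and its Legendre transform $\psi^*$: for every $x,y \in \mathbb{R}^d$,
\begin{equation*}
x\tp y \le \psi(x) + \psi^*(y),
\end{equation*}
with equality if and only if $y \in \partial \psi(x)$. Integrating against an arbitrary coupling $\pi$ of $(\mu,\nu)$ yields $\int x\tp y\,\diff\pi \le \int \psi\,\diff\mu + \int \psi^*\,\diff\nu$, where the right-hand side depends only on the marginals. Since $\psi$ is differentiable $\mu$-a.e., at such points $\partial \psi(x) = \{\nabla \psi(x)\} = \{T(x)\}$, so for the coupling $\pi_T := (\mathrm{id}, T)_\sharp \mu$ the Fenchel--Young inequality becomes an equality $\mu$-a.e. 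This shows $\pi_T$ attains the upper bound, hence minimizes the quadratic cost, and consequently $T$ is an OT map.

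The main obstacle I anticipate is the integrability bookkeeping around $\int \psi\,\diff\mu$ and $\int \psi^*\,\diff\nu$, which a priori may be $+\infty$. The standard remedy is to truncate/regularize $\psi$ (e.g.\ work with $\psi_R(x) = \sup_{\|y\|\le R} (x\tp y - \psi^*(y))$ or replace $\psi$ by its Moreau envelope), apply Fenchel--Young in the regularized form, and then pass to the limit using the finite second moments of $\mu$ and $\nu$ together with $\int \|T(x)\|^2 \diff\mu(x) < \infty$ (which bounds $\int \psi(x)+\psi^*(T(x))\,\diff\mu$ via $x\tp T(x) \le \tfrac{1}{2}\|x\|^2 + \tfrac{1}{2}\|T(x)\|^2$). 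Once the inequality $\int x\tp y\,\diff\pi \le \int x\tp T(x)\,\diff\mu$ is established for every admissible coupling $\pi$, combining with the algebraic reduction above concludes that $T$ minimizes the quadratic transport cost.
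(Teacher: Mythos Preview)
The paper does not prove this lemma at all; it simply cites Theorem 1.48 of Santambrogio's \emph{Optimal Transport for Applied Mathematicians} and moves on. Your sketch is the standard Brenier/Kantorovich-duality argument that Santambrogio uses for that theorem, so in substance your approach coincides with what the paper defers to. Your observation that $T_\sharp\mu=\nu$ must be assumed (and hence that $\nu$ inherits a finite second moment) is correct and worth stating, and your handling of the integrability issue via truncation/Moreau envelope is the usual way to make the Fenchel--Young step rigorous.
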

This lemma is proved in Theorem 1.48 in \cite{santambrogio2015optimal}.

\begin{lemma}\label{lemma:mono_map}
  Suppose $T:\mathbb{R}^d\rightarrow\mathbb{R}^d$ is a bijective map on $\mathbb{R}^d$. Assume that $T$ is strictly monotone, that is, $\langle T(x) - T(y), x-y \rangle > 0$ for arbitrary $x, y\in\mathbb{R}^d$, $x\neq y$. Here, we denote $\langle\cdot, \cdot \rangle$ as the $\ell^2$ inner product on $\mathbb{R}^d$. Suppose $S:\mathbb{R}^d\rightarrow \mathbb{R}^d$ satisfies $S\circ T = \mathrm{Id}$, then $S$ is also bijective, and strictly monotone.
\end{lemma}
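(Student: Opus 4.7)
The plan is to first identify $S$ explicitly as the inverse $T^{-1}$, after which both conclusions reduce to elementary manipulations. Since $T$ is bijective, $T^{-1}:\mathbb{R}^d \to \mathbb{R}^d$ exists and is bijective. The hypothesis $S \circ T = \mathrm{Id}$ implies that for every $y \in \mathbb{R}^d$, writing $y = T(x)$ with $x = T^{-1}(y)$, we obtain $S(y) = S(T(x)) = x = T^{-1}(y)$. Thus $S = T^{-1}$, which is bijective, yielding the first claim.

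For the strict monotonicity claim, I would take arbitrary distinct $u, v \in \mathbb{R}^d$ and use the bijectivity of $T$ to write $u = T(x)$, $v = T(y)$. Injectivity of $T$ ensures $x \neq y$ whenever $u \neq v$. Then
\[
\langle S(u) - S(v),\, u - v \rangle \;=\; \langle x - y,\, T(x) - T(y) \rangle \;>\; 0
\]
by the assumed strict monotonicity of $T$. This completes the argument.

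The main obstacle is essentially conceptual rather than technical: one must be careful to verify that the hypothesis $S \circ T = \mathrm{Id}$ together with $T$ being bijective is sufficient to force $S = T^{-1}$ as a map on all of $\mathbb{R}^d$ (i.e., $T \circ S = \mathrm{Id}$ as well). This follows precisely because $T$ is surjective: every point in the domain of $S$ arises as $T(x)$ for some $x$, so the left inverse relation determines $S$ uniquely. No analytic or topological subtlety is needed, and the entire proof is a short chain of set-theoretic identifications together with a rewriting of the inner product.
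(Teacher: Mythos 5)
Your proof is correct and follows essentially the same route as the paper: both identify $S$ with $T^{-1}$ (the paper phrases this as verifying $S$ is bijective via $S\circ T=\mathrm{Id}$ and surjectivity of $T$) and then deduce strict monotonicity by pulling distinct points back through $T$ and invoking the strict monotonicity of $T$. No gaps to note.
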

\begin{proof}
  It is straightforward to verify that $S$ is surjective and injective, and it is thus bijective. Now, for arbitrary $x, y\in\mathbb{R}^d$, $x\neq y$, there exists unique $x',y'\in\mathbb{R}^d$, $x'\neq y'$, such that $x=T(x'), y=T(y')$. Thus, we have $S(x)=S(T(x'))=x', S(y)=S(T(y'))=y'$, and $\langle S(x)-S(y), x-y  \rangle = \langle x'-y', T(x')-T(y')\rangle>0.$
\end{proof}

For brevity, in the following discussion, we denote \( C^k_{\mathrm{loc}}(\mathbb{R}^d) \) by \( C^k(\mathbb{R}^d) \) for any \( k \in \mathbb{N} \). We denote $\textbf{O}_d$ as the $d\times d$ zero matrix. For symmetric matrices $A, B\in\mathbb{R}^{d\times d},$ we denote $A\succ B$ if $A-B$ is positive definite.

\begin{theorem}\label{thm:main}
  Given the probability distributions $\mu, \nu \in \mathscr{P}(\mathbb{R}^d)$ with $\int_{\mathbb{R}^d}\|x\|^2\diff \mu, \int_{\mathbb{R}^d}\|x\|^2\diff \nu < +\infty,$ 
  suppose $u_0 \in C^1_{\text{loc}}(\mathbb{R}^d), u_1 \in C^2_{\text{loc}}(\mathbb{R}^d), \nabla u_1 \in L^2(\mathbb{R}^d, \mathbb{R}^d; \nu)$ satisfy
  \begin{equation}
    u_0(x-t_f\nabla u_1(x)) = u_1(x) - \frac{t_f}{2}\|\nabla u_1(x)\|^2, \quad \forall \ x\in\mathbb{R}^d.  \label{implicit_equ_at_T}
  \end{equation}
  Assume further that $(\mathrm{Id} + t_f\nabla u_0(\cdot))_\sharp\mu = \nu$. If the mapping $\mathrm{Id} - t_f\nabla u_1(\cdot):\mathbb{R}^d \rightarrow \mathbb{R}^d$ is bijective and $\textbf{I}_d - t_f\nabla^2 u_1(x)\succ \textbf{O}_d$, then $\mathrm{Id} + t_f \nabla u_0(\cdot)$ is the optimal transport from $\mu$ to $\nu$, and $\mathrm{Id} - t_f \nabla u_1(\cdot)$ is the optimal transport map from $\nu$ to $\mu$.
\end{theorem}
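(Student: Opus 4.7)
The plan is to exhibit both candidate maps as gradients of convex potentials that push the correct measure onto the other, then invoke Lemma~\ref{lemma:sufficient_cond_for_OT} in each direction. Introduce the potentials $\psi_0(x) := \tfrac{1}{2}\|x\|^2 + t_f u_0(x)$ and $\psi_1(x) := \tfrac{1}{2}\|x\|^2 - t_f u_1(x)$, so that $T_0 := \mathrm{Id} + t_f \nabla u_0 = \nabla \psi_0$ and $T_1 := \mathrm{Id} - t_f \nabla u_1 = \nabla \psi_1$. The Hessian hypothesis $\mathbf{I}_d - t_f \nabla^2 u_1 \succ \mathbf{O}_d$ is precisely $\nabla^2 \psi_1 \succ \mathbf{O}_d$, so $\psi_1$ is strictly convex and $T_1$ is a strictly monotone $C^1$ bijection of $\mathbb{R}^d$. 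The convexity of $\psi_0$, by contrast, is not assumed directly and must be derived from the implicit identity \eqref{implicit_equ_at_T}.

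The crucial step is to differentiate \eqref{implicit_equ_at_T} in $x$. The chain rule applied to $u_0(T_1(x))$ produces a factor $(\mathbf{I}_d - t_f \nabla^2 u_1(x))$ multiplied against $\nabla u_0(T_1(x))$, while the right-hand side yields the same factor multiplied against $\nabla u_1(x)$ after using $\nabla(\tfrac{t_f}{2}\|\nabla u_1\|^2) = t_f \nabla^2 u_1 \nabla u_1$; since the shared factor is invertible by the strict positivity assumption, this collapses to the pointwise identity $\nabla u_0(T_1(x)) = \nabla u_1(x)$. Substituting back gives $T_0(T_1(x)) = T_1(x) + t_f \nabla u_1(x) = x$, and the bijectivity of $T_1$ promotes this to $T_0 = T_1^{-1}$. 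Lemma~\ref{lemma:mono_map} then transfers strict monotonicity from $T_1$ to $T_0$, and the standard characterization of convex $C^1$ functions via monotonicity of the gradient upgrades $\psi_0$ to a convex potential.

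With both potentials now convex, I would invoke Lemma~\ref{lemma:sufficient_cond_for_OT} twice. For the forward direction, the hypothesis $(T_0)_\sharp\mu = \nu$ together with $\int \|T_0\|^2 \, d\mu = \int \|y\|^2 \, d\nu < \infty$ verifies the required conditions, so $T_0$ is the optimal map from $\mu$ to $\nu$. For the reverse direction, $(T_1)_\sharp \nu = \mu$ follows from $T_1 = T_0^{-1}$ together with $(T_0)_\sharp \mu = \nu$, and the bound $\int \|T_1(y)\|^2 \, d\nu(y) \le 2\int \|y\|^2 \, d\nu + 2 t_f^2 \int \|\nabla u_1\|^2 \, d\nu$ is finite by the finite second moment of $\nu$ and the $L^2(\nu)$ hypothesis on $\nabla u_1$. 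The main point demanding care — not really an obstacle, but the only non-mechanical step — is the passage from the scalar implicit relation \eqref{implicit_equ_at_T} to the operator identity $T_0 \circ T_1 = \mathrm{Id}$, and in particular the observation that $\psi_0$ inherits convexity from $\psi_1$ through the inverse relation rather than from any direct Hessian assumption on $u_0$, so the bijectivity hypothesis on $T_1$ is used in an essential way.
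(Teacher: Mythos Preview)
Your proposal is correct and follows essentially the same approach as the paper: differentiate \eqref{implicit_equ_at_T} to obtain $\nabla u_0 \circ T_1 = \nabla u_1$, deduce $T_0 \circ T_1 = \mathrm{Id}$, transfer monotonicity via Lemma~\ref{lemma:mono_map}, and apply Lemma~\ref{lemma:sufficient_cond_for_OT} in both directions after checking the $L^2$ bounds. Your treatment is in fact slightly cleaner in two places---you compute $\int \|T_0\|^2\,d\mu = \int \|y\|^2\,d\nu$ directly from the pushforward hypothesis rather than routing through $T_{1\sharp}\nu = \mu$, and you deduce convexity of $\psi_0$ from monotonicity of its gradient (valid for $C^1$ functions) rather than invoking a Hessian of $u_0$ that is not assumed to exist.
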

\begin{proof}
We split the proof into several steps:

\textbf{Step 1.} We first prove the fact that $\nabla u_0(x - t_f\nabla u_1(x, t)) = \nabla u_1(x)$ for arbitrary $x\in\mathbb{R}^d$. This can be shown by taking gradient with respect to $x$ on both sides of \eqref{implicit_equ_at_T}:
\begin{equation*}
  (\textbf{I}_d - t_f\nabla^2 u_1(x)) \nabla u_0(x - t_f \nabla u_1(x)) = \nabla u_1(x) - t_f \nabla^2 u_1(x) \nabla u_1(x).
\end{equation*}
Re-arrange this equation yields
\begin{equation*}
  (\textbf{I}_d -t_f\nabla^2 u_1(x)) (\nabla u_0(x - t_f \nabla u_1(x)) - \nabla u_1(x)) = 0.
\end{equation*}
As $\textbf{I}_d-t_f\nabla^2 u_1(x)\succ \textbf{O}_d,$ we deduce that $\nabla u_0(x - t_f \nabla u_1(x)) = \nabla u_1(x)$ for arbitrary $x\in\mathbb{R}^d$.

\textbf{Step 2.} For the sake of brevity, we denote  $T_0(\cdot):=\mathrm{Id} + t_f \nabla u_0(\cdot)$, and $T_1(\cdot):=\mathrm{Id} - t_f\nabla u_1(\cdot)$. We prove that $T_0\circ T_1 = \mathrm{Id}$. This can be derived by straightforward calculation:
\begin{align*}
  T_0(T_1(x)) = T_1(x) + t_f\nabla u_0(T_1(x)) & =  x - t_f \nabla u_1(x) + t_f\nabla u_0(x - t_f \nabla u_1(x))  \\
  & = x + t_f(\nabla u_0(x - t_f \nabla u_1(x)) - \nabla u_1(x)) = x,
\end{align*}
for any $x \in \mathbb{R}^d$.

\textbf{Step 3.} Before we prove the assertion, we show that $\int_{\mathbb{R}^d} \|T_0(x)\|^2 \diff \mu < +\infty, \int_{\mathbb{R}^d} \|T_1(x)\|^2 \diff \nu < +\infty.$ The latter inequality can be shown using
\begin{align*}
  \int_{\mathbb{R}^d} \|T_1(x)\|^2 \diff \nu & \leq  2\left(\int_{\mathbb{R}^d} \|T_1(x) - x\|^2 \diff \nu + \int_{\mathbb{R}^d} \|x\|^2 \diff \nu\right) \\
  & =  2(t_f^2\|\nabla u_1\|_{L^2(\nu)}^2 + \int_{\mathbb{R}^d} \|x\|^2 \diff \nu) < +\infty.
\end{align*}
For the former inequality, we have
\begin{equation*}
  \int_{\mathbb{R}^d} \|T_0(x)\|^2 \diff \mu \leq 
  2 t_f^2\int_{\mathbb{R}^d} \|\nabla u_0(x)\|^2 \diff \mu + 2 \int_{\mathbb{R}^d} \|x\|^2 \diff \mu.
\end{equation*}
Using the fact that $T_{1\sharp}\nu = \mu,$ The first term above equals $\int_{\mathbb{R}^d} \|\nabla u_0(T_1(x))\|^2\diff \nu = \int_{\mathbb{R}^d} \|\nabla u_1(x)\|^2\diff \nu = \|\nabla u_1\|_{L^2(\nu)}<+\infty$, where we use the fact that $\nabla u_0(T_1(x)) = \nabla u_1(x)$ established in step 1. This accomplishes the proof for $\int_{\mathbb{R}^d} \|T_0(x)\|^2 \diff \mu < + \infty$.

\textbf{Step 4.} We now prove the conclusion. Firstly, recall that $D T_1(x) = \textbf{I}_d - t_f\nabla^2 u_1(x) \succ \textbf{O}_d,$ this leads to the fact that $T_1(\cdot)$ is strictly monotone. We now apply Lemma \ref{lemma:mono_map} to show that $T_0$ is also bijective, and strictly monotone. 

$T_0$ is bijective suggests that $T_1(\cdot)$ is the inverse mapping of $T_0(\cdot)$, this leads to $T_{1\sharp}\nu = \mu$. As $T_1(\cdot)=\nabla\varphi(\cdot)$ with $\varphi(\cdot) = \frac{\|\cdot\|^2}{2} - t_f u_1(\cdot)$ being convex, combinging the fact established in step 3, Lemma \ref{lemma:sufficient_cond_for_OT} proves that $T_1$ is the optimal transport map from $\nu$ to $\mu$. 

Furthermore, $T_0$ is strictly monotone yields that $DT_0(x) = \textbf{I}_d + t_f\nabla^2 u_0(x)\succ \textbf{O}_d$, this indicates that $T_0(\cdot)=\nabla\left(\frac{\|\cdot\|^2}{2} + t_f u_0(\cdot)\right)$ is the gradient of a convex function. Combining with the fact that $T_{0\sharp}\mu = \nu$ and finite $L^2(\mu)$ cost for $T_0$, we deduce that $T_0$ is the optimal transport map form $\mu$ to $\nu$.
\end{proof}

Recall $\varrho\in\mathscr{P}(\mathbb{R}^d)$, and the implicit HJ loss $\mathcal L_{\text{HJ}}(u)$ defined in \eqref{eq:implicit_loss}. The following Theorem is a natural corollary of Theorem \ref{thm:main}.
\begin{theorem}[Consistency result]
  Suppose the probability distributions $\mu, \nu$ possess finite second-order moments. Assume $u\in C^1_{\text{loc}}(\mathbb{R}^d\times[0, t_f])$, and define $u_1(\cdot) := u(\cdot, t_f) \in C^2(\mathbb{R}^d)$ with $\nabla u_1 \in L^2(\mathbb{R}^d, \mathbb{R}^d; \nu)$. Denote  hyperparameter $\lambda>0.$ Assume that $\varrho$ is a strictly positive probability measure on $\mathbb{R}^d$. Suppose that $u$ minimizes the loss functional, i.e.,
  \[ \mathcal L_{\text{HJ}}(u) + \lambda \mathcal L_{\text{MMD}}(u) = 0.\] 
  Assume further that the map $T_\nu^\mu[u] : \mathbb{R}^d \to \mathbb{R}^d$ is bijective and its Jacobian $D_x T_\nu^\mu[u](x) \succ \mathbf{O}_d$ for any $x\in\mathbb{R}^d$. Then the maps $T_\nu^\mu[u]$ and $T_\mu^\nu[u]$ are the optimal transport maps from $\nu$ to $\mu$ and from $\mu$ to $\nu$, respectively.
\end{theorem}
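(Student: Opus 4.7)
The plan is to reduce Theorem~\ref{thm:consistency} to the more general Theorem~\ref{thm:main} proved in the appendix by translating the vanishing of each loss component into the pointwise hypotheses needed there. Since $\LHJ$ and $\LMMD$ are both nonnegative and $\lambda>0$, the assumption $\LHJ(u)+\lambda\LMMD(u)=0$ forces each term to vanish separately, so I can treat them independently.

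From $\LMMD(u)=0$ I would extract the pushforward condition $T_\mu^\nu[u]_\sharp\mu=\nu$: with the negative-distance kernel $k(\bx,\by)=-\|\bx-\by\|_2$ the MMD equals the squared energy distance, which is a proper metric on probability measures with finite first moment (implied by the finite second-moment hypothesis on $\mu,\nu$), so its vanishing is equivalent to equality of the two distributions. From $\LHJ(u)=0$ I would extract the pointwise implicit identity: the integrand is a continuous nonnegative function of $(\bx,t)$ by the $C^1_{\text{loc}}$ regularity of $u$, and combined with strict positivity of $\varrho$ on $\mathbb{R}^d$ and the Lebesgue measure on $[0,t_f]$, this forces the integrand to vanish identically on $\mathbb{R}^d\times[0,t_f]$. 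Specializing to the quadratic Hamiltonian $h(\bp)=\tfrac12\|\bp\|^2$ (so $\nabla h$ is the identity) and evaluating at $t=t_f$ yields $u_0(\bx-t_f\nabla u_1(\bx))=u_1(\bx)-\tfrac{t_f}{2}\|\nabla u_1(\bx)\|^2$ with $u_0:=u(\cdot,0)$ and $u_1:=u(\cdot,t_f)$, which is precisely the implicit relation required by Theorem~\ref{thm:main}.

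Combining these ingredients with the stated bijectivity and the positive-definite-Jacobian hypothesis $D_x T_\nu^\mu[u]=\mathbf{I}_d-t_f\nabla^2 u_1\succ\mathbf{O}_d$, every hypothesis of Theorem~\ref{thm:main} is met, and that theorem directly identifies $T_\mu^\nu[u]$ and $T_\nu^\mu[u]$ as the Wasserstein-$2$ OT maps in their respective directions. The substantive analytical work is thus contained in Theorem~\ref{thm:main} itself; the main obstacle inside that argument is establishing the inverse relation $T_\mu^\nu[u]\circ T_\nu^\mu[u]=\mathrm{Id}$, which is handled by differentiating the implicit identity and cancelling the factor $\mathbf{I}_d-t_f\nabla^2 u_1$ via its strict positive-definiteness to obtain $\nabla u_0(\by-t_f\nabla u_1(\by))=\nabla u_1(\by)$; a Brenier-type sufficient condition then identifies both maps as gradients of convex potentials with finite $L^2$ cost, completing the argument.
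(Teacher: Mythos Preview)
Your proposal is correct and follows essentially the same route as the paper: split the loss into its two nonnegative pieces, use strict positivity of $\varrho$ together with continuity of the integrand to force the implicit HJ identity pointwise (the paper formalizes this via Tonelli plus Lemmas~\ref{lemm:continuity} and~\ref{lemm:f_equals_0}), then specialize at $t=t_f$ and invoke Theorem~\ref{thm:main}. You are in fact slightly more explicit than the paper about why $\LMMD(u)=0$ forces $T_\mu^\nu[u]_\sharp\mu=\nu$ (the paper tacitly assumes the energy-distance metric property), but otherwise the arguments coincide.
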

\begin{proof}
Denote $E(x, t) = (u(x-t_f\nabla u(x, t_f), 0) - u(x, t_f) + \frac{t_f}{2}\|\nabla u(x, t_f)\|^2)^2,$ we have $E\in C(\mathbb{R}^d\times [0, t_f])$, and $E(x, t)\geq 0$ on $\mathbb{R}^d\times[0, t_f]$. Denote $m$ as the Lebesgue measure on $[0, t_f].$ Then we have 
\[ \mathcal L_{\text{HJ}}(u) = \iint_{\mathbb{R}^d\times [0, t_f]} E(x, t) \diff\varrho \diff m = 0. \] 
Tonelli's Theorem \cite{fremlin2000measure} leads to 
\[ \int_{\mathbb{R}^d} \left(\int_0^{t_f} E(x,t) \diff t \right) \diff \varrho(x) = 0. \]
Now, Lemma \ref{lemm:continuity} indicates that $\phi(x):=\int_0^{t_f} E(x,t) \diff t$ is continuous on $\mathbb{R}^d$. And we have $\int_{\mathbb{R}^d} \phi(x) \diff \varrho(x) = 0$. Lemma \ref{lemm:f_equals_0} suggests that $\phi(x)= 0,$ which is $\int_0^{t_f} E(x, t) \diff t = 0, \forall x\in\mathbb{R}^d$. Using a similar argument as presented in the proof of Lemma \ref{lemm:f_equals_0} shows $E(x, t) = 0$ for all $x\in\mathbb{R}^d, t\in[0, t_f]$. 

Now applying Theorem \ref{thm:main} with $u_0=u(\cdot, 0)$ and $u_1=u(\cdot, t_f)$ proves the assertion.
\end{proof}

\begin{remark}[On the Monotonicity Condition]\label{remark:monotonicity_cond}
  The monotonicity condition $D_x T_\nu^\mu[u] \succ \mathbf{O}_d$ is closely related to the $c$-concavity of $u_1$, which provides a sufficient condition in OT theory \citep{villani2008optimal, santambrogio2015optimal}. In practice, our proposed method successfully computes the OT map for the benchmark problems even without explicitly enforcing this condition. Nonetheless, it remains an interesting direction for future research to investigate efficient strategies for enforcing monotonicity and to assess the potential benefits of doing so.
\end{remark}

\begin{lemma}\label{lemm:continuity}
  Suppose $E\subset\mathbb{R}$ is compact. Assume that $f\in C(\mathbb{R}^d\times E)$, then $y(x) := \int_E f(x, t) \diff t$ is continuous with respect to variable $x$.
\end{lemma}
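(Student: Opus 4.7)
The statement is the classical ``continuity under the integral sign'' fact, so the plan is to run the standard $\epsilon$--$\delta$ argument using uniform continuity of $f$ on a suitable compact set. Concretely, I would fix an arbitrary $x_0 \in \mathbb{R}^d$ and aim to show that $y(x) \to y(x_0)$ as $x \to x_0$, which (since $x_0$ is arbitrary) gives continuity on all of $\mathbb{R}^d$.

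First, I would restrict attention to the closed ball $\overline{B}(x_0, 1) \subset \mathbb{R}^d$. Then $\overline{B}(x_0,1) \times E$ is a compact subset of $\mathbb{R}^d \times \mathbb{R}$ (product of two compacts), and by hypothesis $f$ is continuous on it, hence uniformly continuous there by the Heine--Cantor theorem. Given $\epsilon > 0$, uniform continuity yields $\delta \in (0, 1]$ such that $|f(x_1, t) - f(x_2, t)| < \epsilon/(m(E)+1)$ whenever $x_1, x_2 \in \overline{B}(x_0,1)$ with $\|x_1 - x_2\| < \delta$ and $t \in E$, where $m$ denotes Lebesgue measure. Then for any $x$ with $\|x - x_0\| < \delta$, the estimate
\begin{equation*}
|y(x) - y(x_0)| \le \int_E |f(x,t) - f(x_0,t)| \, \diff t \le \frac{\epsilon \, m(E)}{m(E)+1} < \epsilon
\end{equation*}
delivers the desired conclusion.

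There is essentially no genuine obstacle here; the only minor point to watch is the case $m(E) = 0$, which would make $y \equiv 0$ and continuity trivial, so choosing the denominator $m(E)+1$ above (rather than $m(E)$) uniformly handles both cases. Compactness of $E$ guarantees $m(E) < \infty$, so the integral is well defined, and compactness of $\overline{B}(x_0,1) \times E$ is what upgrades pointwise continuity of $f$ to the uniform continuity that drives the bound. No further machinery (e.g.\ dominated convergence) is needed, though one could alternatively invoke it to give a slicker but essentially equivalent proof.
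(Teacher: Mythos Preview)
Your proof is correct. The paper takes a slightly different (though essentially equivalent) route: it also localizes to a closed ball $B_x^r$ around the fixed point, but instead of invoking Heine--Cantor to get uniform continuity and an explicit $\epsilon$--$\delta$ bound, it simply uses that $|f|$ is bounded on the compact set $B_x^r \times E$ and then applies the dominated convergence theorem to pass the limit inside the integral. Your argument is more elementary in that it avoids any measure-theoretic machinery, and it even gives a quantitative modulus of continuity for $y$; the paper's DCT argument is shorter to write but hides the work inside a black box. Amusingly, you already anticipated this alternative in your closing remark.
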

\begin{proof}
  Fix arbitrary $x\in\mathbb{R}^d$, pick an $r>0$, we denote $B_x^r = \{y \ | \ \|y-x\|\leq r\}.$ Since $f$ is continuous on the compact set $B_x^r\times E$, we know that $|f|$ is bounded from above. We can then apply the dominated convergence theorem to show that $\lim_{z\rightarrow x} y(z) = \lim_{z\rightarrow x} \int_E f(z, t) \diff t = \int_{E} f(x, t) \diff t = y(x)$. Thus, $y$ is continuous on $\mathbb{R}^d$.
\end{proof}

Recall that a Borel measure $\sigma$ on $\mathbb{R}^d$ is strictly positive if $\sigma(B)>0$ for any open set $B \subset\mathbb{R}^d$.

\begin{lemma}\label{lemm:f_equals_0}
  Suppose $\sigma\in\mathscr{P}(\mathbb{R}^d)$ is strictly positive, assume $g:\mathbb{R}^d \rightarrow [0, +\infty)$ is continuous function. If $\int_{\mathbb{R}^d} g(x) \diff \sigma(x) = 0$, one has $g(x)=0$ for any $x\in\mathbb{R}^d.$
\end{lemma}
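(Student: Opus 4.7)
The plan is to argue by contradiction. Suppose toward a contradiction that there exists some point $x_0 \in \mathbb{R}^d$ with $g(x_0) > 0$. Set $\alpha := g(x_0)/2 > 0$. Since $g$ is continuous on $\mathbb{R}^d$, there exists a radius $r > 0$ such that $|g(x) - g(x_0)| < \alpha$ for every $x$ in the open ball $B(x_0, r) := \{x \in \mathbb{R}^d : \|x - x_0\| < r\}$. In particular, $g(x) > g(x_0) - \alpha = \alpha$ on $B(x_0, r)$.

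Next, I would exploit the hypothesis that $\sigma$ is strictly positive. Since $B(x_0, r)$ is a nonempty open subset of $\mathbb{R}^d$, strict positivity of $\sigma$ gives $\sigma(B(x_0, r)) > 0$. Combining this with the nonnegativity of $g$ everywhere and the lower bound $g > \alpha$ on the ball yields
\begin{equation*}
\int_{\mathbb{R}^d} g(x) \, \diff \sigma(x) \;\geq\; \int_{B(x_0, r)} g(x) \, \diff \sigma(x) \;\geq\; \alpha \, \sigma(B(x_0, r)) \;>\; 0,
\end{equation*}
which contradicts the assumption $\int_{\mathbb{R}^d} g \, \diff \sigma = 0$. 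Hence no such $x_0$ exists and $g \equiv 0$ on $\mathbb{R}^d$.

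There is no real obstacle here; the argument is a standard measure-theoretic fact and uses only (i) continuity of $g$ to propagate a pointwise positivity statement to an open neighborhood, and (ii) strict positivity of $\sigma$ to convert positive $\sigma$-mass on open sets into a positive contribution to the integral. Both assumptions are essential: dropping continuity allows counterexamples supported on $\sigma$-null singletons, and dropping strict positivity allows $g$ to be positive off the support of $\sigma$.
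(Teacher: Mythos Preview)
Your proof is correct and follows essentially the same argument as the paper: both proceed by contradiction, use continuity of $g$ to find an open ball on which $g \geq g(x_0)/2$, and then invoke strict positivity of $\sigma$ to obtain a positive lower bound on the integral. The paper phrases the lower bound via the indicator function $\chi_{B_{x_0}^\delta}$, but this is only a cosmetic difference.
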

\begin{proof}
  Suppose not true, we have a specific $x_0\in\mathbb{R}^d$ such that $g(x_0) > 0$. Since $g$ is continuous, one can find $\delta > 0$, such that $|g(x)-g(x_0)|<\frac12 g(x_0)$ for arbitrary $x\in B_{x_0}^\delta := \{x\ |\ \|x - x_0\|<\delta_0\}$. Consider the indicator function $\chi_{B_{x_0}^\delta}(x)=\begin{cases}
      1 \quad x\in B_{x_0}^\delta; \\ 
      0 \quad \textrm{otherwise},
  \end{cases}$ one has $g(\cdot)\geq \frac{g(x_0)}{2}\chi_{B_{x_0}^\delta}(\cdot)$ on $\mathbb{R}^d$. This yields $\int_{\mathbb{R}^d} g(x) \diff \sigma(x) \geq \int_{\mathbb{R}^d} \frac{g(x_0)}{2}\chi_{B_{x_0}^\delta}(x) \diff \sigma(x) = \frac{g(x_0){}\sigma(B_{x_0}^\delta)}{2} > 0$, where the last inequality is due to $\sigma$ is strictly positive and the ball $B_{x_0}^\delta$ is an open set. This is a contradiction.
\end{proof}

\subsection{Proof of Theorem \ref{thm:stability}}\label{sec:proof_stability}
Without loss of generality, we we prove the theorem with $t_f=1$. The proof can be generalized to arbitrary time horizon with no difficulty. We first state and prove two auxiliary lemmas.

\begin{lemma}\label{lem:norm2infty}
Let $f(t)$ be a bounded and Lipschitz function on $[0,1]$. Then 
$$\norm[L^\infty]{f} \le C \norm[L^2]{f}^{2/3}.$$
This $C$ only depends on the bound and Lipschitz constant of $f$.
\end{lemma}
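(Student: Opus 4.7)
The plan is to convert an $L^\infty$ bound into an $L^2$ bound by exploiting the Lipschitz continuity: near the point where $|f|$ attains its maximum, $f$ cannot drop too quickly, so a lower bound on $|f|$ holds over an interval whose length scales with the maximum, and integrating this lower bound yields a reverse interpolation inequality.

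\textbf{Setup.} Let $M := \norm[L^\infty]{f}$, let $L$ denote the Lipschitz constant of $f$, and let $B$ denote an $L^\infty$ bound (so $M \le B$); both $L$ and $B$ are finite by hypothesis. Since $f$ is Lipschitz on the compact set $[0,1]$, it is continuous, so there exists $t_0 \in [0,1]$ with $|f(t_0)| = M$. By the Lipschitz property,
\begin{equation*}
|f(t)| \;\ge\; M - L|t - t_0| \;\ge\; \tfrac{M}{2} \qquad \text{for all } t \in [t_0 - \delta, t_0 + \delta], \text{ where } \delta := \tfrac{M}{2L}.
\end{equation*}

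\textbf{Main estimate.} Let $I := [t_0-\delta,\, t_0+\delta] \cap [0,1]$. Since $t_0 \in [0,1]$, the intersection has Lebesgue measure at least $\min(\delta, 1)$. Therefore
\begin{equation*}
\norm[L^2]{f}^2 \;\ge\; \int_I |f(t)|^2 \,\mathrm{d}t \;\ge\; \tfrac{M^2}{4}\,\min(\delta, 1).
\end{equation*}
I now split into two cases according to whether $\delta \le 1$ or $\delta > 1$.

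\textbf{Case 1 ($\delta \le 1$, i.e., $M \le 2L$).} Substituting $\delta = M/(2L)$ gives $\norm[L^2]{f}^2 \ge M^3/(8L)$, hence $M \le (8L)^{1/3} \norm[L^2]{f}^{2/3}$, which is already in the desired form.

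\textbf{Case 2 ($\delta > 1$, i.e., $M > 2L$).} Then $\min(\delta,1) = 1$, so $\norm[L^2]{f}^2 \ge M^2/4$ and thus $M \le 2\norm[L^2]{f}$. To convert to a $2/3$ power, use the trivial interpolation $\norm[L^2]{f} = \norm[L^2]{f}^{2/3} \cdot \norm[L^2]{f}^{1/3} \le \norm[L^2]{f}^{2/3} \cdot B^{1/3}$, giving $M \le 2B^{1/3} \norm[L^2]{f}^{2/3}$.

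Combining the two cases yields the claim with $C = \max\!\bigl((8L)^{1/3},\, 2B^{1/3}\bigr)$, a constant depending only on $L$ and $B$. The only genuine subtlety is the case split: when $M$ is large relative to $L$, the Lipschitz constraint is too weak to give a meaningful interval-length estimate, and one must instead use the ambient boundedness of $f$ to trade one power of $\norm[L^2]{f}$ for a power of $B$. Everything else is a direct integration.
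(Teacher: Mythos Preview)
Your proof is correct and follows essentially the same approach as the paper's: both exploit the Lipschitz property near the maximizer to bound $|f|$ from below on an interval whose length scales with $M/L$, integrate to obtain an $L^2$ lower bound, and then split into cases depending on whether $M$ is small or large relative to $L$, using the ambient $L^\infty$ bound in the latter case to pass from $\|f\|_{L^2}$ to $\|f\|_{L^2}^{2/3}$. The only cosmetic differences are that the paper integrates the linear tent $(K-Lt)^2$ directly rather than the flat lower bound $M/2$, and uses the threshold $K\gtrless L$ instead of your $M\gtrless 2L$.
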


\begin{proof}
Let $K = \norm[L^\infty]{f}$, and $L$ be the Lipschitz constant of $f$. We split into two cases.

\emph{Case 1.} If $K \ge L$, then by the Lipschitz condition of $f$, we have
$$\int_0^1 f(t)^2 \,\rd t \ge \int_0^1 (K-Lt)^2 \,\rd t = K^2 - LK + \frac13 L^2 \ge \frac13 K^2.$$

\emph{Case 2.} If $K < L$, since $f$ is $L$-Lipschitz, $f(t)$ is non-zero for an interval of length at least $K/L$, and
$$\int_0^1 f(t)^2 \,\rd t  \ge \int_0^{K/L} (K-Lt)^2 \,\rd t = \dfrac{K^3}{3L}.$$

In both cases, we can conclude that
$$\norm[L^\infty]{f} \le C \norm[L^2]{f}^{2/3}.$$
\end{proof}

Next, we present a lemma that shows the stability of the MMD with respect to the mean and covariance of Gaussian distributions. The MMD with kernel $k(\bx,\by) = - \norm[]{x-y}$ is known as the energy distance, which is extensively studied in \cite{szekely2013energy,rizzo2016energy}.

\begin{lemma}[Stability of MMD on Gaussian]\label{lem:MMD_Gaussian}
Let $\mu \sim N(\bb_\mu,\Sigma_\mu)$ and $\nu \sim N(\bb_\nu, \Sigma_\nu)$ be two Gaussian distributions in $\R^d$, with $\norm[]{\bb_\mu}, \norm[]{\bb_\nu}, \norm[2]{\Sigma_\mu}, \norm[2]{\Sigma_\nu} \le K$. Then, there exists a constant $C$ that only depends on $d$ and $K$ s.t.
\begin{equation}\label{eq:MMD_Gaussain_bound}
\norm[]{\bb_\mu-\bb_\nu}^2 + \norm[2]{\Sigma_\mu-\Sigma_\nu}^2 \le C \, \MMD(\mu, \nu)^2.
\end{equation}
\end{lemma}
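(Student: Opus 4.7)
The plan is to exploit the spectral (Fourier) representation of the energy distance. Using the identity
\[ \|x-y\| = c_d \int_{\R^d} \frac{1-\cos(\xi^\top(x-y))}{\|\xi\|^{d+1}} \diff \xi \]
(with $c_d>0$ a dimension-only constant) together with $\int_{\R^d}\diff(\mu-\nu)=0$, one obtains
\[ \MMD(\mu,\nu)^2 = c_d \int_{\R^d} \frac{|\hat\mu(\xi) - \hat\nu(\xi)|^2}{\|\xi\|^{d+1}} \diff \xi. \]
Substituting the Gaussian characteristic functions $\hat\mu(\xi)=\exp(i\bb_\mu^\top \xi - \tfrac12 \xi^\top \Sigma_\mu \xi)$ and analogously for $\nu$, and writing $\bar\Sigma=(\Sigma_\mu+\Sigma_\nu)/2$, $\Delta=\Sigma_\mu-\Sigma_\nu$, $\delta_\bb=\bb_\mu-\bb_\nu$, a direct computation gives
\[ |\hat\mu(\xi)-\hat\nu(\xi)|^2 = 2\, e^{-\xi^\top \bar\Sigma \xi}\bigl[\cosh(\tfrac12 \xi^\top \Delta \xi) - \cos(\delta_\bb^\top \xi)\bigr]. \]

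Next I would localize the integral to a ball $B_R(0)$ whose radius $R=R(K)>0$ is small enough that on $B_R(0)$ both $|\delta_\bb^\top \xi|\le 1$ and $|\xi^\top \Delta \xi|\le 1$ (feasible since $\|\delta_\bb\|, \|\Delta\|_2 \le 2K$) and also $e^{-\xi^\top \bar\Sigma \xi}\ge e^{-1/2}$. The elementary inequalities $1-\cos u \ge u^2/4$ on $|u|\le 1$ and $\cosh u - 1 \ge u^2/2$ then yield the pointwise lower bound
\[ |\hat\mu(\xi)-\hat\nu(\xi)|^2 \ge c_1 \bigl[(\delta_\bb^\top \xi)^2 + (\xi^\top \Delta \xi)^2\bigr] \quad\text{on } B_R(0), \]
with $c_1>0$ depending only on $K$.

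Polar coordinates $\xi=r\omega$ with $\omega\in S^{d-1}$ then reduce the two resulting integrals to
\[ \int_{B_R(0)} \frac{(\delta_\bb^\top \xi)^2}{\|\xi\|^{d+1}}\diff\xi = \frac{R\,|S^{d-1}|}{d}\|\delta_\bb\|^2, \qquad \int_{B_R(0)} \frac{(\xi^\top \Delta \xi)^2}{\|\xi\|^{d+1}}\diff\xi = \frac{R^3}{3}\int_{S^{d-1}} (\omega^\top \Delta \omega)^2\diff\sigma(\omega). \]
Diagonalizing $\Delta$ with eigenvalues $(\lambda_i)$, the spherical integral on the right becomes a quadratic form in $(\lambda_i)$ whose coefficients depend only on $a:=\int_{S^{d-1}}\omega_1^4\diff\sigma$ and $b:=\int_{S^{d-1}}\omega_1^2\omega_2^2\diff\sigma$; from $\int_{S^{d-1}}(\omega_1^2-\omega_2^2)^2\diff\sigma>0$ one gets $a>b$, and a short rearrangement yields $\int_{S^{d-1}}(\omega^\top \Delta \omega)^2\diff\sigma \ge (a-b)\|\Delta\|_F^2$. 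Combining all pieces and using $\|\Delta\|_2\le\|\Delta\|_F$ delivers the desired inequality with $C=C(d,K)$.

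The principal obstacle is the last estimate—controlling $\int_{S^{d-1}}(\omega^\top \Delta \omega)^2\diff\sigma$ from below by $\|\Delta\|_F^2$—which amounts to establishing strict positive-definiteness of a specific quartic form in the entries of the symmetric matrix $\Delta$. The key fact driving it is the strict inequality $a>b$ between the fourth and mixed-second moments of the uniform measure on $S^{d-1}$; once this is in hand the rest is a straightforward manipulation of traces and Frobenius norms.
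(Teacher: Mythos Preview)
Your proof is correct and, in fact, somewhat cleaner than the paper's. Both arguments start from the same spectral representation of the energy distance, but they diverge in how they extract the mean and covariance contributions from $|\hat\mu(\xi)-\hat\nu(\xi)|^2$. The paper treats the two separately via a geometric observation: the four complex numbers $\exp(i\bb_j^\top\xi-\tfrac12\xi^\top\Sigma_k\xi)$ form an isosceles trapezoid, and the diagonal dominates both the average of the parallel sides (giving the mean term) and each leg (giving the covariance term). For the covariance it then restricts to a spherical cap around the top eigenvector of $\Sigma_\mu-\Sigma_\nu$ to recover $\|\Sigma_\mu-\Sigma_\nu\|_2^2$. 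Your route---expanding $|\hat\mu-\hat\nu|^2=2e^{-\xi^\top\bar\Sigma\xi}[\cosh(\tfrac12\xi^\top\Delta\xi)-\cos(\delta_\bb^\top\xi)]$ and then writing this as $(\cosh-1)+(1-\cos)$---gets both contributions in one shot and is more transparent. Your treatment of the spherical integral via the moments $a=\int\omega_1^4$ and $b=\int\omega_1^2\omega_2^2$ is also tighter: the identity $\int_{S^{d-1}}(\omega^\top\Delta\omega)^2\,d\sigma=(a-b)\|\Delta\|_F^2+b(\operatorname{tr}\Delta)^2$ immediately yields a Frobenius-norm lower bound (hence also the spectral-norm bound in the statement), whereas the paper's cap argument only captures the largest eigenvalue. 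The tradeoff is that the paper's trapezoid step is reusable for non-Gaussian comparisons, while your exact identity for $|\hat\mu-\hat\nu|^2$ is specific to Gaussians. Two minor remarks: the constraint $|\xi^\top\Delta\xi|\le 1$ is not actually needed since $\cosh u-1\ge u^2/2$ holds globally; and your ``principal obstacle'' is entirely standard---the strict inequality $a>b$ follows exactly as you say from $\int_{S^{d-1}}(\omega_1^2-\omega_2^2)^2\,d\sigma>0$.
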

\begin{remark}
The boundedness assumption is necessary. As a counterexample, let $\mu \sim N(0,\sigma^2)$ and $\nu \sim N(1,\sigma^2)$. Let $\sigma^2 \to \infty$, $|b_\mu-b_\nu|=|0-1|=1$ remains unchanged, while $\MMD(\mu,\nu)$ converges to $0$ (see \eqref{eq:MMD_Fourier}).
\end{remark}
\begin{proof}
Throughout the proof, we will use the notations $c$ and $C$ to denote positive constants that only depends on $d$ and $K$. These constants may change from line to line. We start by recalling an important result. Let $\phi_\mu(\bt), \phi_\nu(\bt):\R^d \to \R$ be the characteristic functions of $\mu$, $\nu$
$$\phi_j(\bt) = \exp\parentheses{i \bb_j\tp \bt - \frac12 \bt\tp \Sigma_j \bt} \quad (j=\mu,\nu).$$
Then \cite[Proposition 1]{szekely2013energy} establishes that
\begin{equation}\label{eq:MMD_Fourier}
\MMD(\mu,\nu)^2 = \Gamma\parentheses{\frac{d+1}{2}} \pi^{-\frac{d+1}{2}} \int_{\R^d} \dfrac{\abs{\phi_\mu(\bt) - \phi_\nu(\bt)}^2}{\norm[]{\bt}^{d+1}} \,\rd \bt.
\end{equation}
Next, we split into two steps and give bounds for the mean and covariance separately.

\medskip\noindent
\textbf{Step 1.} We give bounds for the mean. For any given $\bt\in\R^d$, the four complex numbers $\exp\parentheses{i \bb_j\tp \bt - \frac12 \bt \Sigma_k \bt}$ ($j,k=\mu,\nu$) forms a isosceles trapezoid in the complex domain. In an isosceles trapezoid, the length of each diagonal is greater than or equal to the arithmetic mean of the lengths of the two parallel sides. Therefore,
\begin{align*}
& \quad \abs{\phi_\mu(\bt) - \phi_\nu(\bt)} = \abs{\exp\parentheses{i \bb_\mu\tp \bt - \frac12 \bt\tp \Sigma_\mu \bt} - \exp\parentheses{i \bb_\nu\tp \bt - \frac12 \bt\tp \Sigma_\nu \bt}}\\
& \ge \frac12 \sqbra{\exp\parentheses{- \frac12 \bt\tp \Sigma_\mu \bt} + \exp\parentheses{- \frac12 \bt\tp \Sigma_\nu \bt}} \abs{1 - \exp\parentheses{i(\bb_\mu-\bb_\nu)\tp \bt}} \\
& \ge \exp(-C\norm[]{\bt}^2) \abs{1 - \cos((\bb_\mu-\bb_\nu)\tp \bt) - i \sin((\bb_\mu-\bb_\nu)\tp \bt)}.
\end{align*}
Substitute this estimation into \eqref{eq:MMD_Fourier}, we obtain
\begin{align*}
\MMD(\mu,\nu)^2 & \ge c \int_{\R^d} \dfrac{\exp(-C\norm[]{\bt}^2)}{\norm[]{\bt}^{d+1}}  \abs{1 - \cos((\bb_\mu-\bb_\nu)\tp \bt) - i \sin((\bb_\mu-\bb_\nu)\tp \bt)}^2\,\rd \bt \\
& = 4c \int_{\R^d} \dfrac{\exp(-C\norm[]{\bt}^2)}{\norm[]{\bt}^{d+1}} \, \sin^2\parentheses{\frac12 (\bb_\mu-\bb_\nu)\tp \bt} \,\rd \bt.
\end{align*}
Since $\norm[]{\bb_\mu-\bb_\nu}\le 2K$ is bounded, we can find $r>0$ that only depends on $K$ such that $\sin(x) \ge \frac12 x$ for all $x \in [0, \frac12 \norm[]{\bb_\mu-\bb_\nu} r]$. Denote $B_r$ the ball in $\R^d$ centered at the origin with radius $r$. we have
\begin{align*}
\MMD(\mu,\nu)^2 
& \ge c \int_{B_r} \dfrac{\exp(-C\norm[]{\bt}^2)}{\norm[]{\bt}^{d+1}} \, \sin^2\parentheses{\frac12 (\bb_\mu-\bb_\nu)\tp \bt} \,\rd \bt \\
& \ge c \int_{B_r} \dfrac{\exp(-Cr^2)}{\norm[]{\bt}^{d+1}} \parentheses{\frac14 (\bb_\mu-\bb_\nu)\tp \bt}^2 \,\rd \bt \\
& \ge c \int_{B_r} \dfrac{1}{\norm[]{\bt}^{d+1}} \abs{(\bb_\mu-\bb_\nu)\tp \bt}^2 \,\rd \bt.
\end{align*}
If we further restrict the $3D$ angle of $\bt$ in the set $\wt{B}_r = \{ \bt\in B_r ~:~ |(\bb_\mu-\bb_\nu)\tp \bt| \ge \frac12 \norm[]{\bb_\mu-\bb_\nu} \norm[]{\bt} \}$, i.e., the angle between $\bt$ and $\bb_\mu-\bb_\nu$ is close to $0$ or $\pi$. Then,
\begin{align*}
& \quad \MMD(\mu,\nu)^2 \ge c \int_{\wt{B}_r} \dfrac{1}{\norm[]{\bt}^{d+1}} \abs{(\bb_\mu-\bb_\nu)\tp \bt}^2 \,\rd \bt\\
& \ge \frac14 c \int_{\wt{B}_r} \dfrac{1}{\norm[]{\bt}^{d-1}} \norm[]{\bb_\mu-\bb_\nu}^2 \,\rd \bt = c \norm[]{\bb_\mu-\bb_\nu}^2.
\end{align*}

\medskip\noindent
\textbf{Step 2.} We give bounds for the covariance. This time, we use the fact that the length of each diagonal in a isosceles trapezoid is greater than or equal to the length of either of the non-parallel (equal) sides. This gives
\begin{align*}
& \quad \abs{\phi_\mu(\bt) - \phi_\nu(\bt)} = \abs{\exp\parentheses{i \bb_\mu\tp \bt - \frac12 \bt\tp \Sigma_\mu \bt} - \exp\parentheses{i \bb_\nu\tp \bt - \frac12 \bt\tp \Sigma_\nu \bt}}\\
& \ge \abs{\exp\parentheses{- \frac12 \bt\tp \Sigma_\mu \bt} - \exp\parentheses{- \frac12 \bt\tp \Sigma_\nu \bt}} \\
& = \exp\parentheses{- \frac12 \bt\tp \Sigma_\mu \bt} \abs{1 - \exp\parentheses{- \frac12 \bt\tp (\Sigma_\nu-\Sigma_\mu) \bt}} \\
& \ge \exp(-C \norm[]{\bt}^2) \abs{- \frac12 \bt\tp (\Sigma_\nu-\Sigma_\mu) \bt},
\end{align*}
where we used $|1-e^x| \ge |x|$ in the last inequality. Next, we diagonalize the symmetric matrix $\Sigma_\nu - \Sigma_\mu$ as $Q (\Sigma_\nu - \Sigma_\mu) Q \tp = \Lam = \diag(\lam_j)_{j=1}^d$. Without loss of generality, we assume that $|\lam_1| = \norm[2]{\Sigma_\nu - \Sigma_\mu}$. We denote $\bS^{d-1} \subset \R^d$ the unit sphere in $\R^d$. Substituting the estimation above into \eqref{eq:MMD_Fourier}, we get 
\begin{align*}
& \quad \MMD(\mu, \nu)^2  \\
& \ge c \int_{\R^d} \dfrac{\exp(-C\norm[]{\bt}^2)}{\norm[]{\bt}^{d+1}}  \parentheses{\bt\tp (\Sigma_\nu-\Sigma_\mu) \bt}^2 \,\rd \bt \\
& \ge c \int_{B_1} \dfrac{1}{\norm[]{\bt}^{d+1}}  \parentheses{\bt\tp (\Sigma_\nu-\Sigma_\mu) \bt}^2 \,\rd \bt \\
& = c \int_0^1 \int_{\bS^{d-1}} R^{-d-1} R^4  \parentheses{\bs\tp (\Sigma_\nu-\Sigma_\mu) \bs}^2 R^{d-1} \,\rd \bs \, \rd R \\
& = c \int_{\bS^{d-1}} \parentheses{\bs\tp (\Sigma_\nu-\Sigma_\mu) \bs}^2 \,\rd \bs = c \int_{\bS^{d-1}} \parentheses{\bs\tp \Lambda \bs}^2 \,\rd \bs.
\end{align*}
We further pick a subset $\wt{\bS}^{d-1} = \{ \bs \in \bS^{d-1} ~:~ |\bs_1|^2 \ge \frac23\}$ (i.e., points on the unit sphere with the first coordinate $\ge\frac23$). For all $\bs \in \wt{\bS}^{d-1}$
$$\abs{\bs\tp \Lambda \bs} = \abs{\sum_{j=1}^d \lam_j \bs_j^2} \ge |\lam_1| \bs_1^2 -\sum_{j=2}^d \abs{\lam_j} \bs_j^2 \ge \frac13 |\lam_1|.$$
Therefore,
$$\MMD(\mu, \nu)^2  \ge c \int_{\wt\bS^{d-1}} \parentheses{\bs\tp \Lambda \bs}^2 \,\rd \bs \ge \frac19 c \int_{\wt\bS^{d-1}} \lam_1^2 \,\rd \bs = c \norm[2]{\Sigma_\mu-\Sigma_\nu}^2.$$

\medskip\noindent
Finally, combining \textbf{Step 1} and \textbf{Step 2}, we reach the conclusion that 
$$\norm[]{\bb_\mu-\bb_\nu}^2 + \norm[2]{\Sigma_\mu-\Sigma_\nu}^2 \le C \, \MMD(\mu, \nu)^2.$$
\end{proof}

Before proving Theorem \ref{thm:stability}, we clarify the result we need to show.  The HJ equation is 
$$\pt u(\bx,t) + \frac12 \abs{\nx u(\bx,t)}^2 = 0.$$
The optimal push forward map is $T^*(\bx) = \bx + \nx u(\bx,0)$, which implies $\nx u(\bx,0) = T^*(\bx) - \bx$. The optimal trajectory for OT has constant velocity, given by
$$\bx_t = \bx + t \nx u (\bx,0) = \bx - t(\bx-T^*(\bx)).$$
Therefore, the optimal push forward map is 
$$f(\bx,t) = (1-t)\bx + tT^*(\bx) = \parentheses{(1-t)I + At}\bx + (\bb_\nu - A \bb_\mu)t.$$
Taking derivative in $t$, we get the optimal velocity in Lagrange coordinate
\begin{align*}
& \quad \pt f(\bx,t) = -\bx + T^*(\bx) = \dfrac{1}{t}(f(\bx,t)-\bx) \\
& = \dfrac{1}{t} \sqbra{ f(\bx,t) - \parentheses{(1-t)I + At}^{-1} \parentheses{f(\bx,t) - (\bb_\nu - A \bb_\mu)t}} \\
& = \parentheses{I + t(A-I)}^{-1} \parentheses{(A-I)f(\bx,t) + \bb_\nu - A \bb_\mu}.
\end{align*}
Therefore, the optimal velocity field in Eulerian coordinate is
\begin{equation}\label{eq:Gaussian_OT_velocity}
\nx u(\bx,t) =v(\bx,t) = \parentheses{I + t(A-I)}^{-1} \parentheses{(A-I)\bx + \bb_\nu - A \bb_\mu}.
\end{equation}

The most important challenge for convergence analysis is the term $u_\theta(\bx-t\nx u_\theta(\bx,t),0)$ in the HJ loss, which contains the composition of the $u$. In order to address this issue, we consider the quadratic parametrization
\begin{equation}\label{eq:u_parametrization}
u_\theta(\bx,t) = -\parentheses{\frac12 \bx\tp \theta_2(t) \bx + \theta_1(t)\tp \bx + \theta_0(t)}
\end{equation}
where $\theta = [\theta_2(\cdot), \theta_1(\cdot), \theta_0(\cdot)]:[0,t_f] \to \R^{d\times d}_{\text{sym}} \times \R^d \times \R$ is bounded and Lipschitz. According to \eqref{eq:Gaussian_OT_velocity}, the optimal $\theta_2^*(t)$ and $\theta_1^*(t)$ are uniquely determined, and the optimal $\theta_0^*(t)$ is uniquely determine up to an additive constant.
\begin{equation}\label{eq:optimal_u210}
\begin{aligned}
\theta_2^*(t) &= \parentheses{(1-t)I + At}^{-1} (I-A)\\
\theta_1^*(t) &= \parentheses{(1-t)I + At}^{-1} (A \bb_\mu - \bb_\nu)\\
\theta_0^*(t) &= \theta_0^*(0) + \frac{t}{2} (\bb_\nu-A\bb_\mu)\tp \parentheses{(1-t)I + At}^{-1} (\bb_\nu-A\bb_\mu)\\
\end{aligned}.
\end{equation}

Now we are ready to prove Theorem \ref{thm:stability}. We denote $D = [-1,1]^d$. Throughout the proof, we will set $\varrho = 2^{-d} \mathbbm{1}_{D}$ and the time domain is $[0,t_f]=[0,1]$, which coincide with our numerical implementation. The proof can be extended to general domain without essential difficulty. Throughout the proof, when we say a function is bounded and Lipschitz continuous, we mean the bound and Lipschitz constant only depends on $d$, $\lam_A$, and $K$. We will use $C$ to denote an absolute constant that only depends on $d$, $\lam_A$, and $K$. The value of $C$ may change from line to line.

\begin{proof}[Proof for theorem \ref{thm:stability}]
We only need to show \eqref{eq:main_result} when $\LHJ$ and $\LMMD$ are sufficiently small. The proof consists of four steps.

\medskip

\medskip
\textbf{Step 1.} We analyze the MMD loss in this step. Recall the MMD loss is
$$\LMMD=\int_\Omega k(\bx,\by) \,\rd((\textrm{Id} + \nx u_\theta(\cdot,0))_\# \mu - \nu)(\bx) \,\rd((\textrm{Id} + \nx u_\theta(\cdot,0))_\# \mu - \nu)(\by).$$
Under with the parametrization \eqref{eq:u_parametrization}, the MMD loss is between 
$$(\textrm{Id} + \nx u(\cdot,0))_\# \mu = N\parentheses{(I-\theta_2(0))\bb_\mu - \theta_1(0), \, (I-\theta_2(0)) \Sigma_\mu (I-\theta_2(0))} =: N(\bb_\mu',\Sigma_\mu')$$ 
and $\nu = N(\bb_\nu,\Sigma_\nu)$. By Lemma \ref{lem:MMD_Gaussian}, we have
\begin{equation}\label{eq:Lemma_MMD}
\norm[]{\bb_\mu'-\bb_\nu}^2 + \norm[2]{\Sigma_\mu'-\Sigma_\nu}^2 \le C \LMMD
\end{equation}
Multiplying $\Sigma_\mu^{\frac12}$ on both sides for the covariance, we get
\begin{equation*}
\norm[2]{\parentheses{\Sigma_\mu^{\frac12}(I-\theta_2(0))\Sigma_\mu^{\frac12}}^2 - \Sigma_\mu^{\frac12} \Sigma_\nu \Sigma_\mu^{\frac12}} \le C \LMMD^{\frac12}.
\end{equation*}
By the $\frac12$-H\"older continuity of matrix square root in operator norm \cite[Theorem X.1.1]{bhatia2013matrix} ($\norm[2]{A^{\frac12}-B^{\frac12}} \le \norm[2]{A-B}^{\frac12}$ for any symmetric positive definite matrix $A,B$), we have 
\begin{equation}\label{eq:step1_temp1}
\norm[2]{\sqbra{\parentheses{\Sigma_\mu^{\frac12}(I-\theta_2(0))\Sigma_\mu^{\frac12}}^2}^{\frac12} - \parentheses{\Sigma_\mu^{\frac12} \Sigma_\nu \Sigma_\mu^{\frac12}}^{\frac12}} \le C \LMMD^{\frac14}.
\end{equation}

Next, we diagonalize $\theta_2(t)$. Since $\theta_2(t) \in \R^{d\times d}$ is symmetric, we can find unitary matrix $Q(t)$ and diagonal matrix $\Lam(t) = \diag(\{\lam_i(t)\}_{i=1}^d)$ s.t.
$$\theta_2(t) = Q(t) \Lam(t) Q(t)\tp,$$
and $\lam_1(0) \ge \ldots \ge \lam_d(0)$. The column vectors of $Q(t)$ are the orthonormal eigenvectors of $\theta_2(t)$. Since $\theta_2(t)$ is bounded and Lipschitz continuous in $t$, its eigenvalues and eigenvectors are also bounded and Lipschitz continuous in $t$ by  Weyl's inequality. $Q(t)$ is uniquely defined up to a sign shift for each column. Then $I-\theta_2(0) = Q(0)(I - \Lam(0))Q(0)\tp$.

Next, we define a notation of ``absolute value'' for symmetric matrices. Given a symmetric matrix, we diagonalize it through unitary transform, take absolute value of the diagonal element, and then apply the inverse unitary transform back. As a result, $\abs{I-\Lam(t)} = \diag(\{|1-\lam_i(t)|\}_{i=1}^d)$ and $\abs{I-\theta_2(0)} = Q(0)\abs{I-\Lam(t)}Q(0)\tp$, then we have
$$\sqbra{\parentheses{\Sigma_\mu^{\frac12}(I-\theta_2(0))\Sigma_\mu^{\frac12}}^2}^{\frac12} = \Sigma_\mu^{\frac12}\abs{I-\theta_2(0)}\Sigma_\mu^{\frac12}.$$
Here, we remark that while $Q(t)$ is not uniquely defined, the ``absolute value'' $\abs{I-\theta_2(0)}$ is uniquely defined given $\theta_2(0)$. Plugging this expression into \eqref{eq:step1_temp1}, we get
$$\norm[2]{\Sigma_\mu^{\frac12}\abs{I-\theta_2(0)}\Sigma_\mu^{\frac12} - \parentheses{\Sigma_\mu^{\frac12} \Sigma_\nu \Sigma_\mu^{\frac12}}^{\frac12}} \le C \LMMD^{\frac14}.$$
Multiplying $\Sigma_\mu^{-\frac12}$ on both sides, we get
\begin{equation}\label{eq:step1_result1}
\norm[2]{\abs{I-\theta_2(0)} - A} \le C \LMMD^{\frac14},
\end{equation}
which implies
\begin{equation}\label{eq:step1_result2}
\norm[2]{\abs{I-\Lam(0)} - Q(0)AQ(0)\tp} \le C \LMMD^{\frac14}.
\end{equation}
Therefore, the off diagonal elements of $ (Q(0)AQ(0)\tp)_{ij}$ ($i \neq j$) and diagonal elements $ (Q(0)AQ(0)\tp)_{ii}$ satisfy
$$\abs{(Q(0)AQ(0)\tp)_{ij}} \,,\, \abs{ (Q(0)AQ(0)\tp)_{ii} - |1 - \lam_i(0)|} \le C_1 \LMMD^{\frac14}.$$
Here, we add a subscript $1$ in the constant $C_1$ in order to keep track of this constant. Later, whenever we use $C_1$, it means this fixed constant that does not change from line to line.

We remark that there are $2^d$ choices of $\theta_2(0)$ such that $(I-\theta_2(0)) \Sigma_\mu (I-\theta_2(0)) = \Sigma_\nu$ through letting $1-\lam_i(0)=\pm \lam^A_i$ ($i=1,\ldots,d$), where $\lam^A_i$ is the $i$-th eigenvalue of $A$. All these choices gives a push forward map that transport $\mu$ to $\nu$ (if we set $\theta_1(0) = (I-\theta_2(0))\bb_\mu - \bb_\nu$).
However, only $\theta_2(0) = I-A$ gives the OT map. The MMD loss $\LMMD$ cannot distinguish these choices, so the HJ loss $\LHJ$ is necessary.

\medskip
\textbf{Step 2.} We analyze the implicit HJ loss in this step. Under the parametrization \eqref{eq:u_parametrization}, the HJ loss is
\begin{align*}
\mL_{\text{HJ}} & = \int_0^1 \int_\Omega \Big[ \frac12 \bx\tp \theta_2(t) \bx + \bx\tp \theta_1(t) + \theta_0(t) + \frac{t}{2} \parentheses{\theta_2(t)\bx + \theta_1(t)}\tp \parentheses{\theta_2(t)\bx + \theta_1(t)} \\
& \quad\quad -\frac12 \parentheses{\bx + t(\theta_2(t)\bx + \theta_1(t))}\tp \theta_2(0) \parentheses{\bx + t(\theta_2(t)\bx + \theta_1(t))} \\
& \quad \quad -\theta_1(0)\tp \parentheses{\bx + t(\theta_2(t)\bx + \theta_1(t))} - \theta_0(0) \Big]^2 \varrho(\bx) \,\rd \bx\,\rd t.
\end{align*}
Reorganizing the terms, we have
\begin{align*}
\mL_{\text{HJ}} & = \int_0^1 \int_\Omega \bigg[ \frac12 \bx\tp \parentheses{\theta_2(t) + t\theta_2(t)^2 - (I+t\theta_2(t)) \theta_2(0) (I+t\theta_2(t)) } \bx \\
& \quad\quad + \bx\tp \parentheses{I + t\theta_2(t)}\parentheses{\theta_1(t) - t \theta_2(0)\theta_1(t) - \theta_1(0)} \\
& \quad \quad +\parentheses{\theta_0(t) + \frac{t}{2} \theta_1(t)\tp \theta_1(t) - \frac{t^2}{2} \theta_1(t)\tp \theta_2(0) \theta_1(t) - t \theta_1(t)\tp \theta_1(0) - \theta_0(0)} \bigg]^2 \varrho(\bx) \,\rd \bx\,\rd t. \\
& =: \int_0^1 \int_\Omega \sqbra{ \frac12 \bx\tp \Ge(t)\bx + \bx\tp \Gy(t) + \Gl(t) }^2 \varrho(\bx) \,\rd \bx\,\rd t.
\end{align*}
We observe that $\Ge(t)$ is symmetric. The integration in $\bx$ for the loss can be computed directly. The zero-th to third order integration in $\bx$ can be easily obtained by symmetry (recall $\varrho(x) = 2^{-d} \mathbbm{1}_{D}(x)$ and $D=[-1,1]^d$)
$$\int_D (1,x_i,x_ix_j,x_ix_jx_k) \, \rd \bx = \parentheses{2^d, 0, \frac{2^d \delta_{ij}}{3},0}.$$
In order to compute the fourth order integration in $\bx$, we temporally denote $\Ge(t)$ by $\Gamma$ for notational simplicity and compute the integration $\int_D (\bx\tp\Gamma \bx)^2 \,\rd \bx$. Expanding everything, the integration is
$$\int_D (\bx\tp\Gamma \bx)^2 \,\rd \bx = \sum_{i,j,k,l=1}^d \Gamma_{ij}\Gamma_{kl} \int_D x_ix_jx_kx_l \,\rd \bx.$$
All the non-zero terms in the form $\Gamma_{ij}\Gamma_{kl} \int_D x_ix_jx_kx_l \,\rd \bx$ can be categorized into $4$ cases.
\begin{enumerate}
\item $i=j=k=l$. The integration is $\Gamma_{ii}^2 \int_D x_i^4 \,\rd \bx = \frac{2^d}{5} \Gamma_{ii}^2$.
\item $i=j\neq k=l$. The integration is $\Gamma_{ii}\Gamma_{kk} \int_D x_i^2x_k^2 \,\rd \bx = \frac{2^d}{9} \Gamma_{ii}\Gamma_{kk}$.
\item $i=k\neq j=l$. The integration is $\Gamma_{ij}^2 \int_D x_i^2x_j^2 \,\rd \bx = \frac{2^d}{9} \Gamma_{ij}^2$.
\item $i=l\neq j=k$. The integration is $\Gamma_{ij}\Gamma_{ji} \int_D x_i^2x_j^2 \,\rd \bx = \frac{2^d}{9} \Gamma_{ij}\Gamma_{ji} = \frac{2^d}{9} \Gamma_{ij}^2$.
\end{enumerate}
Summing them together, we have
$$\int_D (\bx\tp\Gamma \bx)^2 \,\rd \bx = 2^d \sqbra{\sum_{i=1}^d \frac{1}{5} \Gamma_{ii}^2 + \sum_{i\neq j} \parentheses{\frac19 \Gamma_{ii}\Gamma_{jj} + \frac29 \Gamma_{ij}^2}}.$$
Therefore, after integration in $\bx$, the implicit HJ loss becomes 
\begin{align*}
\mL_{\text{HJ}} & =\int_0^1 \bigg[\frac14\sum_{i=1}^d\frac15 \Ge(t)_{ii}^2 + \frac14\sum_{i\neq j} \parentheses{\frac19 \Ge(t)_{ii}\Ge(t)_{jj} + \frac29 \Ge(t)_{ij}^2} \\
& \quad \quad \quad + \frac13 \Tr(\Ge(t))\Gl(t) + \frac13 \norm[]{\Gy(t)}^2 + \Gl(t)^2\bigg] \,\rd t \\
& = \int_0^1 \sqbra{ \frac{1}{45} \sum_{i=1}^d \Ge(t)_{ii}^2  + \frac{1}{18} \sum_{i\neq j} \Ge(t)_{ji}^2 + \frac13 \norm[]{\Gy(t)}^2 + \parentheses{\frac16 \Tr(\Ge(t)) + \Gl(t)}^2} \,\rd t \\
& \ge \int_0^1 \sqbra{\frac{1}{45} \norm[F]{\Ge(t)}^2 + \frac13 \norm[]{\Gy(t)}^2 + \parentheses{\frac16 \Tr(\Ge(t)) + \Gl(t)}^2} \,\rd t
\end{align*}
Therefore,
$$\int_0^1 \parentheses{\norm[F]{\Ge(t)}^2 +  \norm[]{\Gy(t)}^2 + \Gl(t)^2} \rd t \le C \mL_{\text{HJ}}.$$
Therefore, by Lemma \ref{lem:norm2infty}, we have
\begin{equation}\label{eq:Gamma_infinity}
\max_{t} \norm[F]{\Ge(t)} + \max_{t} \norm[]{\Gy(t)}+ \max_{t}\abs{\Gl(t)} \le C_2 \mL_{\text{HJ}}^{\frac13}.
\end{equation}
Here, $C_2$ also does not change from line to line.

\medskip
\textbf{Step 3.} In this step, we show that $\theta_2(0)$ must be close to $\theta_2^*(0) = I-A$, provided that $\norm[F]{\Ge(t)}$ is sufficiently small. Since $A$ has minimum eigenvalue $\lam_A > 0$, $(Q(0)AQ(0)\tp)_{ii} \ge \lam_A \ge C_1 \LMMD^{\frac14}$, where the last inequality is because $\LMMD$ is sufficiently small. We recall that in \textbf{Step 1}, we showed for any $i=1,\ldots,d$
$$\abs{ (Q(0)AQ(0)\tp)_{ii} - |1 - \lam_i(0)|} \le C_1 \LMMD^{\frac14}.$$ 
We want to show that,
\begin{equation}\label{eq:step3_goal}
\abs{ (Q(0)AQ(0)\tp)_{ii} - (1 - \lam_i(0))} \le C_1 \LMMD^{\frac14}
\end{equation}
for all $i$. I.e., we want to show $1 - \lam_i(0) \ge 0$ and 
$$\lam_i(0) \in \sqbra{1- (Q(0)AQ(0)\tp)_{ii} - C_1 \LMMD^{\frac14}, 1- (Q(0)AQ(0)\tp)_{ii} + C_1 \LMMD^{\frac14}}$$
for all $i$. In order to show this, we assume to the contrary that $\lam_1(0) > 1$ (recall $\lam_1(0) \ge \ldots\ge \lam_d(0)$) and
\begin{equation}\label{eq:step3_contra}
\lam_1(0) \in \sqbra{1+ (Q(0)AQ(0)\tp)_{11} - C_1 \LMMD^{\frac14}, 1+ (Q(0)AQ(0)\tp)_{11} + C_1 \LMMD^{\frac14}}
\end{equation}
We will derive a contradiction. We denote $\wt{u}_2(t) := Q(t)\tp \theta_2(0) Q(t)$. Since unitary transform does not change Frobenius norm,  
\begin{equation}\label{eq:step3_temp}
\begin{aligned}
Q(t)\tp \Ge(t) Q(t) &= \Lam(t) + t \Lam(t)^2 - (I + t \Lam(t)) (Q(t)\tp \theta_2(0) Q(t)) (I + t \Lam(t)) \\
&= \Lam(t) + t \Lam(t)^2 - (I + t \Lam(t)) \wt{u}_2(t) (I + t \Lam(t))
\end{aligned}
\end{equation}
shares the same estimation as \eqref{eq:Gamma_infinity}. Let $\{\tau_i(t)\}_{i=1}^d$ be the diagonal element of $\wt{u}_2(t) = Q(t)\tp \theta_2(0) Q(t)$. Since $Q(t)$ is bounded and Lipschitz continuous, $\wt{u}_2(t)$ and $\tau_i(t)$ are also bounded and Lipschitz continuous. We will also use $K$ to denote their bound and Lipschitz constant.
Since $\lam_1(0) > 1$ is an eigenvalue of $\theta_2(0)$ and hence also an eigenvalue of $\wt{u}_2(t)$, we have
$$\max_i \tau_i(t) > 1 \quad \forall t \in [0,1].$$

Next, we focus on the diagonal elements of $Q(t)\tp \Ge(t) Q(t)$. Since
\begin{equation}\label{eq:bound_Gamma2_uni}
\norm[F]{Q(t)\tp \Ge(t) Q(t)} \le C_2 \LHJ^{\frac13},
\end{equation}
its $i$-th diagonal element (recall \eqref{eq:step3_temp})
\begin{align*}
& \quad \lam_i(t) + t \lam_i(t)^2 - (1 + t\lam_i(t))^2 \tau_i(t) \\
& = (1 + t\lam_i(t)) \sqbra{\lam_i(t) - (1 + t\lam_i(t)) \tau_i(t)} \\
& = (1 + t\lam_i(t)) \sqbra{ (1-t \tau_i(t)) \lam_i(t) - \tau_i(t)}
\end{align*}
also satisfies
\begin{equation}\label{eq:step3_product_small}
\abs{(1 + t\lam_i(t)) \sqbra{ (1-t \tau_i(t)) \lam_i(t) - \tau_i(t)}} \le C_2 \LHJ^{\frac13}
\end{equation}
for all $t \in [0,1]$, where recall that $\lam_i(t)$ is the $i$-th diagonal element for $\Lam(t) = Q(t)\tp \theta_2(t)Q(t)$, and $\tau_i(t)$ is the $i$-th diagonal element for $\wt{u}_2(t) = Q(t)\tp \theta_2(0) Q(t)$.

The rest of the proof for deriving a contradiction to \eqref{eq:step3_contra} is technical, so we explain the main idea first. In order that $\abs{(1 + t\lam_i(t)) \sqbra{ (1-t \tau_i(t)) \lam_i(t) - \tau_i(t)}}$ is small for all $t \in [0,1]$, either of the following must hold
\begin{enumerate}
\item $1 + t\lam_i(t) \approx 0$, which implies $\lam_i(t) \approx -\frac{1}{t}$
\item $(1-t \tau_i(t)) \lam_i(t) - \tau_i(t) \approx 0$, which implies $\lam_i(t) \approx \dfrac{\tau_i(t)}{1 - t \tau_i(t)} = \dfrac{1}{t} \dfrac{1}{1 - t \tau_i(t)} -\dfrac{1}{t}$. (At $t=0$ the function is $\tau_i(t)$.)
\end{enumerate}
When $t \to 0^+$, $-\frac{1}{t}$ blows up and we cannot have $1 + t\lam_i(t) \approx 0$.

Since $\max\limits_i\tau_i(t) > 1$, we know from intermediate value theorem that there exists at least one index $i$ and $t_i \in (0,1)$ s.t. $1 - t_i \tau_i(t_i) = 0$. This implies that the function $\dfrac{\tau_i(t)}{1 - t \tau_i(t)}$ blows up as $t \to t_i$. As a result, in order that $\abs{(1 + t\lam_i(t)) \sqbra{ (1-t \tau_i(t)) \lam_i(t) - \tau_i(t)}}$ is small for all $t \in [0,1]$, there has to be some ``shift'' between two functions: $\lam_i(t)$ is sometimes close to $-\frac{1}{t}$ and sometimes close to $\dfrac{\tau_i(t)}{1 - t \tau_i(t)} = \dfrac{1}{t} \dfrac{1}{1 - t \tau_i(t)} -\dfrac{1}{t}$. 
However, note that the difference between the two functions $-\frac{1}{t}$ and $\dfrac{1}{t} \dfrac{1}{1 - t \tau_i(t)} -\dfrac{1}{t}$ has a positive lower bound
\begin{equation}\label{eq:step3_lower_bound}
\abs{\dfrac{1}{t} \dfrac{1}{1 - t \tau_i(t)}} \ge \abs{\dfrac{1}{1 - t \tau_i(t)}} \ge \abs{\dfrac{1}{t \tau_i(t)}} \ge \abs{\dfrac{1}{ \tau_i(t)}} \ge \dfrac{1}{\norm[2]{A+I}+C_1 \LMMD^{\frac14}}.
\end{equation}
Therefore, for some $t$ in the middle, both $\abs{\lam_i(t) + \frac{1}{t}}$ and $\abs{\lam_i(t) - \frac{\tau_i(t)}{1-t\tau_i(t)}}$ are larger than $\frac{1}{2(\norm[]{A+I}+C_1 \LMMD^{\frac14})}$
This gives a contradiction to \eqref{eq:step3_product_small}. 

Next, we give a rigorous proof for this contradiction.
Let
$$t^* = \inf \curlybra{t \in [0,1] : 1-t\tau_i(t)=0 \text{ for some }i}.$$
Note that the set above is non-empty because $\max\limits_i\tau_i(t) > 1$ for all $t\in[0,1]$. If we do not have the assumption $\lam_1(0) > 1$, then $\tau_i(t)<1$ may not be well-defined. By definition of $t^*$, we can find an index $j$ such that $1-t^*\tau_j(t^*)=0$. Therefore, $\tau_j(t^*) \ge 1$. Let $t_0= \frac{1}{3K}$, then for $t \in [0,t_0]$, we have
\begin{equation}\label{eq:step3_t0}
\abs{1 + t \lam_j(t)} \ge 1 - t_0 K = \frac23.
\end{equation}
Let $t_1 = t^* - \dt$, where $\dt = \frac{1}{2K(1+2K)}$ then for all $t \in [t_1, t^*]$, we have
\begin{equation}\label{eq:step3_t1}
\begin{aligned}
& \quad \abs{(1-t \tau_j(t)) \lam_j(t) - \tau_j(t)} \ge |\tau_j(t)| - |1-t \tau_j(t)|\, |\lam_j(t)|\\
& \ge |\tau_j(t^*)| - K|t-t^*| - |1-t \tau_j(t)|K \\
&\ge 1- K \dt - K\parentheses{ |1+t^*\tau_j(t^*)| + |t^*\tau_j(t^*) - t \tau_j(t)| } \\
& \ge 1- K \dt - K\parentheses{ 0 + 2K|t-t^*| } \ge 1 - \dt K(2K+1) = \frac12.
\end{aligned}
\end{equation}
If $t_0 \ge t_1$, we pick $t \in [t_1,t_0]$ and then multiply \eqref{eq:step3_t0} and \eqref{eq:step3_t1}, we reach a contradiction with \eqref{eq:step3_product_small}. If $t_0 < t_1$, then we consider the behavior of
$$(1 + t\lam_j(t)) \sqbra{ (1-t \tau_j(t)) \lam_j(t) - \tau_j(t)}.$$
When $t \in [0,t_0]$, $\lam_j(t)$ is close to $\dfrac{\tau_j(t)}{1 - t \tau_j(t)}$ because \eqref{eq:step3_t0} and \eqref{eq:step3_product_small} implies 
\begin{equation*}
\abs{(1-t \tau_j(t)) \lam_j(t) - \tau_j(t)} \le \frac32 C_2 \LHJ^{\frac13},
\end{equation*}
which gives
\begin{equation}\label{eq:close_t0}
\abs{\lam_j(t) - \dfrac{\tau_j(t)}{1 - t \tau_j(t)}} \le \dfrac{3 C_2 \LHJ^{\frac13},}{2 (1 - t |\tau_j(t)|)} \le \dfrac{9 C_2 \LHJ^{\frac13},}{4}.
\end{equation}
When $t \in [t_1,t^*]$, $\lam_j(t)$ is close to $-\frac1t$ because \eqref{eq:step3_t1} and \eqref{eq:step3_product_small} implies
$$\abs{1 + t \lam_j(t)} \le 2 C_2 \LHJ^{\frac13}.$$
This implies
\begin{equation}\label{eq:close_t1}
\abs{\lam_j(t) + \dfrac{1}{t} } \le \dfrac{2 C_2 \LHJ^{\frac13}}{t} \le \dfrac{2 C_2 \LHJ^{\frac13}}{t_1} \le 6K C_2 \LHJ^{\frac13},
\end{equation}
where the last inequality is because $t_1 > t_0 = \frac{1}{3K}$. Therefore, as explained before, there has to be a shift between the two approximations \eqref{eq:close_t0} and \eqref{eq:close_t1} in the middle when $t \in [t_0,t_1]$ because $\lam_j(t)$ is Lipschitz continuous. However, the difference between the two functions has a positive lower bound \eqref{eq:step3_lower_bound}
$$\abs{-\dfrac{1}{t} - \dfrac{\tau_j(t)}{1-t \tau_j(t)}} = \abs{\dfrac{1}{t(1 - t \tau_j(t))}} \ge \dfrac{1}{\norm[2]{A+I} + C_1 \LMMD^{\frac14}}.$$
Therefore, there exists $t_2 \in [t_0,t_1]$ s.t.
\begin{equation}\label{eq:lower1}
\abs{\lam_j(t_2) - \dfrac{\tau_j(t_2)}{1 - t_2 \tau_j(t_2)}} \ge \dfrac{1}{2\parentheses{\norm[2]{A+I} + C_1 \LMMD^{\frac14}}}
\end{equation}
and
\begin{equation}\label{eq:lower2}
\abs{\lam_j(t_2) + \dfrac{1}{t_2} } \ge \dfrac{1}{2\parentheses{\norm[2]{A+I} + C_1 \LMMD^{\frac14}}}.
\end{equation}
Finally, we split into two cases.

\noindent \emph{Case 1.} If $|1-t_2\tau_j(t_2)| \le \frac{1}{3K}$, then
$$\tau_j(t_2) \ge \frac{1}{t_2} \parentheses{1-\dfrac{1}{3K}} \ge 1 - \dfrac{1}{3K}$$
and
\begin{align*}
& \quad \abs{(1-t_2\tau_j(t_2))\lam_j(t_2) - \tau_j(t_2)} \\
& \ge \tau_j(t_2) - \abs{(1-t_2\tau_j(t_2))} \abs{\lam_j(t_2)} \\
& \ge 1 - \dfrac{1}{3K} - \dfrac{1}{3K} K \ge \frac13.
\end{align*}
This implies
\begin{align*}
& \quad \abs{ \parentheses{1+ t_2\lam_j(t_2)}\sqbra{(1-t_2\tau_j(t_2))\lam_j(t_2) - \tau_j(t_2)}} \\
& \ge \frac13 \abs{1+ t_2\lam_j(t_2)} = \frac13 |t_2| \abs{\dfrac{1}{t_2} + \lam_j(t_2)} \\
& \ge \frac13 \frac{1}{3K} \dfrac{1}{2\parentheses{\norm[2]{A+I} + C_1 \LMMD^{\frac14}}} = \cO(1),
\end{align*}
which contradicts to \eqref{eq:step3_product_small}.

\noindent \emph{Case 2.} If $|1-t_2\tau_j(t_2)| > \frac{1}{3K}$, then 
\begin{align*}
& \quad \abs{ \parentheses{1+ t_2\lam_j(t_2)}\sqbra{(1-t_2\tau_j(t_2))\lam_j(t_2) - \tau_j(t_2)}} \\
& =  |t_2| \abs{\dfrac{1}{t_2} + \lam_j(t_2)} \abs{1-t_2\tau_j(t_2)} \abs{\lam_j(t_2) - \dfrac{\tau_j(t_2)}{1 - t_2 \tau_j(t_2)}} \\
& \ge \dfrac{1}{3K} \dfrac{1}{2\parentheses{\norm[2]{A+I} + C_1 \LMMD^{\frac14}}} \dfrac{1}{3K} \dfrac{1}{2\parentheses{\norm[2]{A+I} + C_1 \LMMD^{\frac14}}} = \cO(1),
\end{align*}
which also contradicts to \eqref{eq:step3_product_small}.

Combining \emph{Case 1} and \emph{Case 2}, we conclude that the assumption $\lam_1(0) > 1$ cannot hold. Therefore, \eqref{eq:step3_goal} hold. This further implies $\abs{I-\Lambda(0)} = I-\Lambda(0)$. Plugging back into \eqref{eq:step1_result2} and \eqref{eq:step1_result1}, we get 
$$\norm[2]{I-\Lam(0) - Q(0)AQ(0)\tp} \le C \LMMD^{\frac14}$$
and
$$\norm[2]{\theta_2(0) - (I - A)} \le C \LMMD^{\frac14}.$$
Therefore, we obtain
\begin{equation}\label{eq:step3_result}
\norm[F]{\theta_2(0) - (I - A)} \le C \LMMD^{\frac14}.
\end{equation}

\medskip
\textbf{Step 4.} We show that $\theta_2(t)$, $\theta_1(t)$, and $\theta_0(t)$ satisfies the error estimations \eqref{eq:main_result}.

\textbf{Step 4.1.} We estimate $\theta_2(t)$ first.
We first show that $1-t\tau_i(t)$ has a positive lower bound. Recall that $\tau_i(t)$ is the diagonal element of $\wt{u}_2(t) = Q(t)\tp \theta_2(0) Q(t)$. We first observe that any diagonal element for
$$Q(t)\tp (I - t(I-A)) Q(t) = Q(t)\tp ((1-t)I + tA) Q(t)$$
must be larger than or equal to $\min\{1,\lam_A\}$. By \eqref{eq:step3_result}, $1-t \tau_i(t)$, as a diagonal element of
$$I - t Q(t)\tp \theta_2(0) Q(t) = Q(t)\tp \sqbra{ (1-t)I + tA + t\parentheses{I-A-\theta_2(0)}} Q(t)$$ must satisfies
$$1-t \tau_i(t) \ge 1-t+t\lam_A - tC \LMMD^{\frac14}  \ge \min\{1,\lam_A\} - C \LMMD^{\frac14} \ge \frac12 \min\{1,\lam_A\}.$$
Therefore, $1-t\tau_i(t)$ has a positive lower bound.

Next, we claim that, for any (fixed) $i$, $1+t \lam_i(t)$ has a positive lower bound. Similar to \emph{step 3}, \eqref{eq:step3_product_small} can be rewritten as
\begin{equation*}
\abs{1+t\lam_i(t)} \abs{1-t\tau_i(t)} \abs{\lam_i(t) - \dfrac{\tau_i(t)}{1-t\tau_i(t)}} \le C \LHJ^{\frac13}.
\end{equation*}
Therefore, the lower bound for $1-t\tau_i(t)$ implies
\begin{equation}\label{eq:step41_temp0}
\abs{1+t\lam_i(t)} \abs{\lam_i(t) - \dfrac{\tau_i(t)}{1-t\tau_i(t)}} \le C \LHJ^{\frac13}
\end{equation}
for all $t\in [0,1]$ and $i$. If we further restrict ourself to $t \in [\frac{1}{3K},1]$, we have 
\begin{equation}\label{eq:step41_temp3}
\abs{\lam_i(t) - (-\frac{1}{t})} \abs{\lam_i(t) - \dfrac{\tau_i(t)}{1-t\tau_i(t)}} \le C \LHJ^{\frac13} \quad \text{for } \, t \in [\frac{1}{3K},1],
\end{equation}
implying that $\lam_i(t)$ must be close to either $-\frac{1}{t}$ or $\dfrac{\tau_i(t)}{1 - t \tau_i(t)}$.

Next, we show the lower bound for $1+t \lam_i(t)$. For $t\in [0,\frac{1}{3K}]$, 
$$1 + t\tau_i(t) \ge 1 - t K \ge \frac23.$$
Therefore, we must have $\forall \, t\in [0,\frac{1}{3K}]$
\begin{equation}\label{eq:step4_lami}
\abs{\lam_i(t) - \dfrac{\tau_i(t)}{1-t\tau_i(t)}} \le C \LHJ^{\frac13}.
\end{equation}
For $t \in [\frac{1}{3K}, 1]$, similar to the argument in \emph{step 3}, by \eqref{eq:step41_temp3}, $\lam_i(t)$ must be close to either $-\frac1t$ or $\frac{\tau_i(t)}{1-t\tau_i(t)}$, but cannot be close to both because the difference between the two functions has a positive lower bound of $\cO(1)$:
\begin{equation}\label{eq:gap_2fcns}
\begin{aligned}
& \quad \abs{-\frac1t - \frac{\tau_i(t)}{1-t\tau_i(t)}}=\abs{\dfrac{1}{t} \dfrac{1}{1 - t \tau_i(t)}} \ge \abs{\dfrac{1}{1 - t \tau_i(t)}} \ge \dfrac{1}{\max\{1,|\tau_i(t)|\}} \\
&\ge \dfrac{1}{\max\{1,\norm[2]{I-A} + C \LMMD^{\frac14} \}} \ge \dfrac{1}{2\max\{1,\norm[2]{I-A} \}} =: c_{\textrm{diff}},
\end{aligned}
\end{equation}
where the second last inequality is because of \eqref{eq:step3_result}. \eqref{eq:step41_temp3} and \eqref{eq:gap_2fcns} imply that $\lam_i(t)$ cannot ``shift'' between $-\dfrac{1}{t}$ and $\dfrac{1}{t} \dfrac{1}{1 - t \tau_i(t)} -\dfrac{1}{t}$ during $t\in[\frac{1}{3K},1]$. Since we already have \eqref{eq:step4_lami} at $t=\frac{1}{3K}$, \eqref{eq:step41_temp3} implies that $\lam_i(t)$ is close to $\frac{\tau_i(t)}{1-t\tau_i(t)}$ for all $t \in [\frac{1}{3K},1]$ and hence 
$$\abs{\lam_i(t) - (-\frac1t)} \ge c_{\text{diff}} - C \LHJ^{\frac13} \ge \frac12 c_{\text{diff}} = \cO(1).$$
Therefore, for all $t \in [\frac{1}{3K}, 1]$
$$\abs{1+t\lam_i(t)} \ge \dfrac{c_{\text{diff}}}{6K}=\cO(1).$$
Combining the lower bound for $t \in [0, \frac{1}{3K}]$, we finish proving the claim that $1+t\lam_i(t)$ has a positive lower bound of $\cO(1)$ that is independent of $i$. This positive lower bound also implies that $I + t \theta_2(t)$ is invertible and has a positive lower bound (recall $\lam_i(t)$ are eigenvalues of $\theta_2(t)$). Therefore, by definition of $\Ge(t)$ and \eqref{eq:Gamma_infinity},
\begin{equation}\label{eq:step41_temp1}
\norm[F]{(I-t\theta_2(0)) \theta_2(t) - \theta_2(0)} = \norm[F]{(I + t\theta_2(t))^{-1} \Ge(t)} \le C \LHJ^{\frac13}.
\end{equation}

Next, we give a positive lower bound for $I-t\theta_2(0)$. Note that
$$I-t\theta_2(0) = (1-t)I + tA + t(I-A-\theta_2(0)).$$
By \eqref{eq:step3_result}, we know that the smallest eigenvalue of $I-t\theta_2(0)$ is larger than or equal to
$$(1-t)+t\lam_A - tC\LMMD^{\frac14} \ge \frac12 \min\curlybra{1,\lam_A} = \cO(1),$$
which gives a lower bound for $I-t\theta_2(0)$. Applying this bound to \eqref{eq:step41_temp1}, we obtain
\begin{equation}\label{eq:step41_temp2}
\norm[F]{\theta_2(t) - (I-t\theta_2(0))^{-1} \theta_2(0)} \le C \LHJ^{\frac13}.
\end{equation}
We further notice that by \eqref{eq:step3_result}
\begin{align*}
& \quad \norm[F]{\parentheses{I - t(I-A)}^{-1}(I-A) - \parentheses{I - t\theta_2(0)}^{-1}\theta_2(0)} \\
& \le \norm[F]{\parentheses{I - t(I-A)}^{-1}(I-A - \theta_2(0))} \\
& \quad + \norm[F]{\parentheses{I - t(I-A)}^{-1}\, t\, (I-A-\theta_2(0))\,  \parentheses{I - t\theta_2(0)}^{-1} \theta_2(0)} \\
& \le \min\curlybra{1,\lam_A}^{-1} \cdot C \LMMD^{\frac14} + \min\curlybra{1,\lam_A}^{-1} \cdot C \LMMD^{\frac14} \cdot 2\min\curlybra{1,\lam_A}^{-1} \cdot K \\
& = C \LMMD^{\frac14}.
\end{align*}
Therefore, for any $t\in[0,1]$,
\begin{equation}\label{eq:u2_err_result}
\begin{aligned}
& \quad \norm[F]{\theta_2(t)-\theta_2^*(t)} = \norm[F]{\theta_2(t)-\parentheses{I - t(I-A)}^{-1}(I-A)} \\
& \le \norm[F]{\theta_2(t)-\parentheses{I - t\theta_2(0)}^{-1}\theta_2(0)} + \norm[F]{\parentheses{I - t\theta_2(0)}^{-1}\theta_2(0)-\parentheses{I - t(I-A)}^{-1}(I-A)} \\
& \le C \parentheses{\LHJ^{\frac13} + \LMMD^{\frac14}}.
\end{aligned}
\end{equation}

\medskip
\textbf{Step 4.2.} We verify that $\theta_1(t)$ has small error. We first give an error estimation for $\theta_1(0)$. Recall the true value is 
$$\theta_1^*(0) = A\bb_\mu - \bb_\nu = (I - \theta_2^*(0))\bb_\mu - \bb_\nu.$$
Therefore
\begin{equation}\label{eq:u10_err}
\begin{aligned}
& \quad \norm[]{\theta_1(0) - \theta_1^*(0)} \\
& \le \norm[]{\theta_1(0) - ((I-\theta_2(0))\bb_\mu - \bb_\nu)} + \norm[]{(I-A-\theta_2(0))\bb_\mu}\\
& \le C\LMMD^{\frac12} + C\LMMD^{\frac14} \le C\LMMD^{\frac14}
\end{aligned}
\end{equation}
where we used \eqref{eq:Lemma_MMD} and \eqref{eq:step3_result} in the second inequality. Next, we give error estimate of $\theta_1(t)$ for $t\in[0,1]$. Recall that
$$\Gy(t)  =\parentheses{I + t\theta_2(t)}\parentheses{\theta_1(t) - t \theta_2(0)\theta_1(t) - \theta_1(0)}$$
satisfies the estimation \eqref{eq:Gamma_infinity}. Since $I + t\theta_2(t)$ has a positive lower bound (shown in \emph{step 4.1}), we have
\begin{equation}\label{eq:u1_err_temp}
\norm[]{\theta_1(t) - t \theta_2(0)\theta_1(t) - \theta_1(0)} = \norm[]{(I + t\theta_2(t))^{-1} \Gy(t)} \le C \LHJ^{\frac13}.
\end{equation}
Therefore, for any $t \in [0,1]$,
\begin{equation}\label{eq:u1_err_result}
\begin{aligned}
& \quad \norm[]{\theta_1(t) - \theta_1^*(t)} = \norm[]{\theta_1(t) - \parentheses{(1-t)I + At}^{-1} (A \bb_\mu - \bb_\nu)} \\
& = \norm[]{\theta_1(t) - \parentheses{I-t \theta_2^*(0)}^{-1} \theta_1^*(0)} \le C \norm[]{\parentheses{I-t \theta_2^*(0)} \theta_1(t) -  \theta_1^*(0)} \\
& \le C \parentheses{ \norm[]{\parentheses{I-t \theta_2(0)} \theta_1(t) -  \theta_1(0)} + \norm[]{t (\theta_2(0) - \theta_2^*(0))\theta_1(t)} + \norm[]{\theta_1(0) - \theta_1^*(0)}}\\
& \le C \parentheses{\LHJ^{\frac13} + \LMMD^{\frac14} + \LMMD^{\frac14}} \le C\parentheses{\LHJ^{\frac13} + \LMMD^{\frac14}}.
\end{aligned}
\end{equation}
In the third inequality, we used \eqref{eq:u1_err_temp}, \eqref{eq:step3_result}, and \eqref{eq:u10_err}.

\medskip
\textbf{Step 4.3.} Finally, we verify that $\theta_0(t)$ has small error. Recall that $\theta_0^*(t)$ is uniquely defined up to an additive constant and
\begin{align*}
\theta_0^*(t) - \theta_0^*(0) &= \frac{t}{2} (\bb_\nu-A\bb_\mu)\tp \parentheses{(1-t)I + At}^{-1} (\bb_\nu-A\bb_\mu) \\
&= \frac{t}{2} \theta_1^*(0)\tp \parentheses{I - t \theta_2^*(0)}^{-1} \theta_1^*(0) = \frac{t}{2} \theta_1^*(t)\tp \theta_1^*(0).
\end{align*}
Also recall that
\begin{align*}
\Gl(t) &= \theta_0(t) - \theta_0(0) + \frac{t}{2} \theta_1(t)\tp \theta_1(t) - \frac{t^2}{2} \theta_1(t)\tp \theta_2(0) \theta_1(t) - t \theta_1(t)\tp \theta_1(0) \\
& = \theta_0(t) - \theta_0(0) + \frac{t}{2} \theta_1(t)\tp \sqbra{(I-t \theta_2(0))\theta_1(t) - \theta_1(0)} - \frac{t}{2} \theta_1(t)\tp \theta_1(0).
\end{align*}
Therefore,
\begin{equation}\label{eq:u0_err_result}
\begin{aligned}
& \quad \abs{(\theta_0(t)-\theta_0(0)) - (\theta_0^*(t) - \theta_0^*(0))} \\
& = \abs{\Gl(t) - \frac{t}{2} \theta_1(t)\tp \sqbra{(I-t \theta_2(0))\theta_1(t) - \theta_1(0)} + \frac{t}{2} \theta_1(t)\tp \theta_1(0) - \frac{t}{2} \theta_1^*(t)\tp \theta_1^*(0)} \\
& \le C \sqbra{\LHJ^{\frac13} + \frac{t}{2} K \LHJ^{\frac13} + \frac{t}{2} \parentheses{\abs{\theta_1(t)-\theta_1^*(t)} \abs{\theta_1(0)} +\abs{\theta_1^*(t)} \abs{\theta_1(0)-\theta_1^*(0)}}} \\
& \le C \parentheses{\LHJ^{\frac13} + \LMMD^{\frac14}},
\end{aligned}
\end{equation}
where \eqref{eq:Gamma_infinity} and \eqref{eq:u1_err_temp} are used in the first inequality. \eqref{eq:u1_err_result} and \eqref{eq:u10_err} are used in the second inequality.

Finally, combining \eqref{eq:u2_err_result}, \eqref{eq:u1_err_result}, and \eqref{eq:u0_err_result}, we conclude
$$\max_{t \in [0,1]} \parentheses{ \norm[F]{\theta_2(t) - \theta_2^*(t)} + \norm[]{\theta_1(t)-\theta_1^*(t)} + \abs{\theta_0(t) - \theta_0^*(t)}} \le C \parentheses{\LHJ^{\frac13} + \LMMD^{\frac14}},$$
which implies \eqref{eq:main_result}.
\end{proof}
Finally we make two remarks to this stability analysis. First, we omit the discretization error and generalization error in this analysis in order to obtain a clear environment for studying the loss function. Second, while we prove the stability result in Gaussian setting, we believe the stability result hold for general distributions, as long as they belongs to some class with sufficient regularity condition.

\section{Implementation Details}\label{appen:implementation}
\paragraph{Empirical Loss via Monte Carlo Approximation.}
In practice, the training loss function is approximated via Monte Carlo estimation.
For $\LHJ$ \eqref{eq:implicit_loss}, we set $\varrho$ as the uniform distribution on a compact computation domain $D \subset \Omega.$ We uniformly sample a batch of collocation points $\{(\bx^{(i)},t_i)\}_{i=1}^N$ from the space-time computational domain $D \times [0, t_f]$ to obtain the empirical loss 
$$\EHJ=\dfrac{1}{N}\sum_{i=1}^N \Bigl(u^{(i)}_\theta + t_ih\left(\nabla u^{(i)}_\theta\right) - t_i\nabla u^{(i)\top}_\theta \nabla h\left(\nabla u^{(i)}_\theta\right) - u^{(i)}_\theta\left(\bx^{(i)}-t_i\nabla h\left(\nabla u^{(i)}_\theta\right),0\right)\Bigr)^2,$$
where $u^{(i)}_\theta = u(\bx^{(i)}, t_i)$. Similarly, the MMD term \eqref{eq:loss_mmd} is estimated empirically through samples $\{ (\bx^{(i)},\by^{(i)})\}_{i=1}^N$ from the initial and target distributions 
$$\EMMD = \dfrac{1}{N^2} \sum_{i,j=1}^N \parentheses{k(\wt{\bx}^{(i)}, \wt{\bx}^{(j)}) + k(\by^{(i)},\by^{(j)}) - 2 k(\wt{\bx}^{(i)}, \by^{(j)})},$$
where $\wt{\bx}^{(i)} = \bx^{(i)} + t_f \nb u_\theta(\bx^{(i)},0)$. To better learn bidirectional OT, we employ the MMD loss in both forward and backward directions. The total loss is
\begin{equation}
    \underset{\theta}{\min}\  \EHJ\left(u_\theta\right) + \lambda  \EMMD(\left(T_\mu^\nu\left[u_\theta\right]\right)_\sharp\mu, \nu) + \lambda  \EMMD(\mu, \left(T_\nu^\mu\left[u_\theta\right]\right)_\sharp\nu).
\end{equation}

\subsection{Implementation Details for 2D Experiments}\label{append:2D_details}
\paragraph{Training.}
For the experiments for 2D toy distributions in Sections \ref{sec:2D_unconditional} and \ref{sec:2D_conditional}, we use a simple 5-layer MLP with hidden dimension 64 and Softplus activation (with $\beta=100$). The model is trained using the Adam optimizer with a learning rate of $10^{-3}$. We sampled 50,000 points from each distribution to create the corresponding sample datasets.
At each training epoch, we uniformly sample 1,000 collocation points from the computational domain $D = [-1,1]^2$ to compute the implicit solution formula loss \eqref{eq:implicit_loss}. 
For the MMD loss, we randomly select 750 samples from the given dataset at each epoch. 

\paragraph{Baselines.}
For the NOT baseline, we follow the official implementation provided in the public repository\footnote{\url{https://github.com/iamalexkorotin/NeuralOptimalTransport}} without any modification.
The HJ-PINN ablation model was trained under the exact same experimental settings as our proposed NCF across all experiments.
For GNOT in the class-conditional setting, we use the official code released by the authors\footnote{\url{https://github.com/machinestein/GNOT}} without modification.

\subsection{Implementation Details for Gaussian Experiments}\label{append:gaussian_details}
\paragraph{Training.}
For the high-dimensional Gaussian experiments in Section \ref{sec:high_dim_gaussians}, we employ the DenseICNN architecture, which is a fully connected neural network with additional input-quadratic skip connections, to ensure a fair comparison with the baseline models provided in \citep{korotin2021neural}. Since our method does not require input convexity, we omit the commonly imposed constraints that enforce positivity of certain neural network weights, which are typically used to guarantee convexity.
Following \cite{korotin2021neural}, we adopt the network architecture DenseICNN[1; max(2$d$,64), max(2$d$,64), max($d$,32)] for a $d$-dimensional problem. The model is optimized using Adam with a fixed learning rate of $10^{-4}$, regardless of the input dimension.
To construct dataset, we randomly sample $10^5$ points from each of the source and target distributions. We set $D$ as the bounding box (i.e., axis-aligned minimum and maximum values) of these samples and define it as the computational domain for solving the HJ equation. At each training epoch, we uniformly sample 1,000 collocation points from $D$ to compute the implicit solution formula loss \eqref{eq:implicit_loss}. For the MMD loss, we randomly select 2,000 points from the given source and target datasets at every epoch.

\paragraph{Baselines.}
The baselines LS, WGAN-QC, and MM-v1 are all used via the official implementations from the public repository of \citet{korotin2021neural}\footnote{\url{https://github.com/iamalexkorotin/Wasserstein2Benchmark}}.
The implementations of NOT and HJ-PINNs follow the same settings described in Appendix~\ref{append:2D_details}.

\paragraph{Evaluation Metric.}
\begin{itemize}
\item \textbf{Unexplained Variance Percentage (UVP):} 
Given the predicted transport map $\hat{T}$ from $\mu$ to $\nu$, UVP is defined by
$\mathcal{L}^2-\text{UVP}\left(\hat{T}\right)\coloneqq 100\left\Vert \hat{T}-T^*\right\Vert_{L^2\left(\mu\right)}/\text{Var}\left(\nu\right)$ (\%). 
A UVP value approaching 0\% indicates that $\hat{T}$ provides a close approximation to the OT map $T^*$, whereas values substantially exceeding 100\% imply that the estimated map fails to capture the underlying structure of the OT.
We use $10^5$ random samples drawn from $\mu$ to compute UVP.

\item \textbf{Memory and Time Metrics:}  
Memory consumption is reported as the peak memory usage during training.  
    Training time is measured as the average runtime per epoch over 100 epochs.  
    Inference time refers to the time required to transport $10^5$ test samples using the learned map.  
    Additionally, we measure the memory required to store the trained networks for the bidirectional OT maps.  
    For our method, this corresponds to the storage size of a single spatio-temporal solution function for the HJ equation.  
For dual-based baselines, this reflects the memory needed to store both the primal and dual potential functions.  
For the NOT baseline, which learns the forward and backward OT maps separately, we report the total memory required to store both learned transport maps.
\end{itemize}

\subsection{Implementation Details for Color Transfer}\label{append:color_transfer_details}
\paragraph{Training.} The color transfer experiments in Section \ref{sec:exp_color_transfer} are trained using exactly the same experimental setup as in the high-dimensional Gaussian case described in Appendix \ref{append:gaussian_details}, to ensure a fair comparison with the baseline models.

\paragraph{Baselines.} 
For the classical methods, we implemented Reinhard color transfer using OpenCV’s \cite{bradski2008learning} color space conversion and channel-wise mean-std matching. Histogram matching was implemented by computing per-channel histograms and CDFs, then applying the resulting pixel value mapping directly. 
Since these methods only support one-way transfer, we conducted forward and backward transfers separately. Both methods serve as standard, straightforward baselines.

\paragraph{Evaluation Metrics.}
\begin{itemize}
\item \textbf{Earth-Mover Distance (EMD):} For both the target and transported images, we compute normalized color histograms separately for each BGR channel. The EMD quantifies the minimal cost required to transform one histogram into another, offering a perceptually meaningful measure of distributional difference. We compute the EMD independently for each channel and report the average across all three. Lower EMD values indicate greater similarity.

\item  \textbf{Histogram Intersection (HI):}  HI measures the overlap between the normalized color histograms of the target and transported images. For each BGR channel, we compute the intersection as the sum of the minimum values across corresponding bins. The final score is obtained by averaging over all three channels. Higher values (closer to 1) indicate greater similarity.
\end{itemize}

\subsection{Implementation details for Section \ref{subsection:MNIST_FMNIST}}\label{append:details_of_MNIST_FMNIST}
This series of experiments focuses on the MNIST dataset \citep{lecun1998mnist}, which comprises 10 classes of $28 \times 28$ grayscale images of handwritten digits ranging from 0 to 9; And the Fashion MNIST dataset consisting of 10 classes of $28 \times 28$ grayscale images of clothing items, labeled from 0 to 9.

For both MNIST \& Fashion MNIST datasets, the value of each pixel of the grayscale images takes integer value from $1$ to $255$. We always normalized the pixel values of each data point to $[0, 1]$ by dividing by $255$ before calculation.

\paragraph{VAE Pretraining.} 
In our study, we employ pretrained $\beta$-VAE models \citep{kingma2013auto, higgins2017beta}, which offer satisfactory generative quality and faithful manifold representations for image encoding. Advanced auto-encoder architectures \citep{berthelot2018understanding, feng2024improving} that better preserve the interpolation quality of decoded images will be considered in future work. 
Although the ambient dimension of the data is $784$, prior work has shown that the dataset exhibits a moderately low intrinsic dimension \citep{pope2021intrinsic}. Thus, in our implementation, we set the latent dimension to $d_l = 10$ for in-domain transport tasks on MNIST data set, and to $d_l = 35$ for cross-domain transport between Fashion MNIST and MNIST data sets.

To train the VAE, we consider the encoder $E_\phi(\cdot):\mathbb{R}^{d}\rightarrow \mathbb{R}^{d_l},$ and encoding variance $S_\phi(\cdot):\mathbb{R}^{d}\rightarrow \mathbb{R}^{d_l}$, which share the same parameter $\phi$, together with the decoder $D_\omega(\cdot):\mathbb{R}^{d_l}\rightarrow \mathbb{R}^{d}$, with $\phi, \omega$ being the tunable parameters. For arbitrary $\textbf{x}_i$ from the dataset and the latent variable $\textbf{z}\in\mathbb{R}^{d_l}$, the ELBO-type loss $\mathcal L_\beta(\phi, \omega; \textbf{x}_i) := \mathbb{E}_{\textbf{z}\sim q_\phi(\textbf{z}|\textbf{x}_i)} \log p_\omega(\textbf{x}_i|\textbf{z}) - \beta D_{\mathrm{KL}}(q_\phi(\cdot|\textbf{x}_i)\|p_{\textbf{z}}(\cdot))$ is considered,
where we set the conditional probability $p_\omega(\cdot|\textbf{z})=\mathcal N(D_\omega(\textbf{z}), \sigma^2_*\textbf{I}_d)$, the prior $p_{\textbf{z}}(\cdot)=\mathcal N(\textbf{0}, \textbf{I}_{d_l}),$ and the posterior $q_\phi(\cdot|\textbf{x}_i)=\mathcal N(E_\phi(\textbf{x}_i), \Sigma_\phi(\textbf{x}_i))$. Here $\sigma_*^2$ is predetermined variance, and $\Sigma_\phi(\textbf{x}_i) = \exp(\mathrm{diag}(S_\phi(\textbf{x}_i)))$. We optimize the following to obtain $E_\phi, D_\omega$:
\begin{align*}
    \max_{\phi, \omega} \frac{1}{M}\!\sum_{i=1}^M\mathcal L_\beta(\phi, \omega; \textbf{x}_i) \! = \! -\frac{1}{2M}\!\Bigg(\!\sum_{i=1}^M & \frac{1}{\sigma_*^2}\mathbb{E}_{\boldsymbol{\epsilon}\sim \mathcal N(\textbf{0}, \textbf{I})} \|\textbf{x}_i \! - \! D_\omega(E_\phi(\textbf{x}_i)\!+\!\sqrt{\Sigma_\phi(\textbf{x}_i)}\!\odot\!\boldsymbol{\epsilon})\|^2 \\
    + & \beta(-S_\phi(\textbf{x}_i)^\top\textbf{1} + \|E_\phi(\textbf{x}_i)\|^2 + \exp(S_\phi(\textbf{x}_i))^\top\textbf{1})\Bigg).
\end{align*}
Here we denote $\textbf{1}=(1,\dots,1)\in\mathbb{R}^{d_l}$. In our experiment, we pick $\sigma_*^2=\frac{1}{100}$, and set $\beta=0.1$ to ensure reconstruction fidelity over regularization. 

We train the VAE pairs $(E^1_\phi(\cdot), D^1_\omega(\cdot))$ and $(E^2_\phi(\cdot), D^2_\omega(\cdot))$ on MNIST dataset $\{\textbf{x}_i^{(1)}\}$ and Fashion MNIST dataset $\{\textbf{x}_i^{(2)}\}$ respectively. We set batch size as $32$, and apply the Adam algorithm \citep{adam2014method} with learning rate $10^{-4}$ for $150$ epochs. In practice, the trained VAE reproduces MNIST images with an accuracy $98.2\%$, and reproduces Fashion MNIST images with an accuracy $87.0\%.$ 

\paragraph{Encoding \& Normalization.} Denote $\textbf{y}_i^{(k)} = E^k_\phi(\textbf{x}_i^{(k)})$, $k=1,2$, we normalize the latent samples $\{\textbf{y}_i^{(k)}\}_{1\leq i\leq N}$ by $\widetilde{\textbf{y}}_i^{(k)} = (\boldsymbol{\sigma}^{(k)})^{-1}(\textbf{y}_i^{(k)} - \bar{\textbf{y}}^{(k)})$ for $1\leq i \leq N, k=1,2$. Here we denote $\bar{\textbf{y}}^{(k)} = \frac{1}{N}\sum_{i=1}^N \textbf{x}_i^{(k)}$ as the mean of the dataset, and $\boldsymbol{\sigma}^{(k)}=\mathrm{diag}(\boldsymbol{\Sigma}^{(k)})$ as the entrywise variance, where $\mathrm{diag}(\boldsymbol{\Sigma}^{(k)})$ denotes a diagonal matrix with its diagonal entries taken from the empirical covariance matrix $\boldsymbol{\Sigma}^{(k)} = \frac{1}{N}\sum_{i=1}^N (\textbf{x}_i^{(k)} - \bar{\textbf{x}}^{(k)})(\textbf{x}_i^{(k)} - \bar{\textbf{x}}^{(k)})^\top$. 

\paragraph{Loss function \& Training.} 
We denote $\mu, \nu$ as the distribution of the normalized latent samples $\widetilde{\textbf{y}}_i^{(k)}$, where $k=1$ or $2$. To compute the OT map between $\mu, \nu$, we set $t_f=1$, and introduce neural network $u_\theta:\mathbb{R}^{d_l}\times[0, t_f]\rightarrow\mathbb{R}$. In practice, we incorporate the loss functional for backward OT into the original loss \eqref{eq:population_loss}, that is, we consider
\begin{equation*}
  \min_{\theta} \left\{\LHJf(u_\theta) + \LHJb(u_\theta) + \lambda( \mathcal{E}_{\text{class}}(\left(T_\mu^\nu\left[u_\theta\right]\right)_\sharp\mu, \nu) + \mathcal{E}_{\text{class}}(\mu, \left(T_\nu^\mu\left[u_\theta\right]\right)_\sharp\nu))\right\},
\end{equation*}
where we denote $\LHJf(u_\theta):=\LHJ(u_\theta)$ as defined in \eqref{eq:implicit_loss}, and define the corresponding backward implicit loss as
\begin{equation*}
    \LHJb\left(u_\theta\right)=\iint_{\Omega \times [0, t_f]} \Bigl(u_\theta - th\left(\nabla u_\theta\right) + t\nabla u_\theta\tp\nabla h\left(\nabla u_\theta\right) - u_\theta\left(\bx+t\nabla h\left(\nabla u_\theta\right), t_f\right)\Bigr)^2 \rd\varrho(\bx) \,\rd t.
\end{equation*}
Since the latent samples are normalized as described above, we set \(\varrho = \mathcal{N}(\mathbf{0}, \mathbf{I}_{d_l})\) and independently draw \(x_i \sim \varrho\) and \(t_i\) uniformly from \([0, t_f]\) to form the collocation points \(\{(x_i, t_i)\}\) for approximating $\LHJf$ and $\LHJb$. In implementation, we set $\lambda = 500$ in order to balance the scales of the implicit HJ loss and the MMD loss. We denote $N_{\text{HJ}}, N_{\text{MMD}}$ as the batch size for evaluating the implicit loss and MMD between distributions of certain classes. In our experiments, we choose $N_{\text{HJ}}=4000$ and $N_{\text{MMD}}=400$. We apply the Adam method with learning rate $10^{-4}$ for optimizing $\theta$. The algorithm is conducted for $1000000$ iterations.

\paragraph{Neural Net Architectures.} The architectures for the $\beta-$VAE encoder and decoder are summarized in Table \ref{tab:VAE_encoder} and Table \ref{tab:VAE_decoder}. Regarding the OT map, we parameterize $u_\theta: \mathbb{R}^{d_l+1} \to \mathbb{R}$ using a ResNet architecture \cite{He_2016_CVPR} with depth $L$ and width (hidden dimension) $\widetilde{d}=128$. Specifically, we define
\begin{equation*}
  u_\theta(x,t) 
  = f_L \circ f_{L-1} \circ \dots \circ f_2 \circ f_1(x,t),
\end{equation*}
where each layer $f_k$ is given by
\begin{align*}
  f_k(y) =
  \begin{cases}
      A_k y + b_k, & k=1, \quad A_1 \in \mathbb{R}^{\widetilde{d} \times (d_l+1)},\; b_1 \in \mathbb{R}^{\widetilde{d}}, \\[6pt]
      y + \kappa A_k \sigma(y) + b_k, & 2 \leq k \leq L-1, \quad A_k \in \mathbb{R}^{\widetilde{d} \times \widetilde{d}},\; b_k \in \mathbb{R}^{\widetilde{d}}, \\[6pt]
      A_k y + b_k, & k=L, \quad A_L \in \mathbb{R}^{1 \times \widetilde{d}},\; b_L \in \mathbb{R}.
  \end{cases}
\end{align*}
We use the hyperbolic tangent activation $\sigma(\cdot) = \tanh(\cdot)$ and set the residual scaling parameter $\kappa = 1$. We set $L=5$ for in-domain transports on MNIST, and use $L=6$ for cross-domain trasport task between Fashion MNIST and MNIST.

\begin{table}[h]
\centering
\caption{Encoder architecture for $\beta-$VAE for image size $(H,W,C)=(28, 28, 1)$, latent dimension $d_l=10$.}\label{tab:VAE_encoder}
\resizebox{0.81\linewidth}{!}{
\renewcommand{\arraystretch}{1.2}
\setlength{\tabcolsep}{10pt}
\begin{tabular}{c|c|c}
\hline
{\textbf{Layer}} & {\textbf{Parameters}} & {\textbf{Output Shape}} \\
\hline
 Input ($\textbf{x}$) & $-$ & $(H, W)$ \\
 Conv2D & $128$ filters, $5\times 5$, stride $1$, ReLU & $(H, W, 128)$ \\
 Conv2D & $128$ filters, $3\times 3$, stride $1$, ReLU & $(H, W, 128)$ \\
 Conv2D & $64$ filters, $3\times 3$, stride $2$, ReLU & $(H/2, W/2, 64)$ \\
 Conv2D & $64$ filters, $3\times 3$, stride $1$, ReLU & $(H/2, W/2, 64)$ \\
 Conv2D & $64$ filters, $3\times 3$, stride $1$, ReLU & $(H/2, W/2, 64)$ \\
 Conv2D & $64$ filters, $3\times 3$, stride $1$, ReLU & $(H/2, W/2, 64)$ \\
 Conv2D & $64$ filters, $3\times 3$, stride $1$, ReLU & $(H/2, W/2, 64)$ \\
 Flatten & $-$ & $(H/2 \cdot W/2 \cdot 64)$ \\
 Dense & $(16 \cdot H\cdot W, 64)$, ReLU & $(16\cdot H \cdot W)$ \\
 Dense (mean $E_\phi(\textbf{x})$) & $(64, d_l)$ & $(d_l)$ \\
 Dense (log variance $S_\phi(\textbf{x})$) & $(64, d_l)$ & $(d_l)$ \\
 Output (reparam.) & $\textbf{y} = E_\phi(\textbf{x}) + \exp(\frac12 \diag(S_\phi(\textbf{x}))) \odot \boldsymbol{\epsilon}$ & $(d_l)$ \\
\hline
\end{tabular}
}
\end{table}

\begin{table}[h]
\centering
\caption{Decoder architecture for $\beta-$VAE for image size $(H,W,C)=(28, 28, 1)$, latent dimension $d_l=10$.}\label{tab:VAE_decoder}
\resizebox{0.71\linewidth}{!}{
\renewcommand{\arraystretch}{1.2}
\setlength{\tabcolsep}{10pt}
\begin{tabular}{c|c|c}
\hline
{\textbf{Layer}} & {\textbf{Parameters}} & {\textbf{Output Shape}} \\
\hline
 Input ($\textbf{y}$) & $-$ & $(d_l)$ \\
 Dense & $(d_l, 16\cdot H \cdot W),$ ReLU & $(16\cdot H \cdot W)$ \\
 Reshape & $-$ & $(H/2, W/2, 64)$ \\
 Conv2DTranspose & $64$ filters, $3\times 3$, stride $1$, ReLU & $(H/2, W/2, 64)$ \\
  Conv2DTranspose & $64$ filters, $3\times 3$, stride $1$, ReLU & $(H/2, W/2, 64)$ \\
 Conv2DTranspose & $64$ filters, $3\times 3$, stride $1$, ReLU & $(H/2, W/2, 64)$ \\
 Conv2DTranspose & $64$ filters, $3\times 3$, stride $1$, ReLU & $(H/2, W/2, 64)$ \\
 Conv2DTranspose & $64$ filters, $3\times 3$, stride $2$, ReLU & $(H, W, 64)$ \\
 Conv2DTranspose & $128$ filters, $3\times 3$, stride $1$, ReLU & $(H, W, 128)$ \\
 Conv2DTranspose & $128$ filters, $5\times 5$, stride $1$, ReLU & $(H, W, 128)$ \\
 Conv2DTranspose & $1$ filter, $5\times 5$, stride $1$, ReLU & $(H, W, 1)$ \\
 Output & $\textbf{x} = D_\omega(\textbf{y})$ & $(H, W)$ \\
\hline
\end{tabular}
}
\end{table}

{
\paragraph{Evaluation Metrics.}
All methods are evaluated on the \textit{testing} portions of the MNIST datasets.
\begin{itemize}
    \item \textbf{Classification Accuracy:}  We evaluate the class-wise accuracy of the generated data. Following \citep{asadulaev2022neural}, we train ResNet-18 classifiers achieving $98.85\%$ accuracy on the MNIST test set. 
    \item \textbf{FID score:} The FID score is evaluated on the entire test set, which consists of approximately 1,000 samples per class.
\end{itemize}
}
\section{Further Results} 

\subsection{Additional Results for 2D Toy Examples}\label{appen:2dtoy}
"Results on 2D distributions with multiple modes are presented in Figure~\ref{fig:2d_checkerboard_8gaussians}. As in Section~\ref{sec:2D_unconditional}, the proposed NCF successfully learns bidirectional OT even in multi-modal settings with a single network. Compared to baselines, it not only transports the distributions more accurately but also produces transport maps with less overlap, indicating that it learns more optimal transport paths.
\begin{figure}[t]
    \centering
    \begin{tikzpicture}[every node/.style={font=\small}]
        \node[anchor=north west, inner sep=0] (source_mu) 
        at (0,0) {\includegraphics[width=0.17\linewidth]{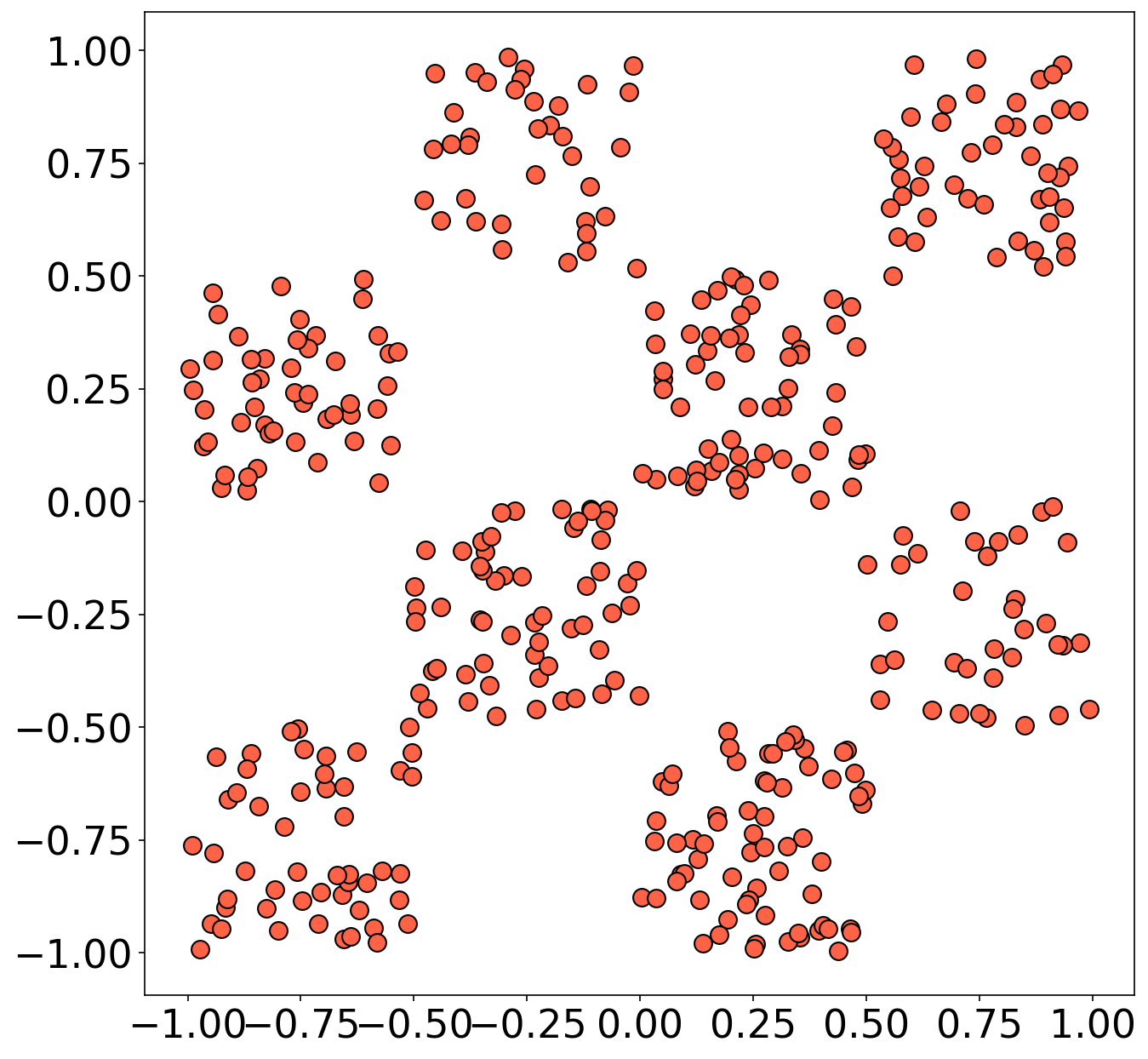}};
        \node[anchor=north west, inner sep=0] (not1) 
        at (0.20\textwidth,0) {\includegraphics[width=0.17\linewidth]{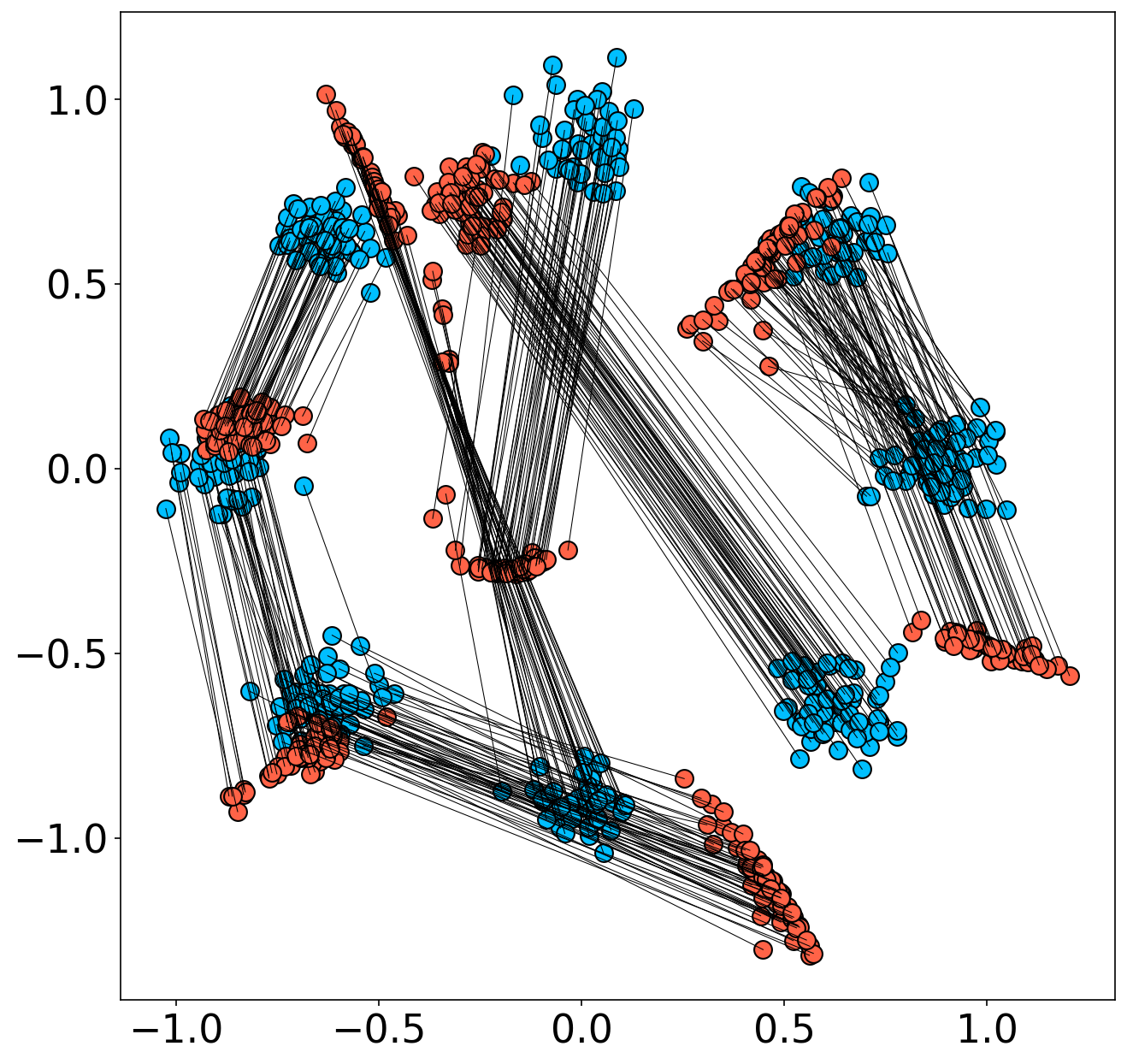}};
        \node[anchor=north west, inner sep=0] (notweak1) 
        at (0.40\textwidth,0) {\includegraphics[width=0.17\linewidth]{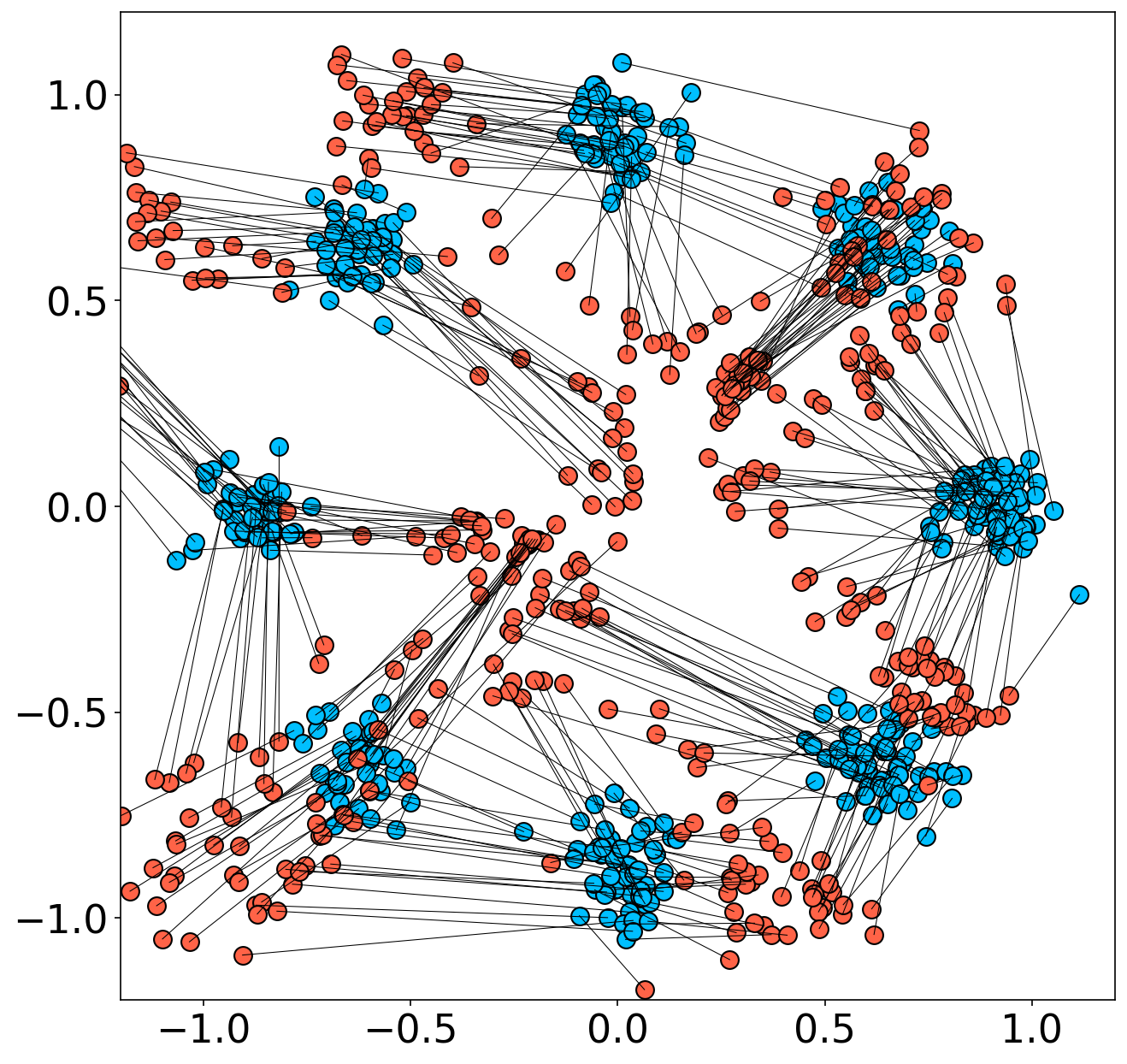}};
        \node[anchor=north west, inner sep=0] (pinn1) 
        at (0.60\textwidth,0) {\includegraphics[width=0.17\linewidth]{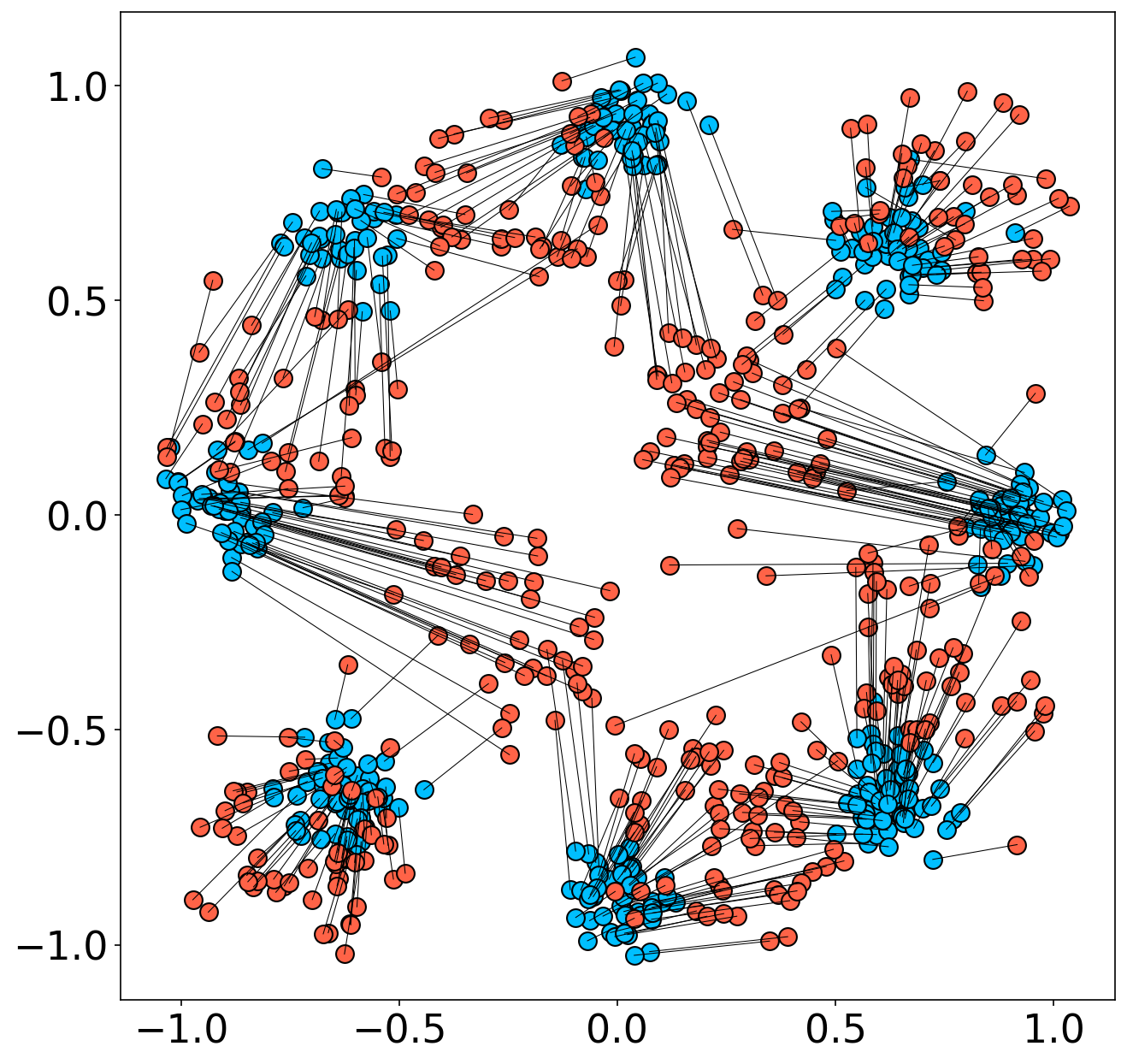}};
        \node[anchor=north west, inner sep=0] (ours1) 
        at (0.80\textwidth,0) {\includegraphics[width=0.17\linewidth]{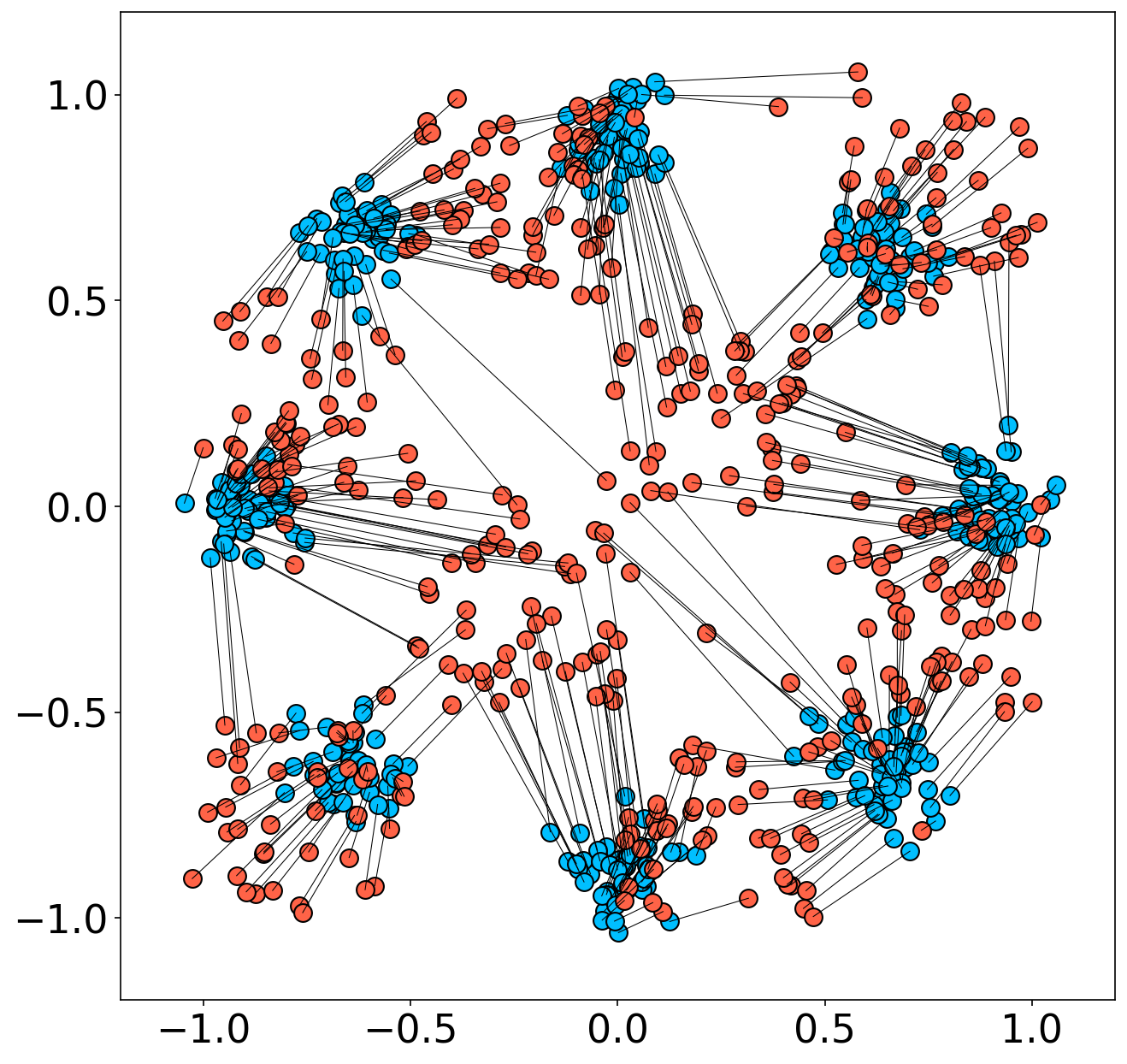}};

        \node[anchor=north west, inner sep=0] (source_nu) 
        at (0,-2.8) {\includegraphics[width=0.17\linewidth]{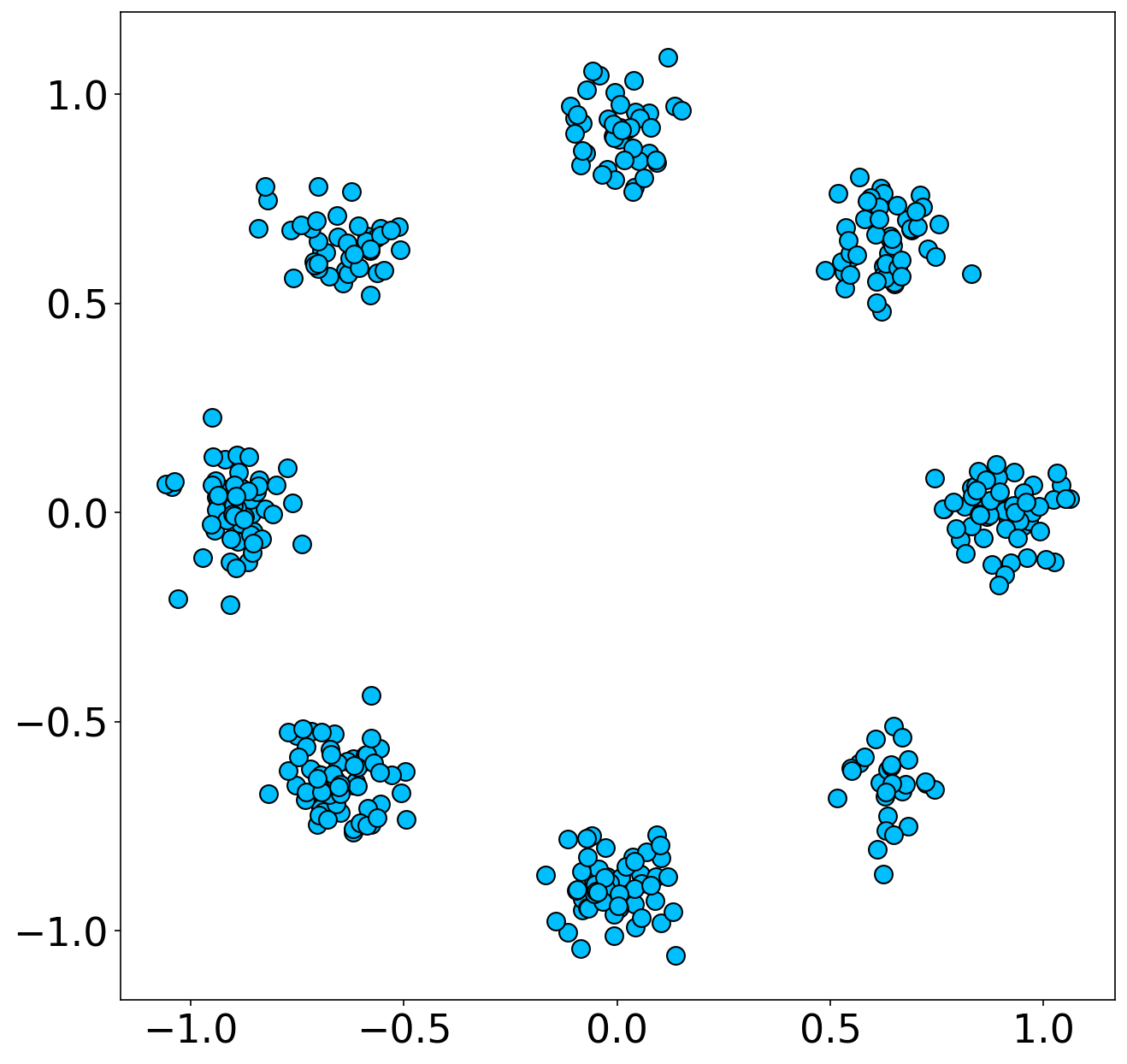}};
        \node[anchor=north west, inner sep=0] (not2) 
        at (0.20\textwidth,-2.8) {\includegraphics[width=0.17\linewidth]{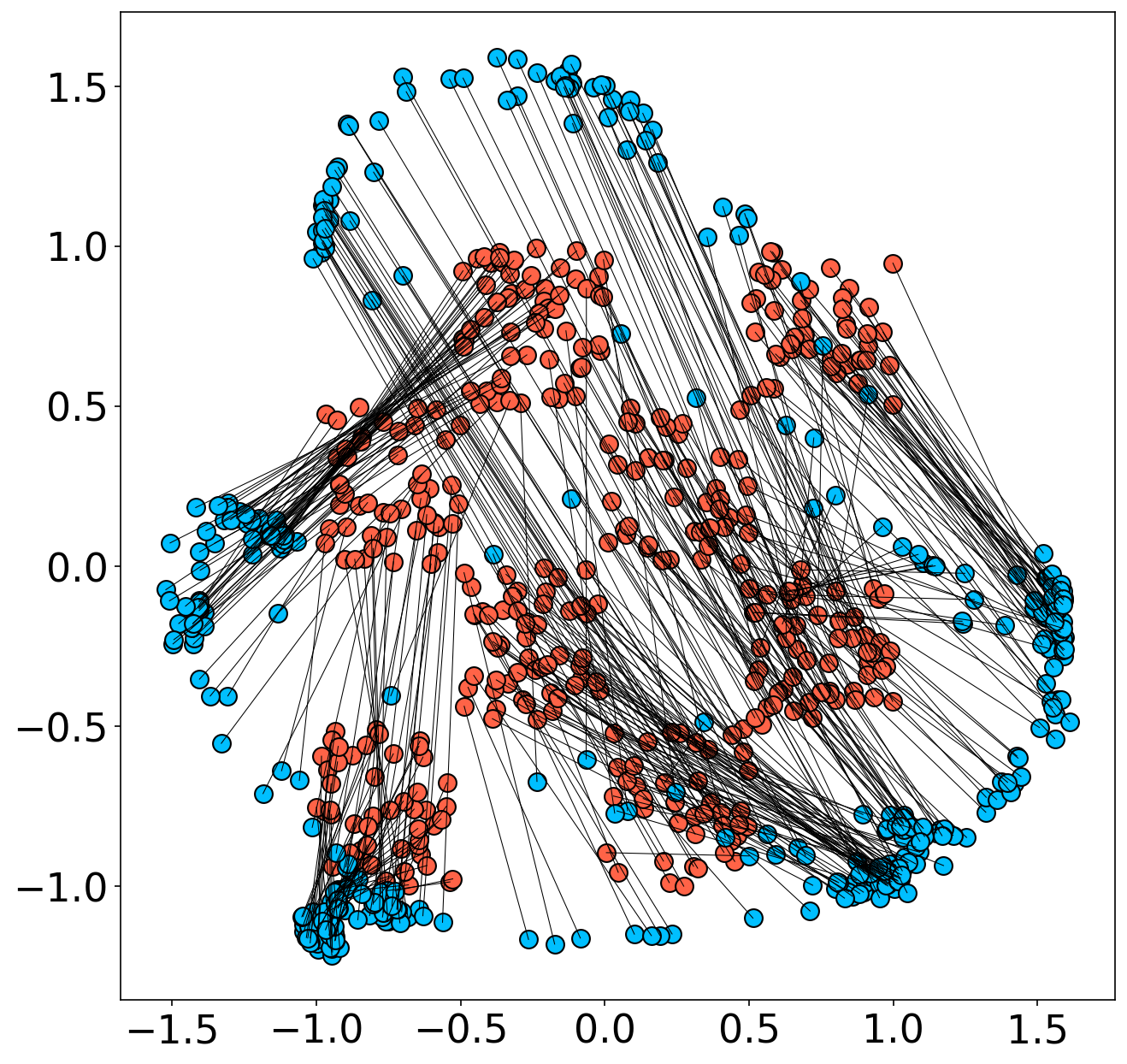}};
        \node[anchor=north west, inner sep=0] (notweak2) 
        at (0.40\textwidth,-2.8) {\includegraphics[width=0.17\linewidth]{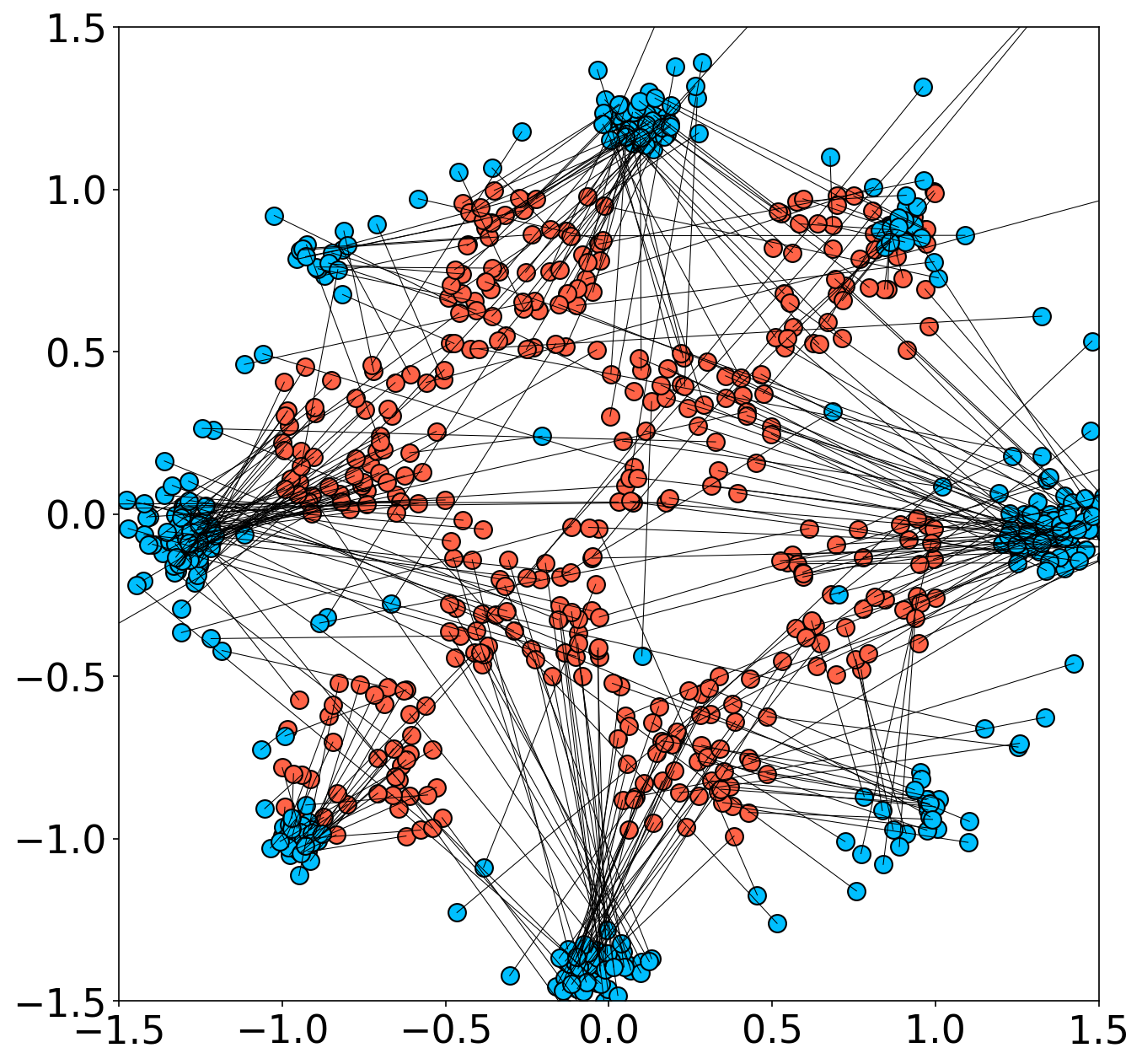}};
        \node[anchor=north west, inner sep=0] (pinn2) 
        at (0.60\textwidth,-2.8) {\includegraphics[width=0.17\linewidth]{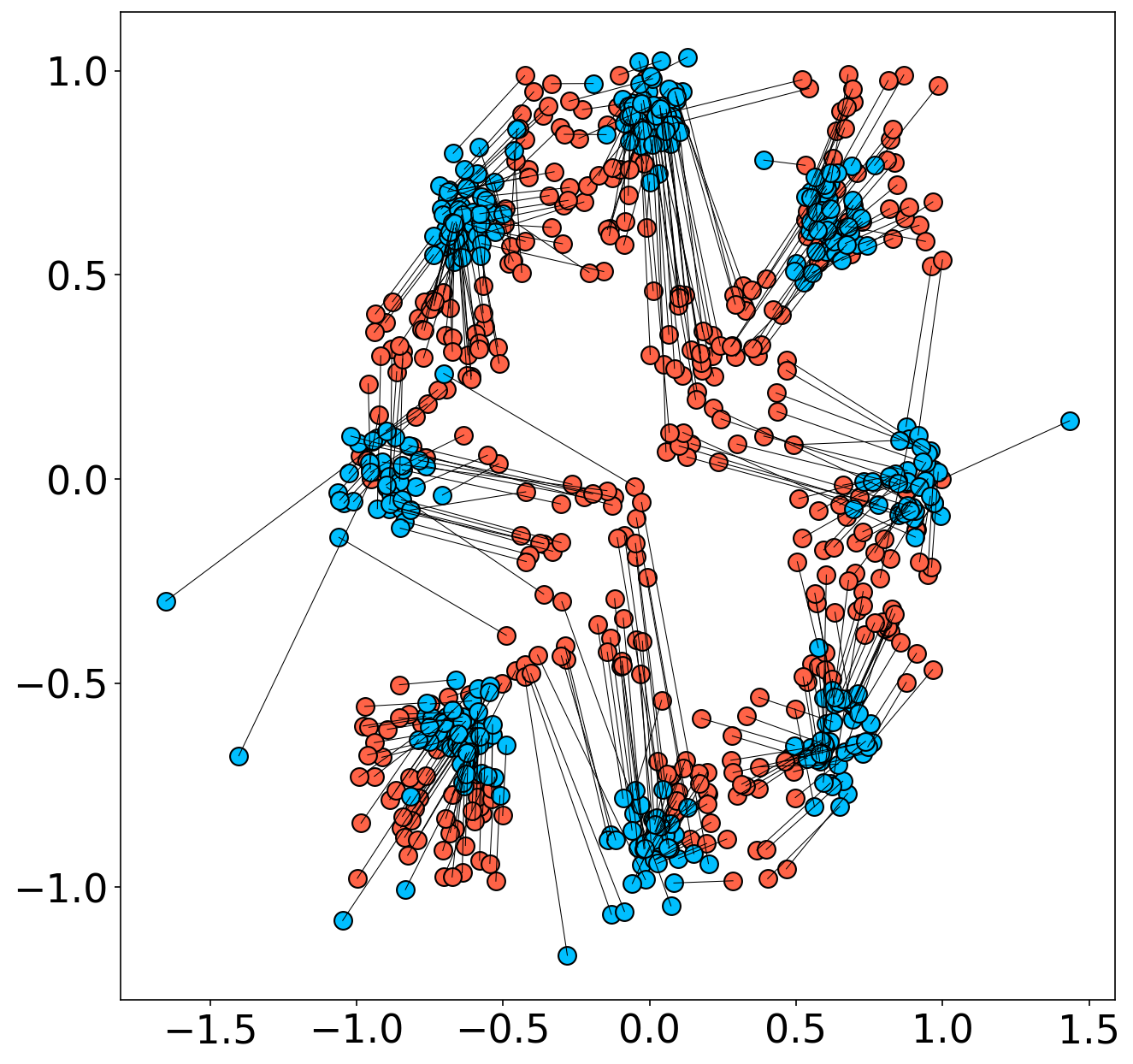}};
        \node[anchor=north west, inner sep=0] (ours2) 
        at (0.80\textwidth,-2.8) {\includegraphics[width=0.17\linewidth]{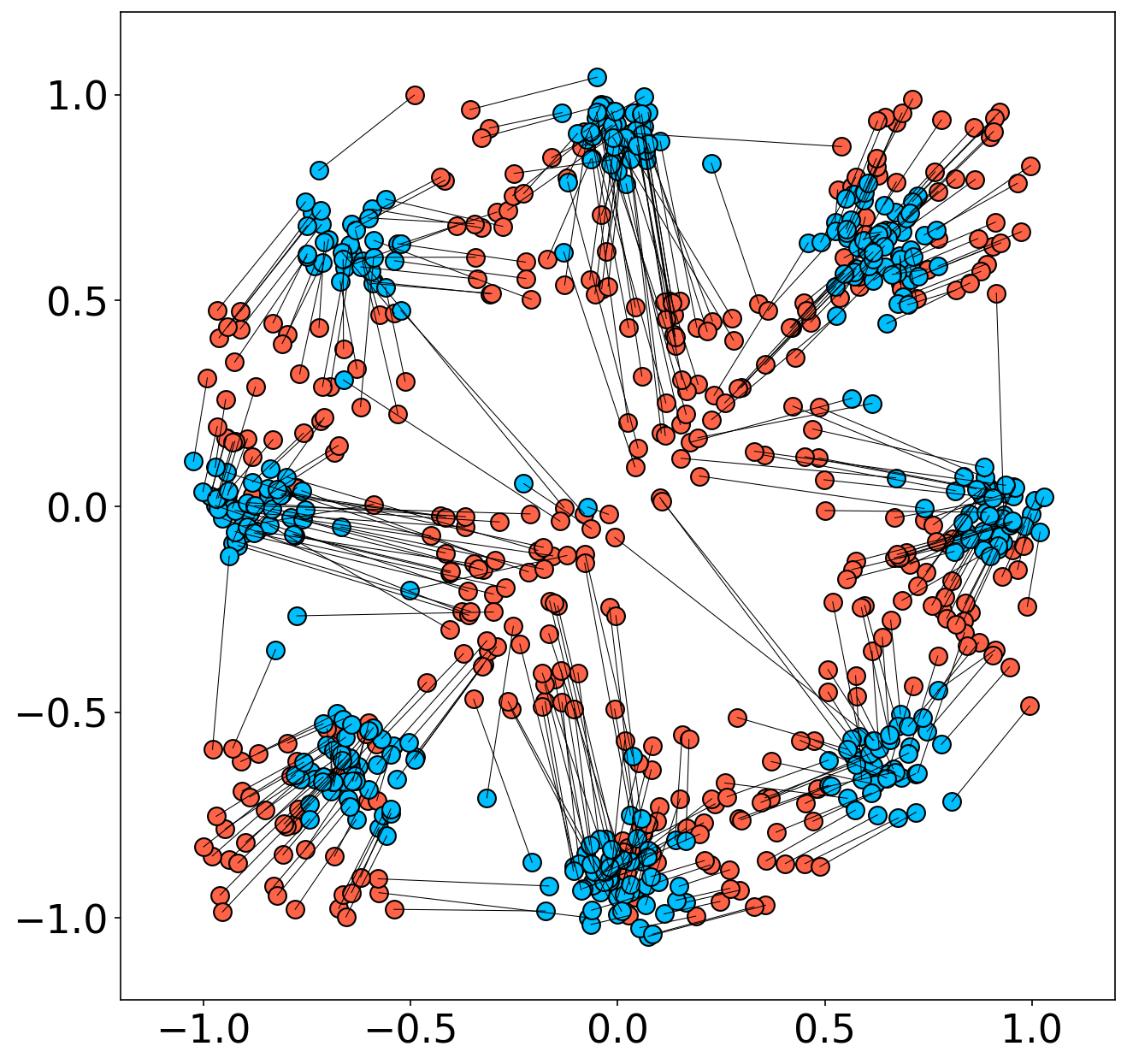}};

        \node[rotate=90, anchor=north] at ($(source_mu.west) + (-0.5, -0.2)$) {$\nu \rightarrow \mu$};
        \node[rotate=90, anchor=north] at ($(source_nu.west) + (-0.5, -0.2)$) {$\mu \rightarrow \nu$};

        \node[anchor=north] at (source_mu.south) {(a) Source $\mu$};
        \node[anchor=north] at (not1.south) {(b) Strong NOT};
        \node[anchor=north] at (notweak1.south) {(c) Weak NOT};
        \node[anchor=north] at (pinn1.south) {(d) HJ-PINN};
        \node[anchor=north] at (ours1.south) {(e) NCF};

        \node[anchor=north] at (source_nu.south) {(f) Target $\nu$};
        \node[anchor=north] at (not2.south) {(g) Strong NOT};
        \node[anchor=north] at (notweak2.south) {(h) Weak NOT};
        \node[anchor=north] at (pinn2.south) {(i) HJ-PINN};
        \node[anchor=north] at (ours2.south) {(j) NCF};
    \end{tikzpicture}
    \caption{
        \textit{Checkerboard ($\mu$) $\rightleftarrows$ Eight Gaussians ($\nu$):} 
        The top row shows transport in the direction $\nu \rightarrow \mu$, and the bottom row shows $\mu \rightarrow \nu$, with $\mu$ and $\nu$ at the leftmost column.
    }
    \label{fig:2d_checkerboard_8gaussians}
\end{figure}
\paragraph{Ablation Study on the Regularization Parameter}
We further present an ablation study to investigate the effect of the regularization parameters \( \lambda_f \) and \( \lambda_b \) in the proposed loss function~\eqref{eq:population_loss}. These parameters control the balance between the implicit HJ loss and the MMD loss. Specifically, the implicit HJ loss promotes the optimality of the transport map by encouraging alignment with the HJ equation, while the MMD loss measures how well the transported distribution matches the target distribution. Since all experiments in the paper are conducted under the setting \( \lambda_f = \lambda_b \), we vary \( \lambda_f \) to examine how this trade-off influences the learned transport map.
We conduct experiments on the two-dimensional examples introduced in Section~\ref{sec:2D_unconditional} and above. The results are summarized in Figure~\ref{fig:ablation_reg_param}. The case \( \lambda_f = \infty \) corresponds to training without the implicit HJ loss, using only the MMD loss.

As shown in the figure, when \( \lambda_f \) is small (i.e., the MMD loss dominates), the transported distribution aligns well with the target, but the resulting transport map becomes highly entangled. This indicates that the model learns a map far from the optimal one, due to the lack of guidance from the HJ constraint. In contrast, increasing \( \lambda_f \) enforces stronger adherence to the HJ equation, resulting in a smoother, more structured transport map that closely resembles the optimal solution. However, when \( \lambda_f \) becomes too small, the influence of the MMD term diminishes, leading to inaccurate matching of the distributions.
These results highlight the complementary roles of the implicit HJ loss and the MMD loss, as also supported by Theorem~\ref{thm:stability}, and underscore the importance of appropriately tuning \( \lambda_f \). Additionally, the relatively small difference in performance between \( \lambda_f = 0.1 \) and \( \lambda_f = 0.05 \) suggests that the model is not overly sensitive to the choice of this parameter.

\begin{figure}
    \centering
    \begin{tikzpicture}[every node/.style={font=\small}]
        \node[anchor=north west, inner sep=0] (source) 
        at (0, 0) {\includegraphics[width=0.215\linewidth]{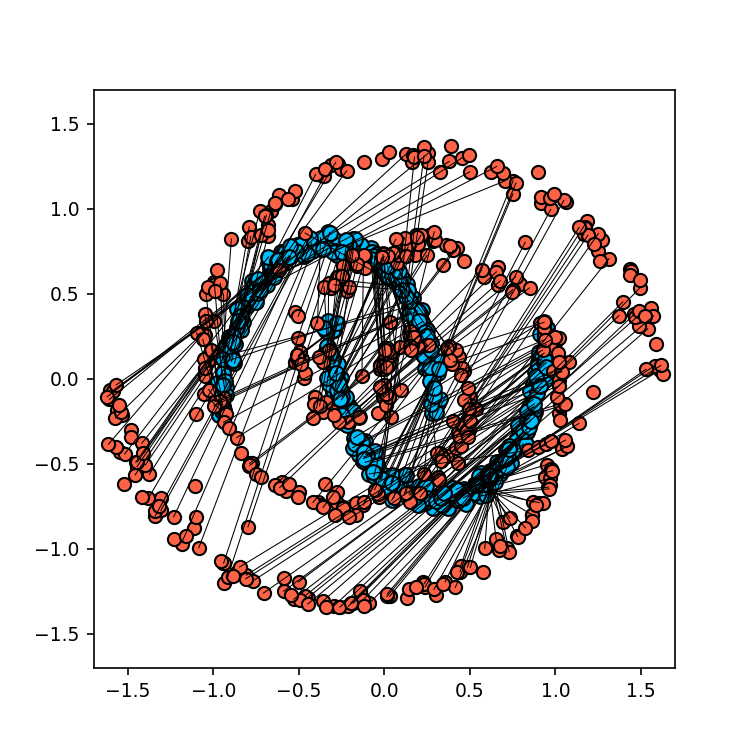}};
        \node[anchor=north west, inner sep=0] (target) 
        at (0, -2.8) {\includegraphics[width=0.215\linewidth]{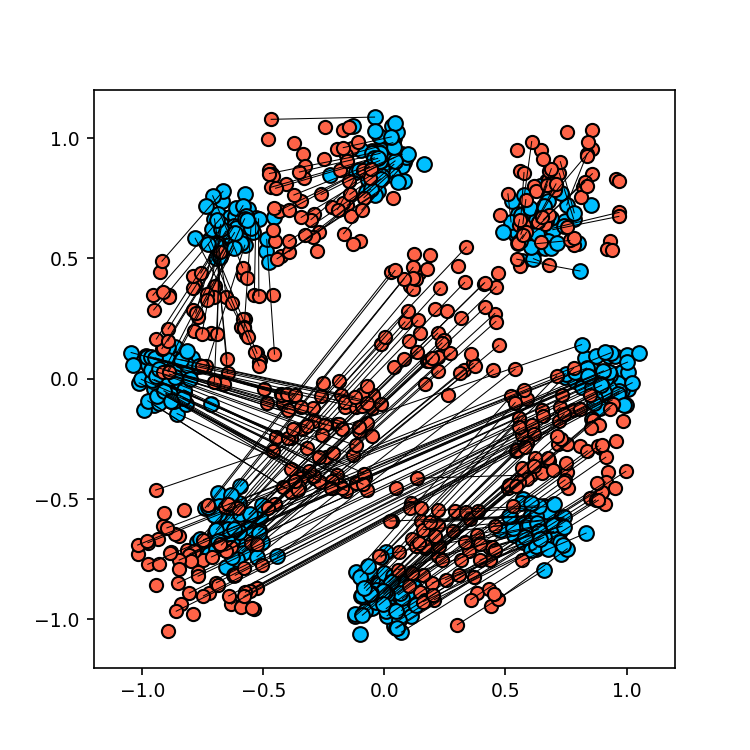}};
        
        \node[anchor=north west, inner sep=0] (not1) 
        at (0.20\textwidth, 0) {\includegraphics[width=0.215\linewidth]{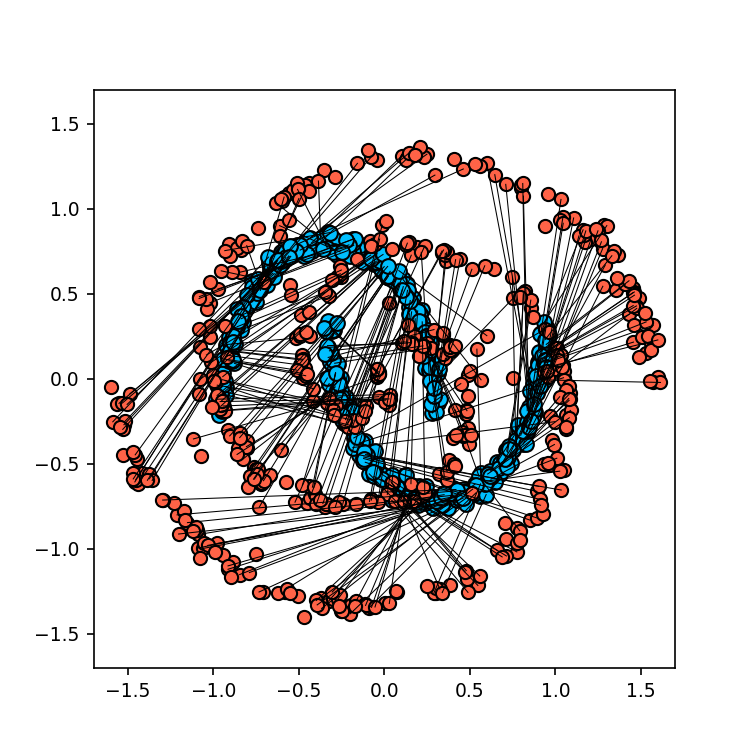}};
        \node[anchor=north west, inner sep=0] (notweak1) 
        at (0.40\textwidth, 0) {\includegraphics[width=0.215\linewidth]{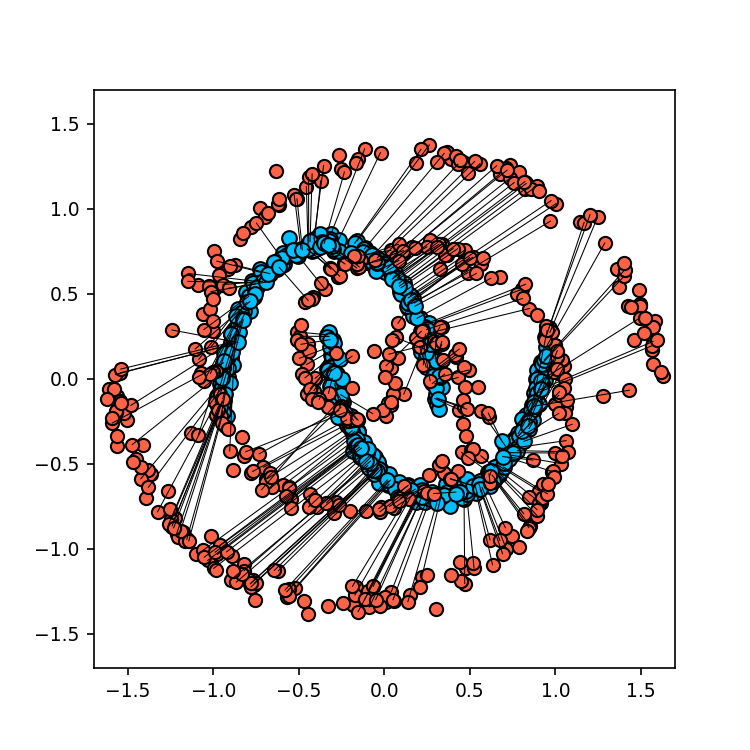}};
        \node[anchor=north west, inner sep=0] (pinn1) 
        at (0.60\textwidth, 0) {\includegraphics[width=0.215\linewidth]{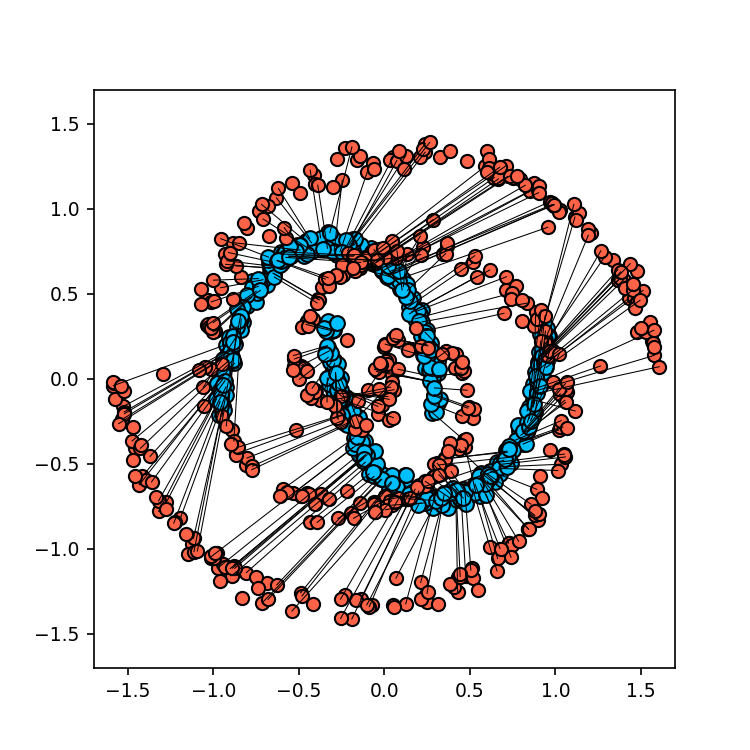}};
        \node[anchor=north west, inner sep=0] (ours1) 
        at (0.80\textwidth, 0) {\includegraphics[width=0.215\linewidth]{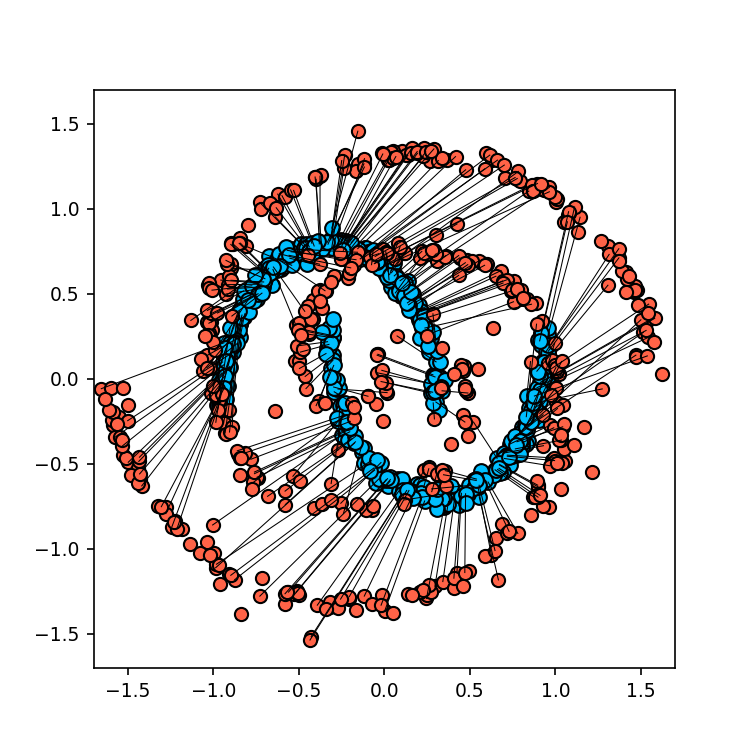}};
        
        \node[anchor=north west, inner sep=0] (not2) 
        at (0.20\textwidth, -2.8) {\includegraphics[width=0.215\linewidth]{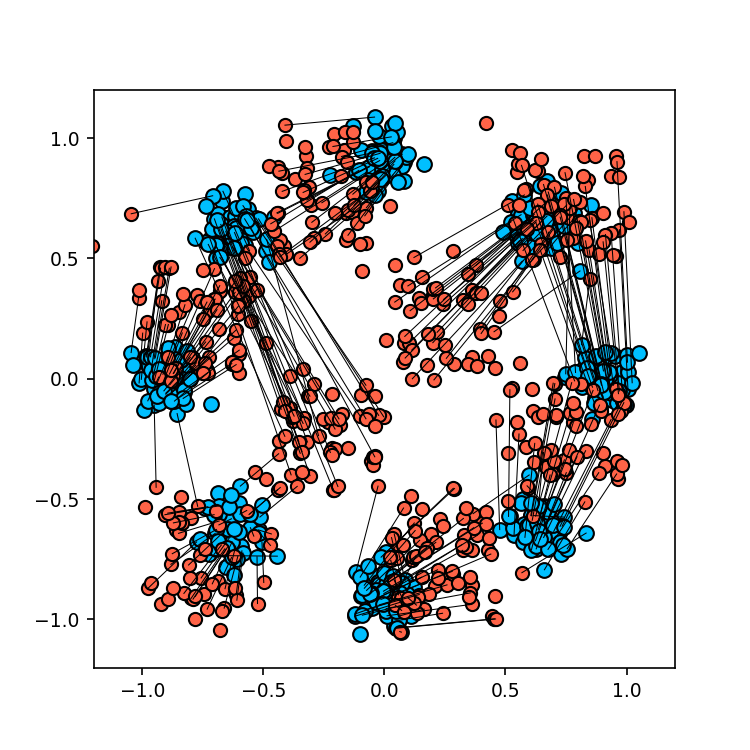}};
        \node[anchor=north west, inner sep=0] (notweak2) 
        at (0.40\textwidth, -2.8) {\includegraphics[width=0.215\linewidth]{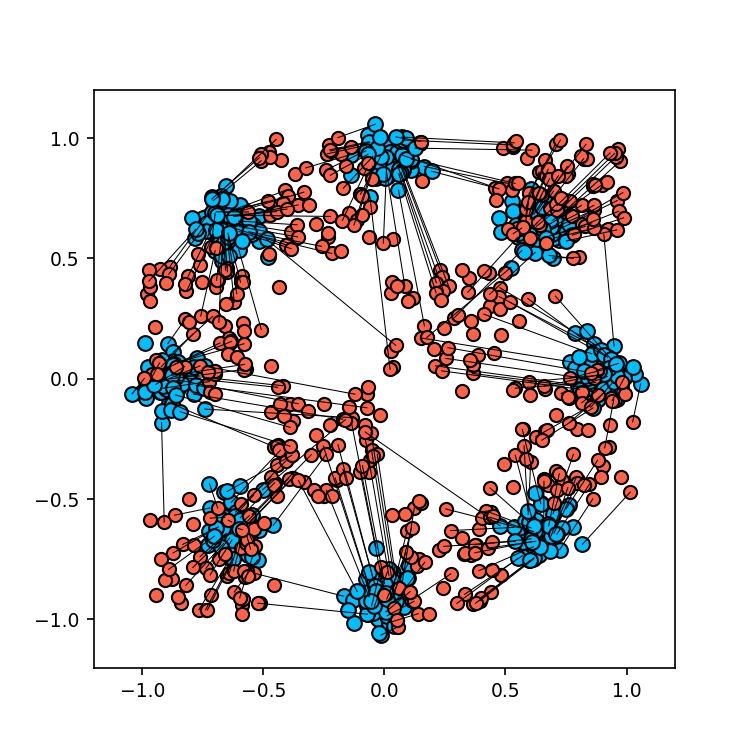}};
        \node[anchor=north west, inner sep=0] (pinn2) 
        at (0.60\textwidth, -2.8) {\includegraphics[width=0.215\linewidth]{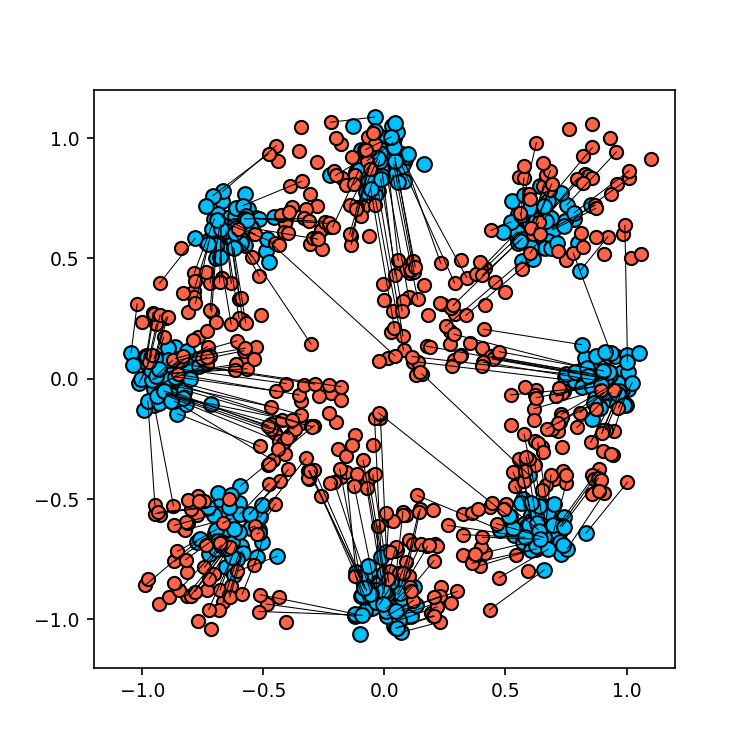}};
        \node[anchor=north west, inner sep=0] (ours2) 
        at (0.80\textwidth, -2.8) {\includegraphics[width=0.215\linewidth]{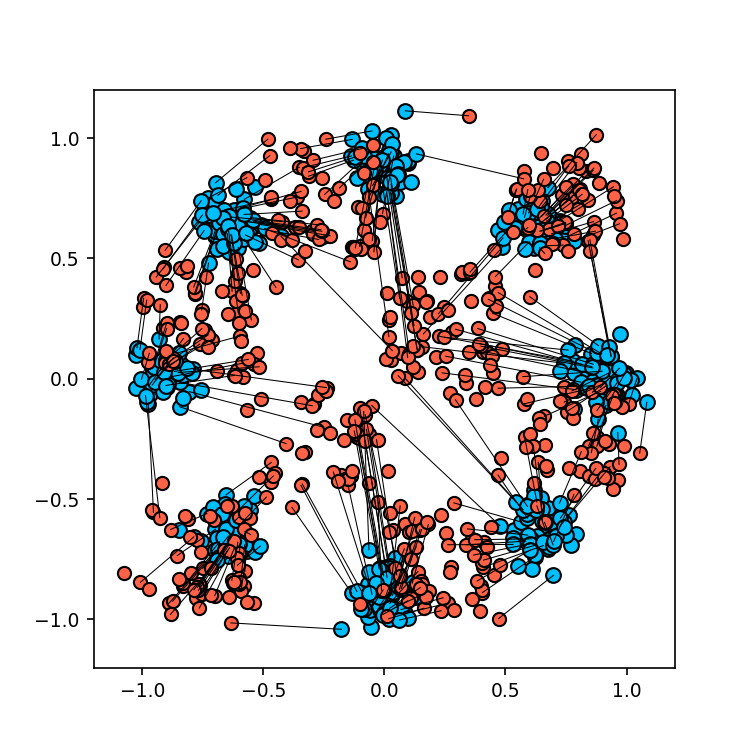}};
        
        \node[anchor=north] at (target.south) {(a) $\lambda_f=\infty$};
        \node[anchor=north] at (not2.south) {(b) $\lambda_f=1$};
        \node[anchor=north] at (notweak2.south) {(c) $\lambda_f=0.1$};
        \node[anchor=north] at (pinn2.south) {(d) $\lambda_f=0.05$};
        \node[anchor=north] at (ours2.south) {(e) $\lambda_f=0.01$};
    \end{tikzpicture}
    \caption{Effect of the regularization parameter $\lambda_f$	
  in the loss function. Each figure visualizes the transport from the source distribution $\mu$ (blue) to the target distribution $\nu$ (red) for varying values of$\lambda_f$. Results are shown for two examples introduced in Section~\ref{sec:2D_unconditional} and Appendix~\ref{appen:2dtoy}.}
    \label{fig:ablation_reg_param}
\end{figure}

\subsection{Additional Qualitative Results for Color Transfer}\label{appen:extra_results_color_transfer}
Figures~\ref{fig:winter_summer}, \ref{fig:gogh_image}, and \ref{fig:monet_image} present qualitative results for bidirectional color transfer across three distinct categories of image pairs. In each figure, the leftmost columns display the source and target images, while the remaining columns show the results of various methods applied in both the forward (source $\rightarrow$ target) and backward (target $\rightarrow$ source) directions.

\begin{figure}
    \centering
    \includegraphics[width=0.99\linewidth]{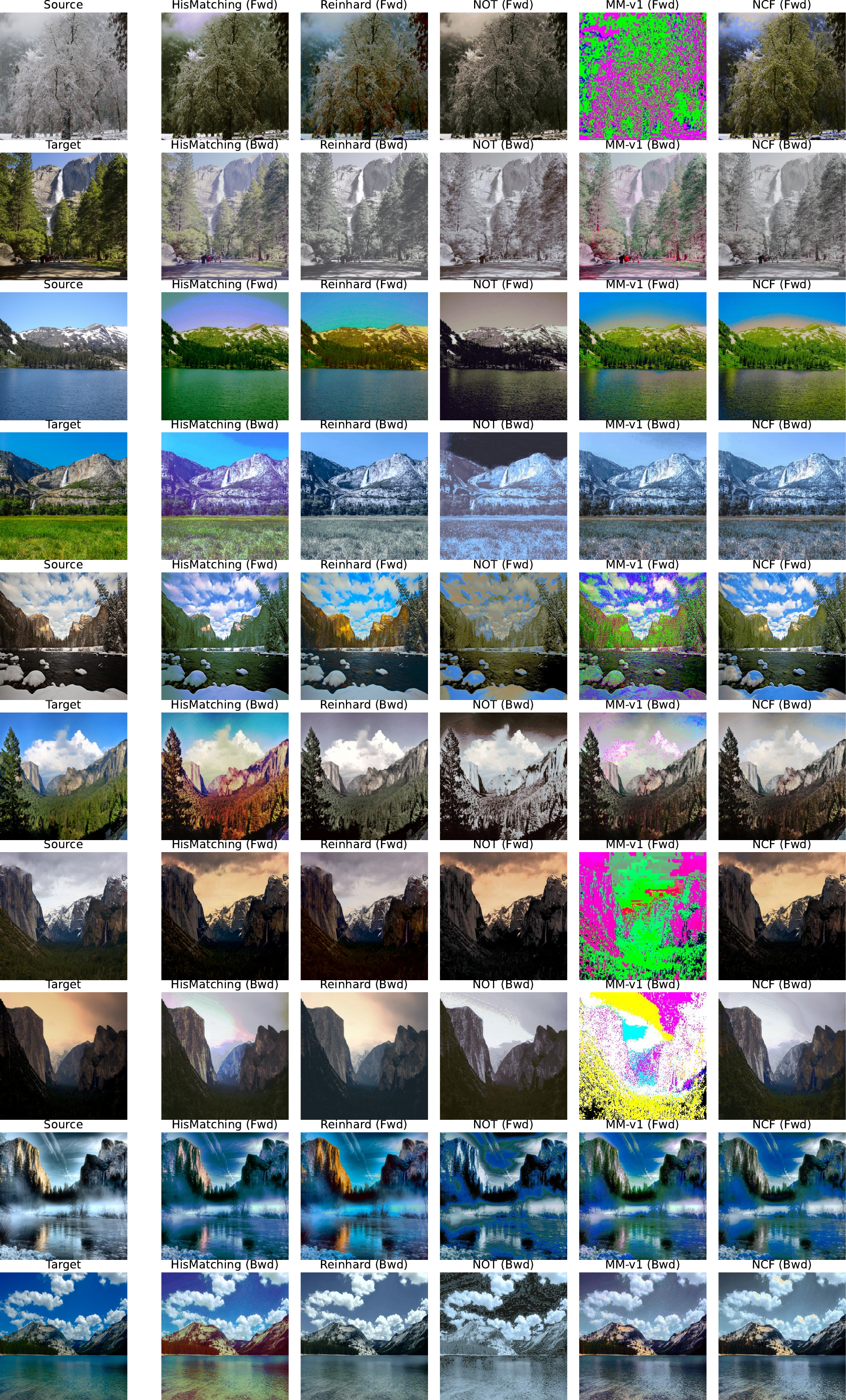}
    \caption{\textit{Winter $\leftrightarrow$ Summer:} Qualitative results for bidirectional color transfer between seasonal image pairs.}
    \label{fig:winter_summer}
\end{figure}

\begin{figure}
    \centering
    \includegraphics[width=0.99\linewidth]{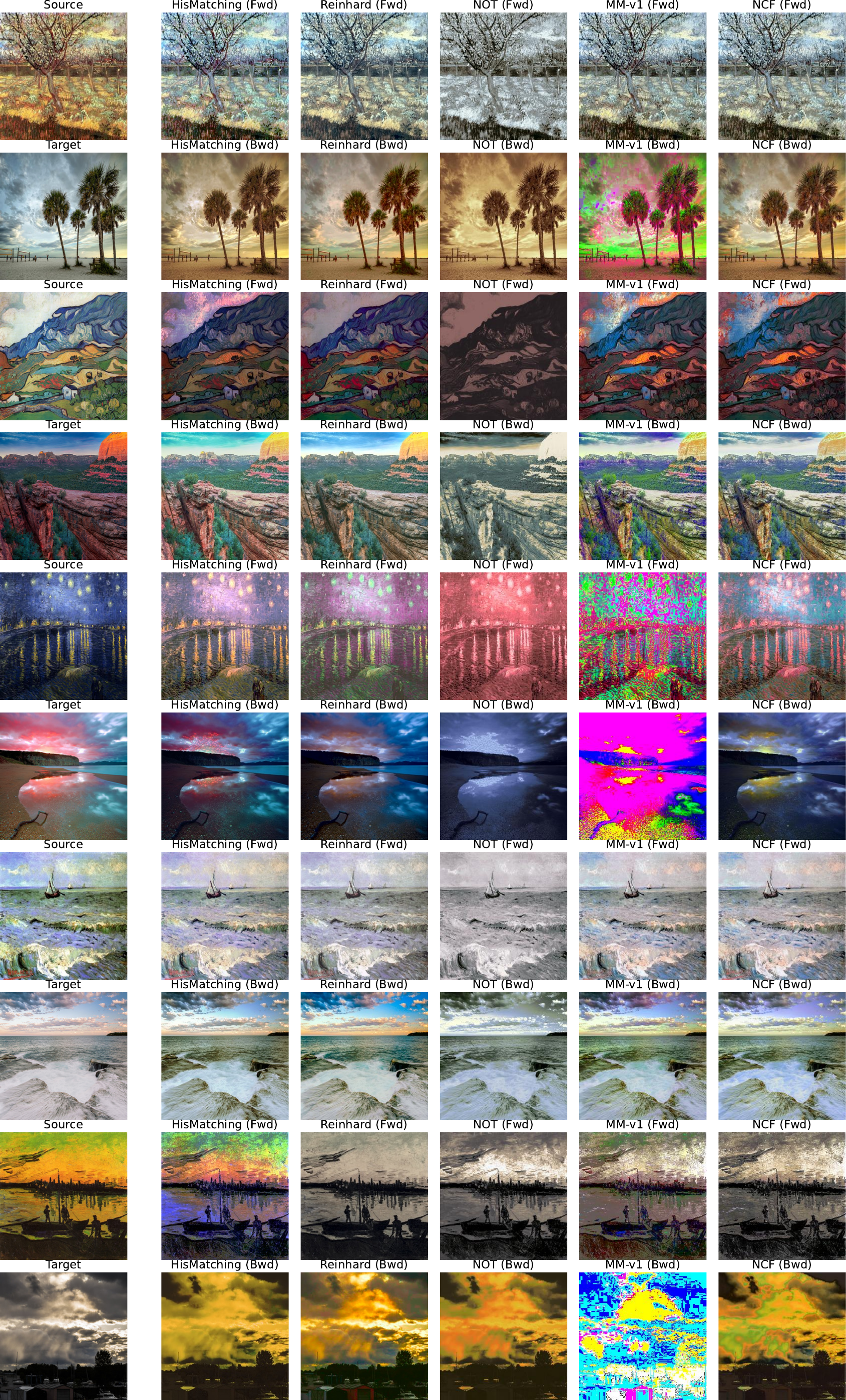}
    \caption{\textit{Gogh painting $\leftrightarrow$ Photograph:} Qualitative results for bidirectional color transfer between Gogh paintings and real-world photographs.}
    \label{fig:gogh_image}
\end{figure}

\begin{figure}
    \centering
    \includegraphics[width=0.99\linewidth]{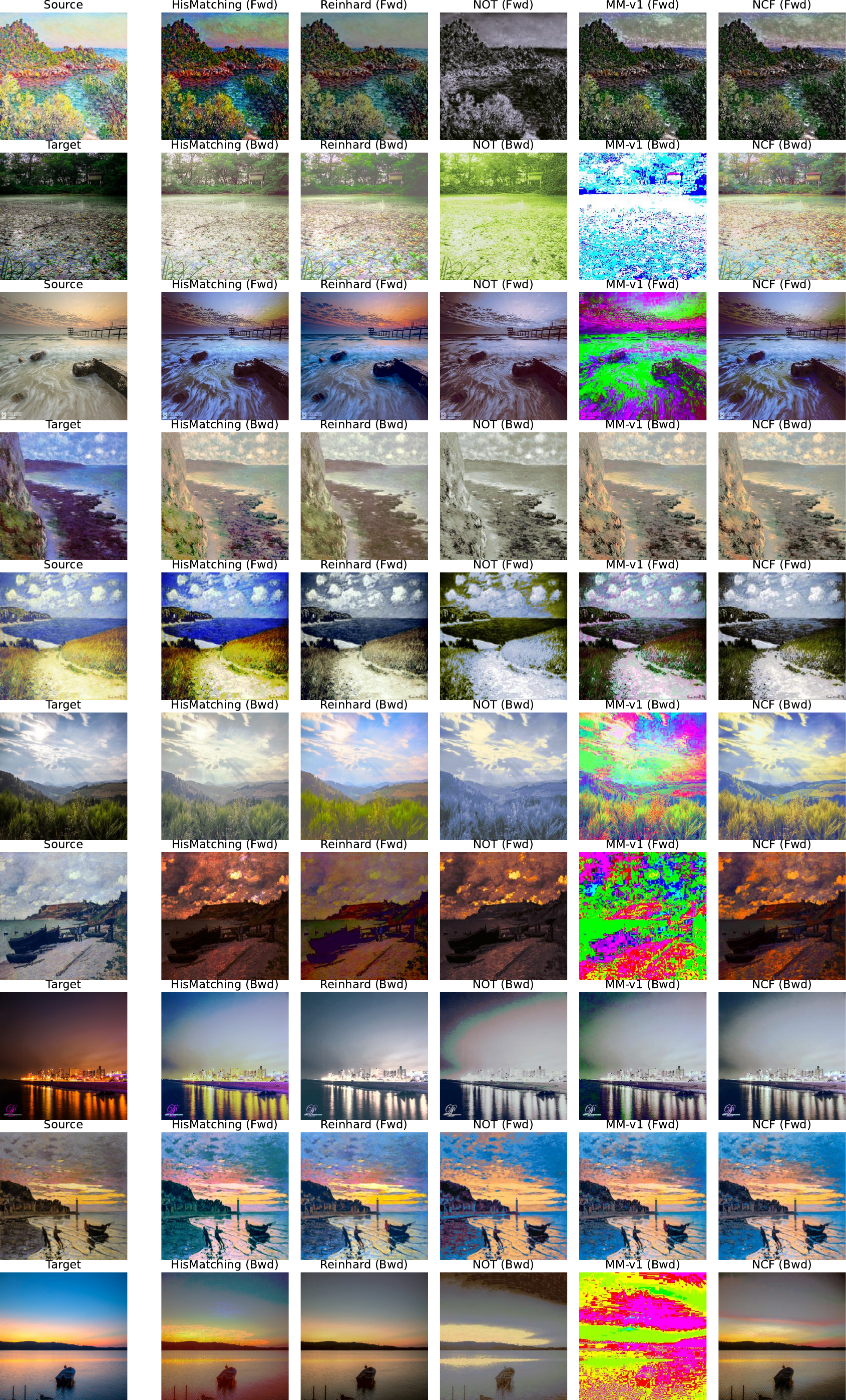}
    \caption{\textit{Monet painting $\leftrightarrow$ Photograph:} Qualitative results for bidirectional color transfer between Monet paintings and real-world photographs.}
    \label{fig:monet_image}
\end{figure}

\subsection{Additional Results for Class-Conditional OT for MNIST \& Fashion MNIST Datasets}\label{append:extra_results_MNIST_FMNIST}

In this section, we provide additional results from all the tasks conducted in our experiments. All methods are evaluated on the \textit{testing} portions of the MNIST and Fashion MNIST datasets. Following \citep{asadulaev2022neural}, we train ResNet-18 classifiers achieving $98.85\%$ accuracy for evaluation. A generated sample is deemed accurate if the trained classifier assigns it to the corresponding target class. Table \ref{tab:all_tasks_accuracy_fid} reports the accuracy and Fr\'echet Inception Distance (FID) of the generated images across the first two experimental tasks.

\begin{table}[htb!]
\centering
\caption{Performance metrics of NCF across all tasks. 
For the FID row, each bracketed value represents the distance between 
the OT-generated distribution $(D_\omega \circ T_\mu^\nu[u_\theta])_\sharp \mu$ 
(resp.\ $(D_\omega \circ T_\nu^\mu[u_\theta])_\sharp \nu$) and the decoded distribution 
$D_{\omega \sharp} \nu$ (resp.\ $D_{\omega \sharp} \mu$), 
where $D_\omega$ denotes the appropriate VAE decoder corresponding to $\mu$ or $\nu$.}\label{tab:all_tasks_accuracy_fid}
\resizebox{0.7\textwidth}{!}{
\begin{tabular}{c|cc|cc}
    \toprule
        \multirow{2}{*}{Metric} &
      \multicolumn{2}{c|}{Task 1} &
      \multicolumn{2}{c}{Task 2} \\
      & Forward & Backward & Forward & Backward \\
      \midrule
    Accuracy(\%) $\uparrow$ & 95.58 
    & 95.08 
    & 92.51 
    & 92.73 
    \\
    FID $\downarrow$ & 19.93 (2.65) 
    & 18.91 (2.45) 
    & 18.98 (2.25) 
    & 19.01 (2.26) 
    \\
    \bottomrule
\end{tabular}
}
\end{table}

\paragraph{Task 1 (In-class transfer: Map each MNIST class $i$ to the class $i + 5$, for \(i = 0, \ldots, 4\).
)} We present in Figure \ref{fig:uncurate_task1} the uncurated MNIST images generated using the forward (resp. backward) mapping, $D^{1}_\omega \circ T_\mu^\nu[u_\theta](\widetilde{\textbf{y}}^{(1)}), \; \widetilde{\textbf{y}}^{(1)} \! \sim \! \mu$ (resp. $D^{1}_\omega \circ T_\nu^\mu[u_\theta](\widetilde{\textbf{y}}^{(2)}), \; \widetilde{\textbf{y}}^{(2)} \! \sim \! \nu$).  

\begin{figure*}[htb!]
    \centering
    \subfigure[Images generated @ $t_f$.]{\includegraphics[trim={0 0 0 2cm},clip, width=0.26\textwidth]{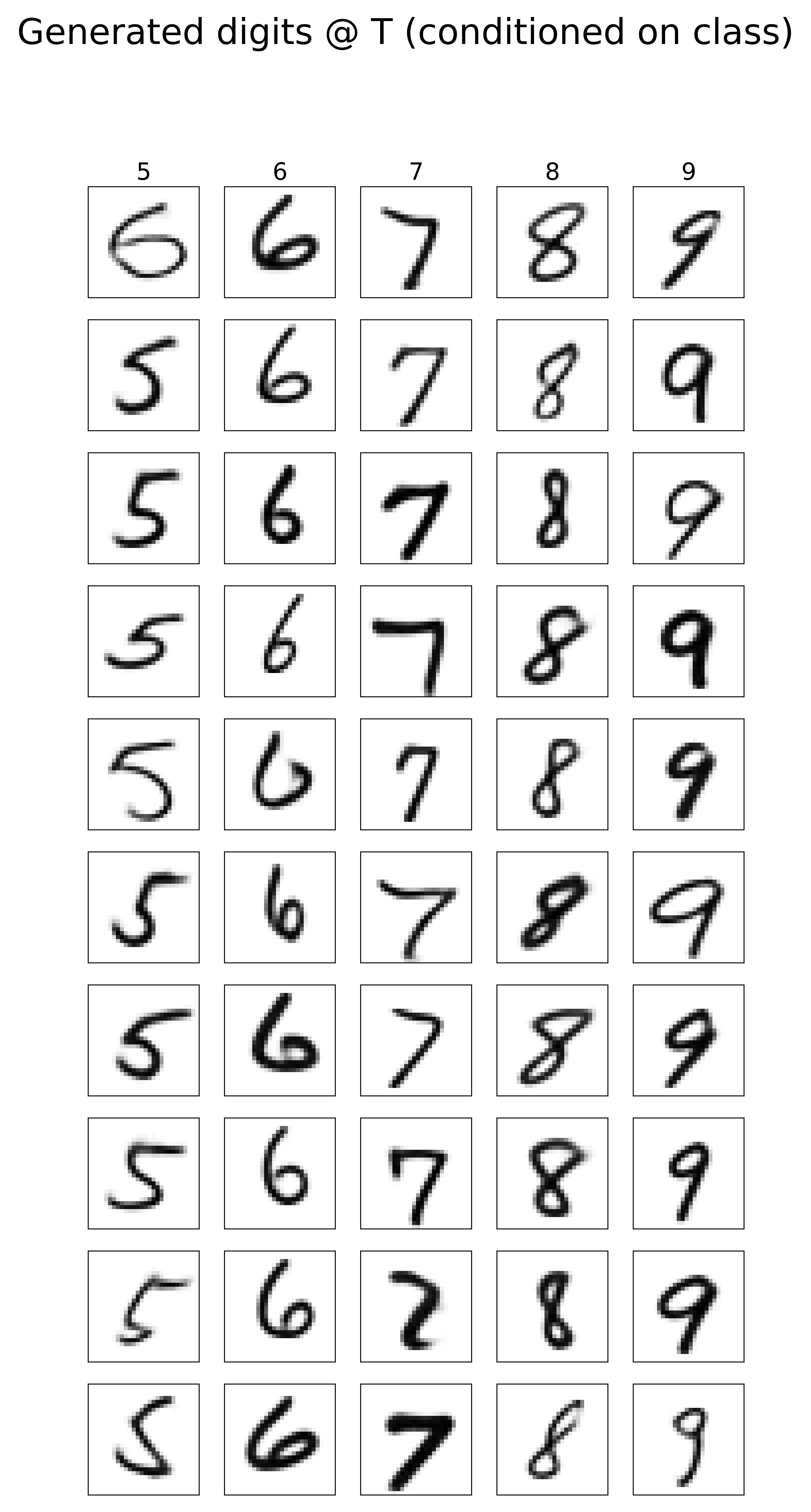}}\label{fig:1a} 
    \hspace{0.01\textwidth}
    \subfigure[Images generated @ $0$.]{\includegraphics[trim={0 0 0 2cm},clip, width=0.26\textwidth]{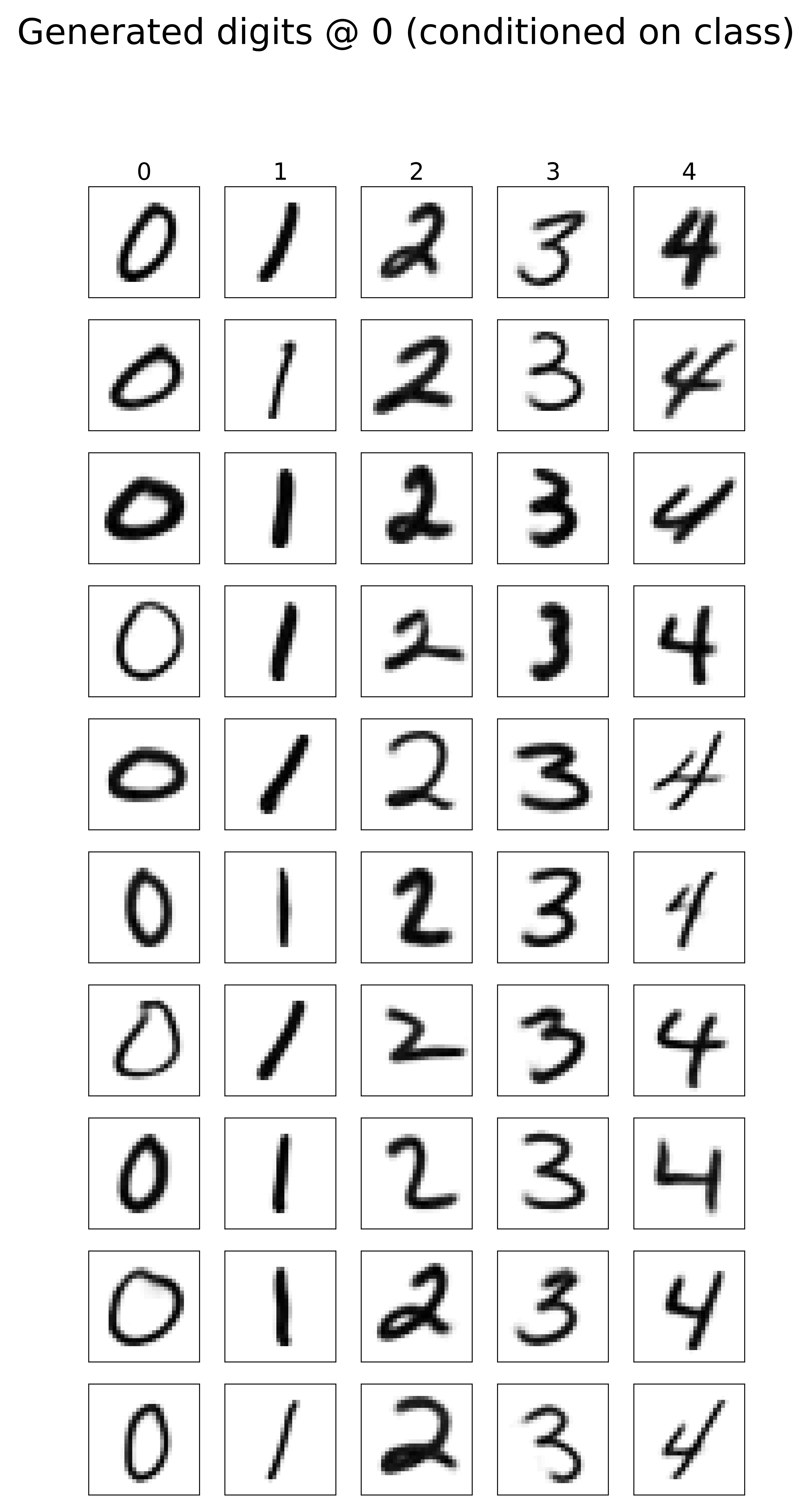}}\label{fig:1b}
    \caption{Task 1: Uncurated images generated using the computed forward (\textbf{Left}) \& backward (\textbf{Right}) OT maps.}\label{fig:uncurate_task1}
\end{figure*}

\paragraph{Task 2 (In-class shift: Map each MNIST class $i$ to the class \((i + 1) \bmod 10\), for \(i = 0, \ldots, 9\).
)} Similar to Task 1, Figure \ref{fig:uncurate_task2} shows the uncurated MNIST images generated by the computed mappings. In this experiment, the forward (resp. backward) maps are also trained without incorporating the implicit HJ loss $\mathcal L_{\text{HJ}}$, and are therefore not guaranteed to be optimal. By contrast, the mappings produced by our proposed method—explicitly designed to account for optimality—exhibit superior preservation of MNIST digit styles (e.g., thickness, orientation, etc.), as further illustrated in Figure \ref{fig:data_mfld}.

\begin{figure*}[t!]
    \centering
    \subfigure[Images generated @ $t_f$.]{\includegraphics[trim={0 0 0 2cm},clip, width=0.45\textwidth]{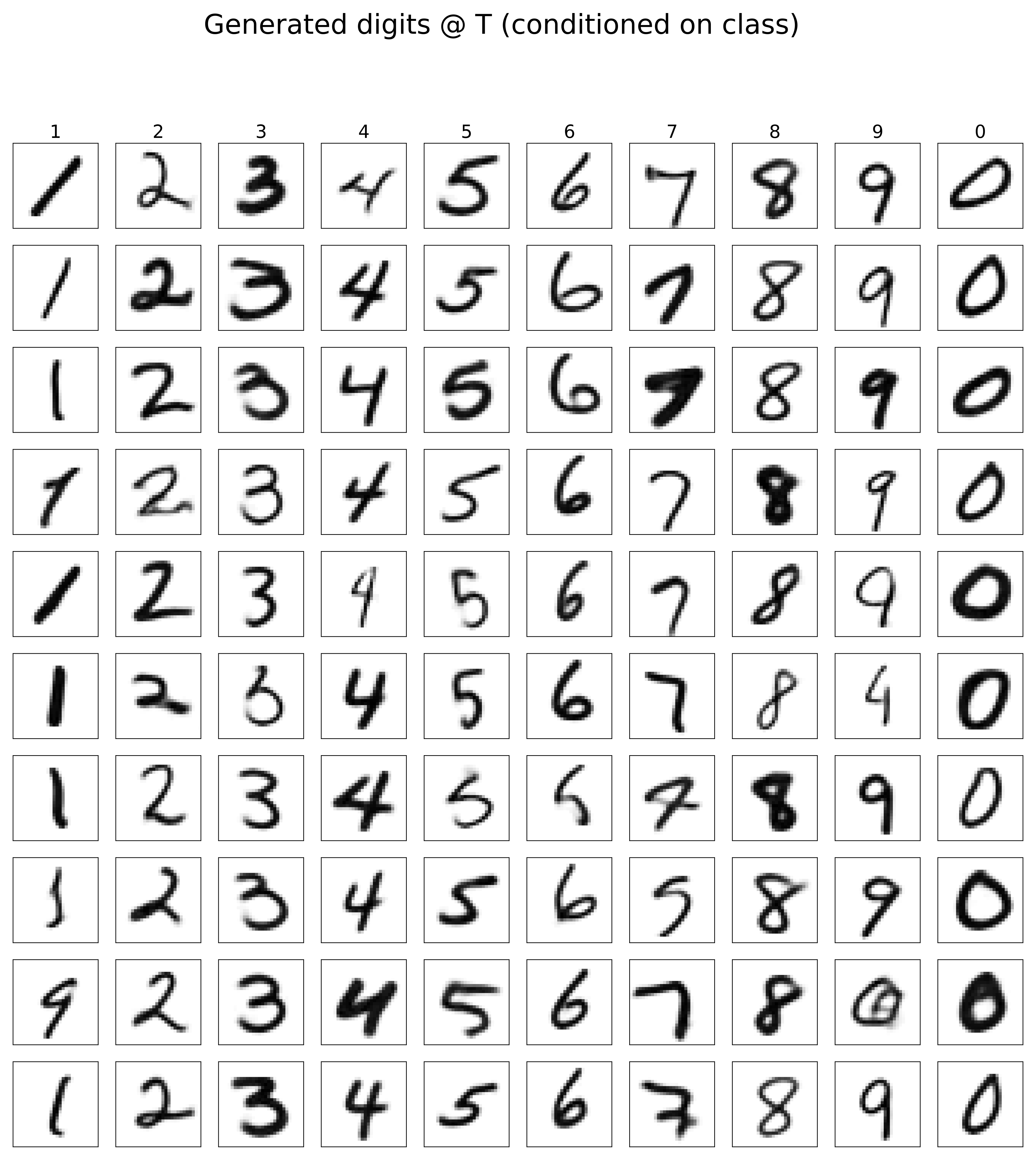}}\label{fig:2a} 
    \hfill
    \subfigure[Images generated @ $0$.]{\includegraphics[trim={0 0 0 2cm},clip, width=0.45\textwidth]{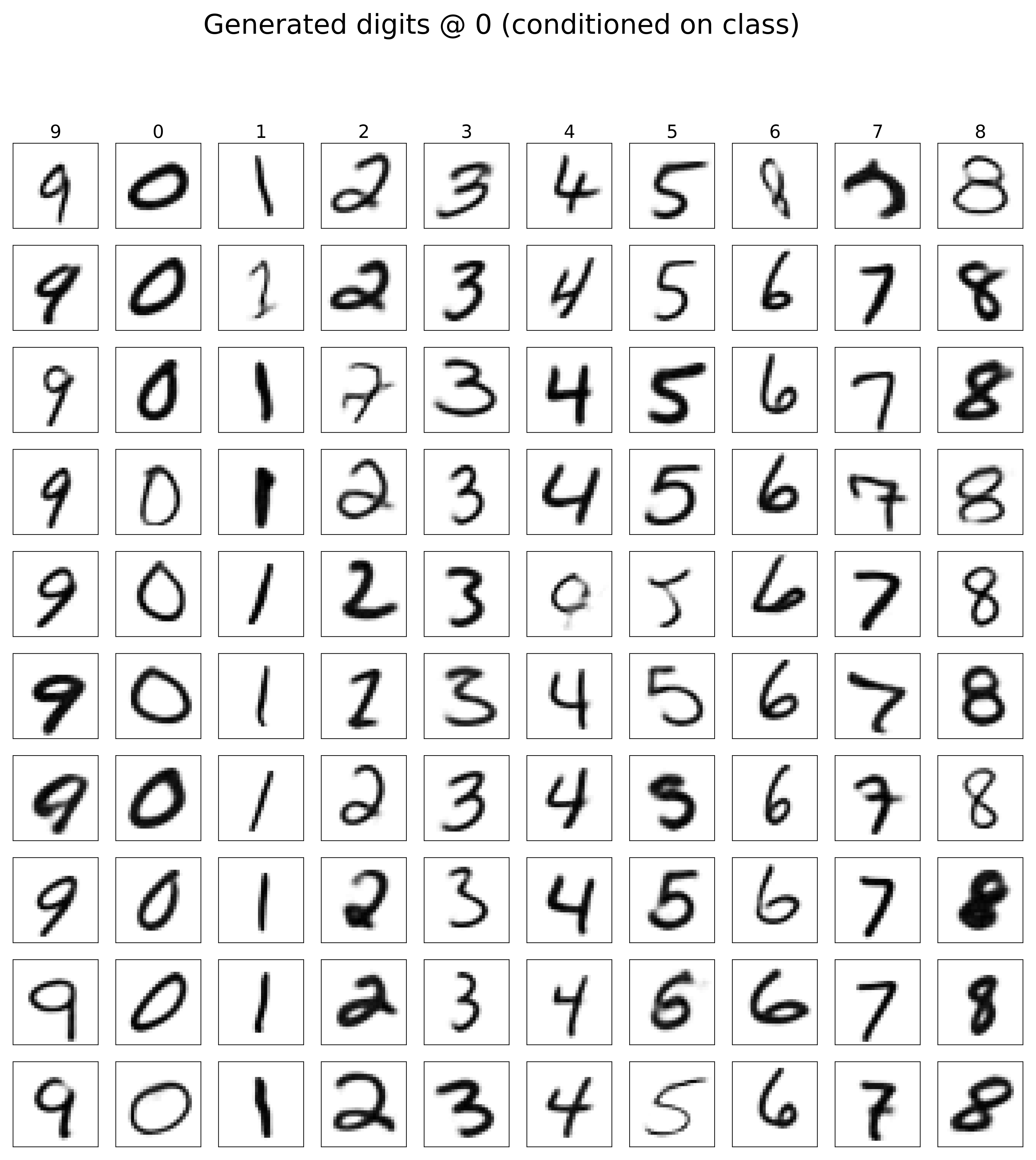}}\label{fig:2b}
    \caption{Task 2: Uncurated images generated using the computed forward (\textbf{Left}) \& backward (\textbf{Right}) OT maps.}\label{fig:uncurate_task2}
\end{figure*}

\begin{figure*}[t!]
    \centering
    \subfigure[{\tiny Manifold of digit $3$}]{\includegraphics[trim={0 0 0 0},clip, width=0.15\textwidth]{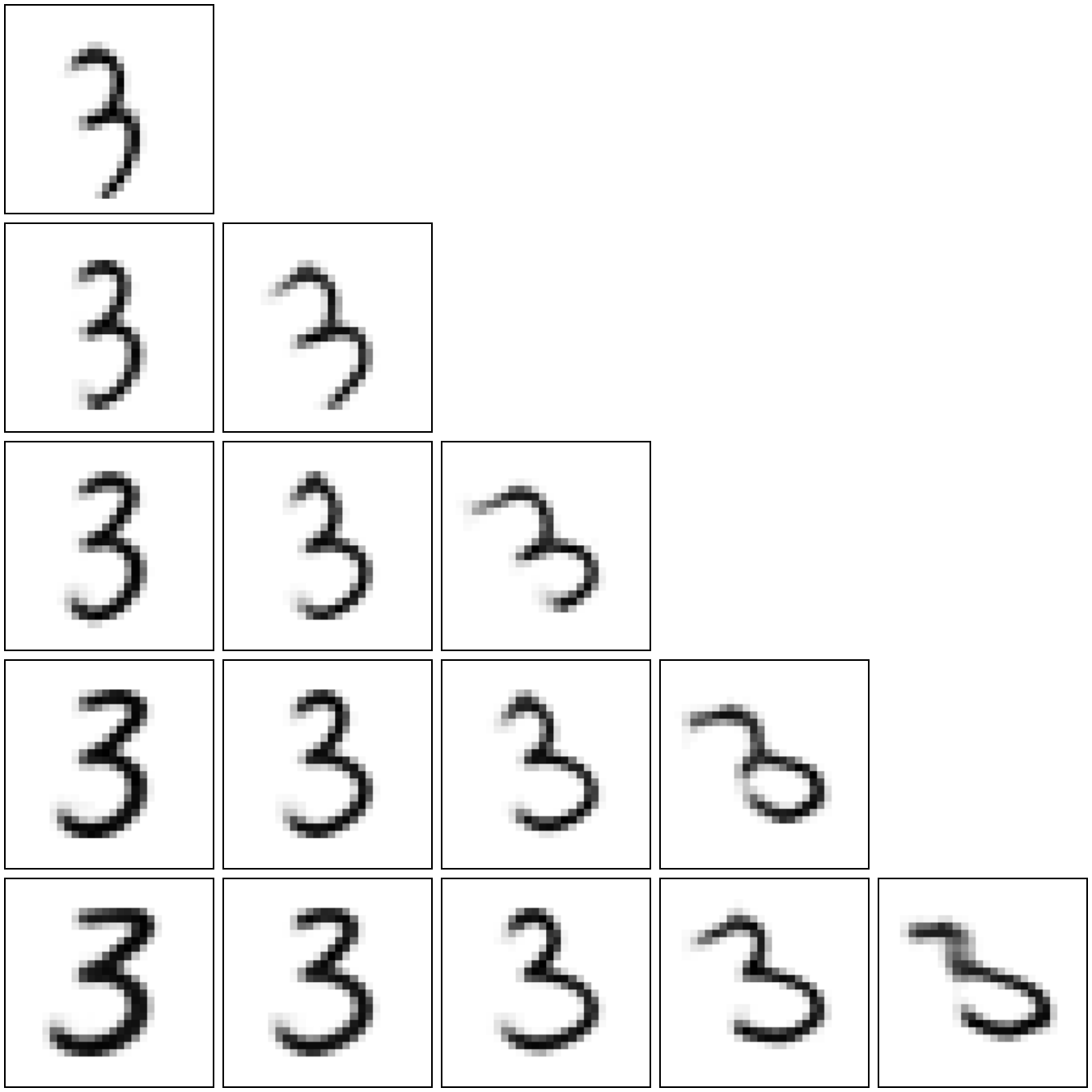}}\label{fig:MNIST2MNIST_mfld_3} 
    \subfigure[{\tiny Generated by OT map}]{\includegraphics[trim={0 0 0 0},clip, width=0.15\textwidth]{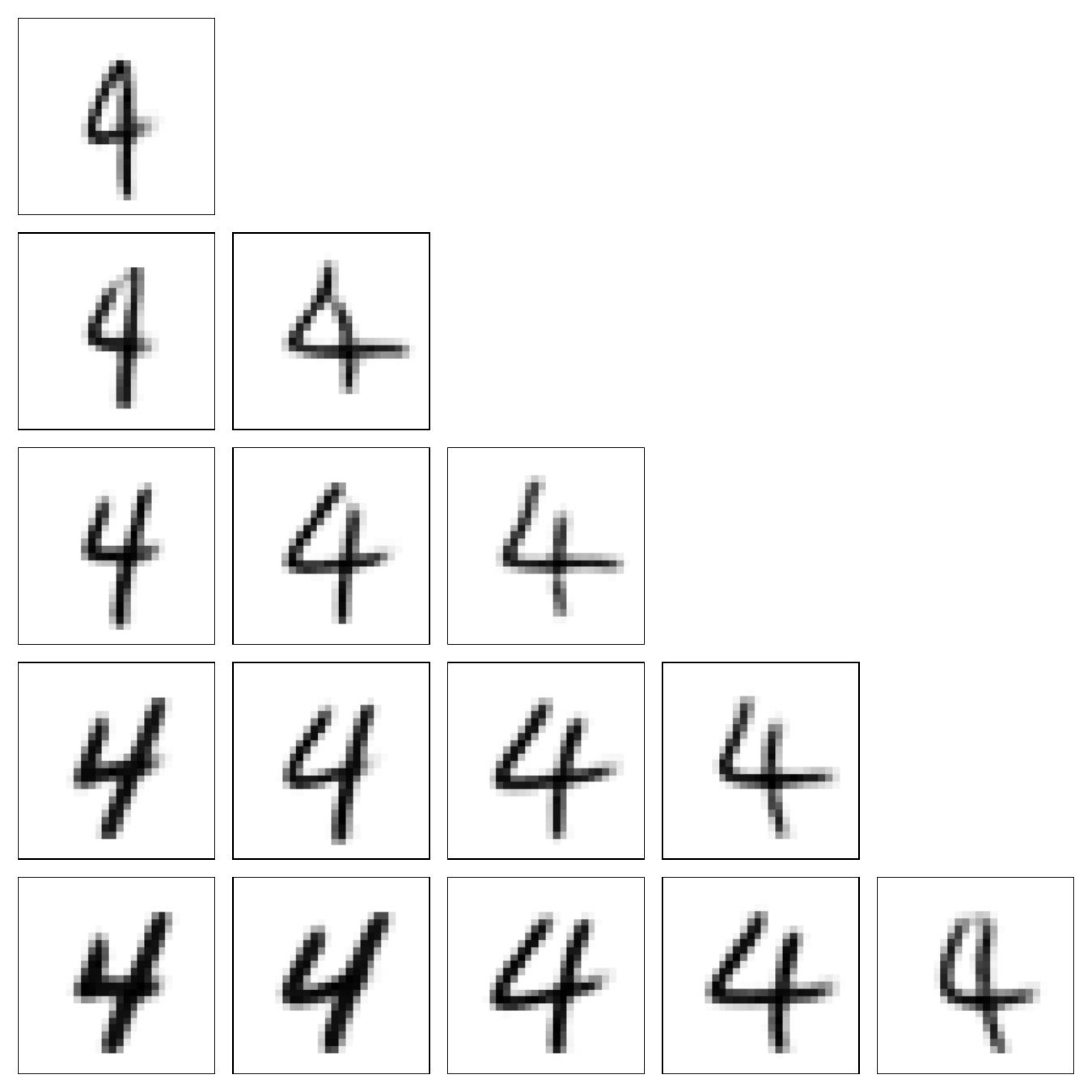}}\label{fig:MNIST2MNIST_mfld_4_OT}
    \subfigure[{\tiny Generated by non-optimal map}]{\includegraphics[trim={0 0 0 0},clip, width=0.15\textwidth]{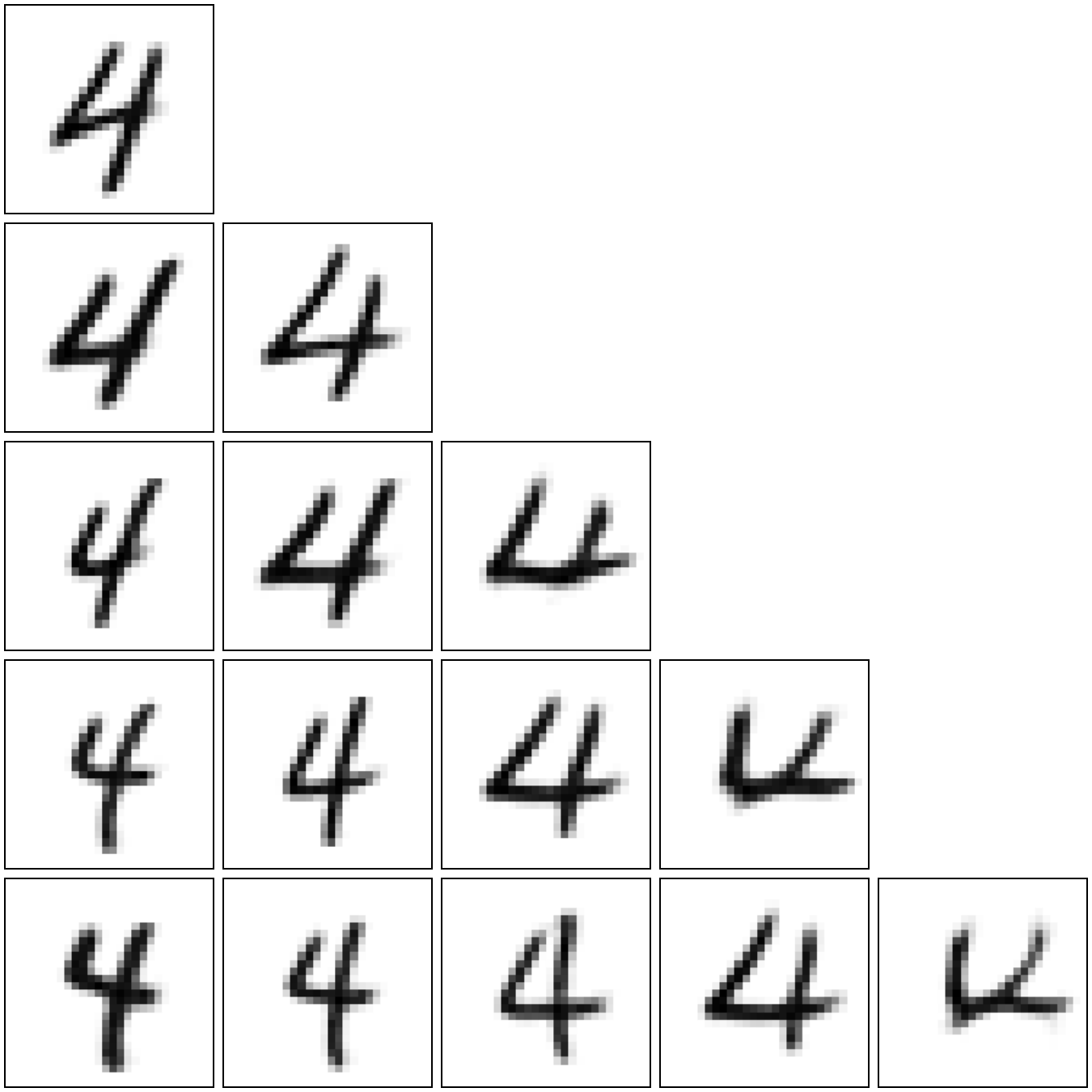}}\label{fig:MNIST2MNIST_mfld_4_non_OT}
    \hfill
    \subfigure[{\tiny Manifold of digit $8$}]{\includegraphics[trim={0 0 0 0},clip, width=0.15\textwidth]{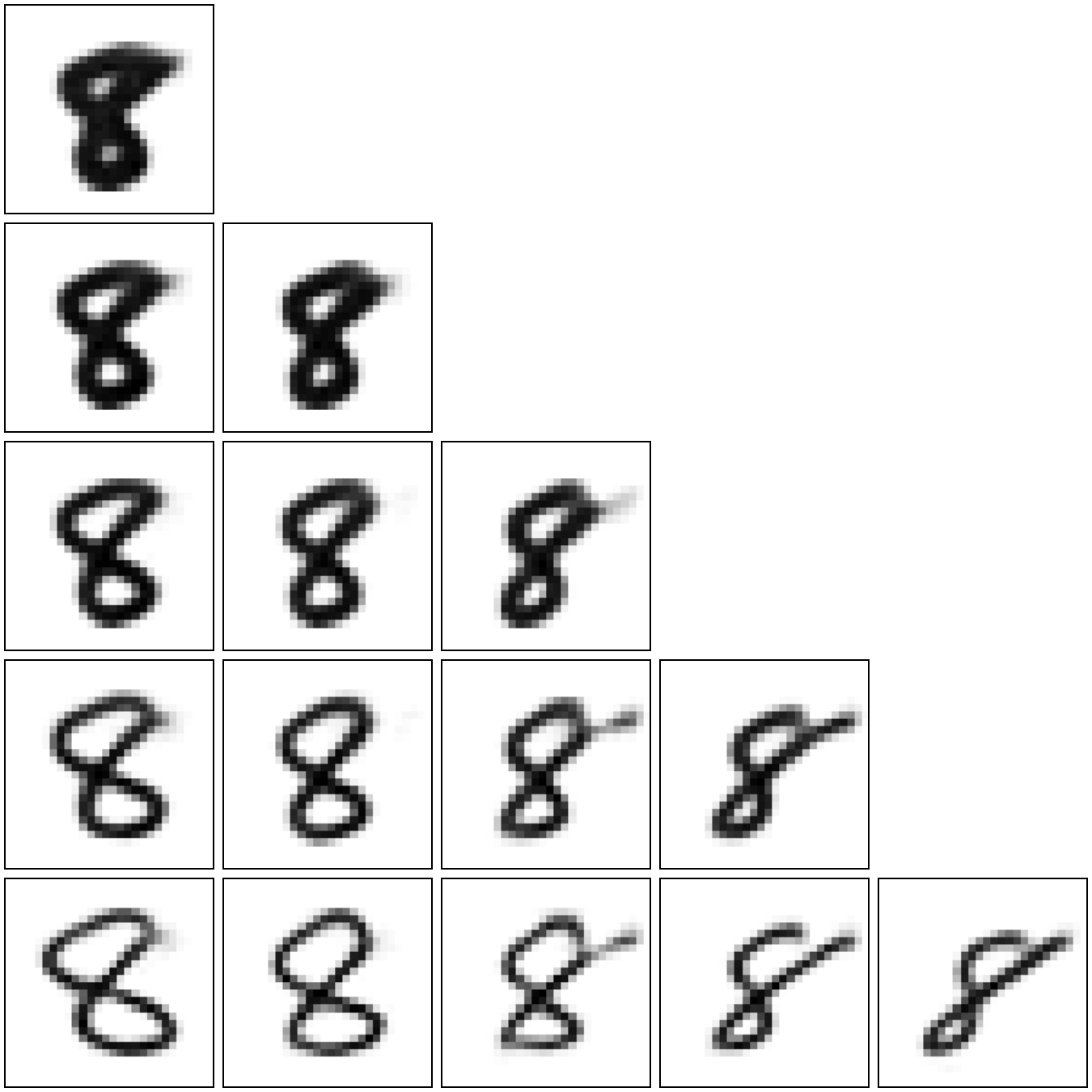}}\label{fig:MNIST2MNIST_mfld_8} 
    \subfigure[{\tiny Generated by OT map}]{\includegraphics[trim={0 0 0 0},clip, width=0.15\textwidth]{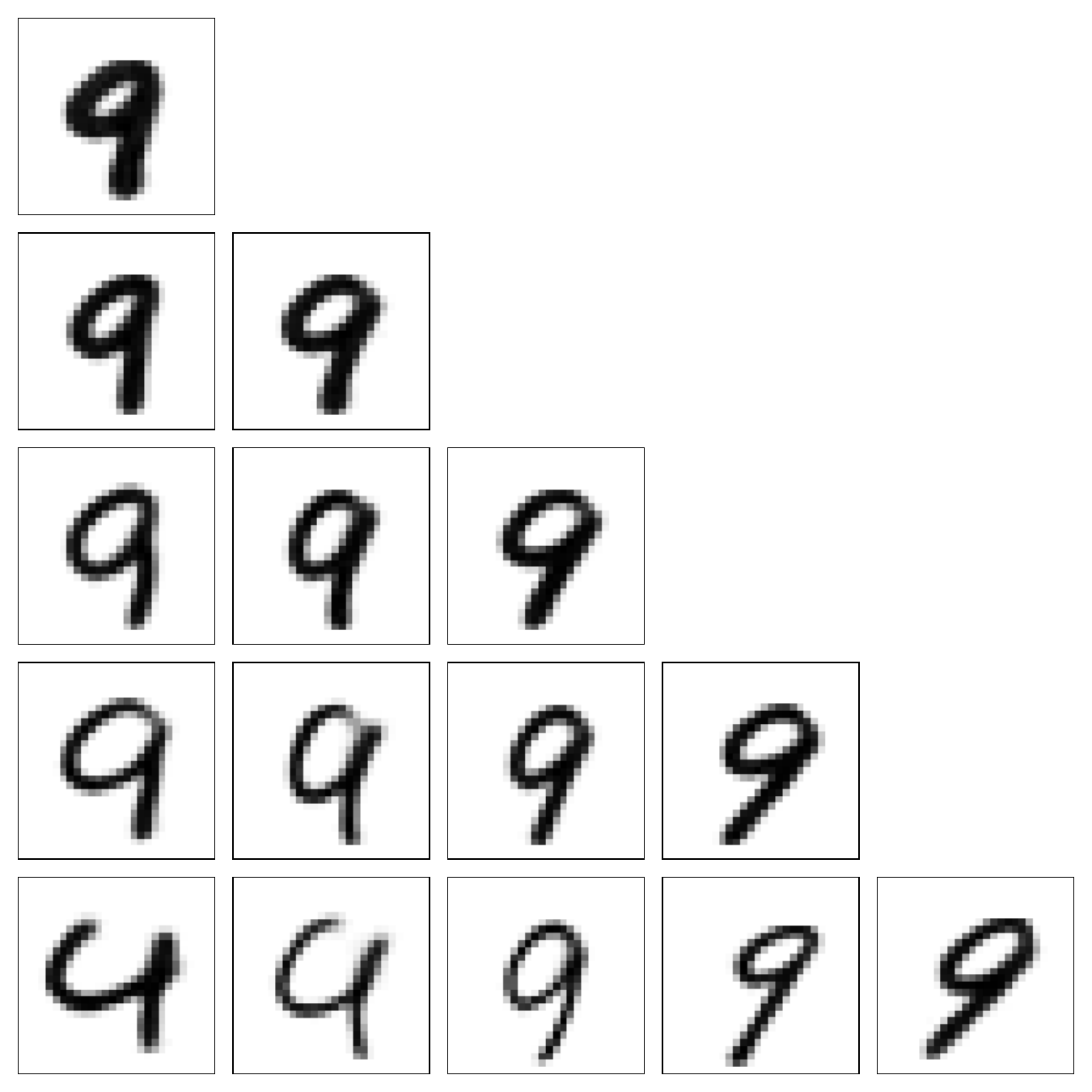}}\label{fig:MNIST2MNIST_mfld_9_OT}
    \subfigure[{\tiny Generated by non-optimal map}]{\includegraphics[trim={0 0 0 0},clip, width=0.15\textwidth]{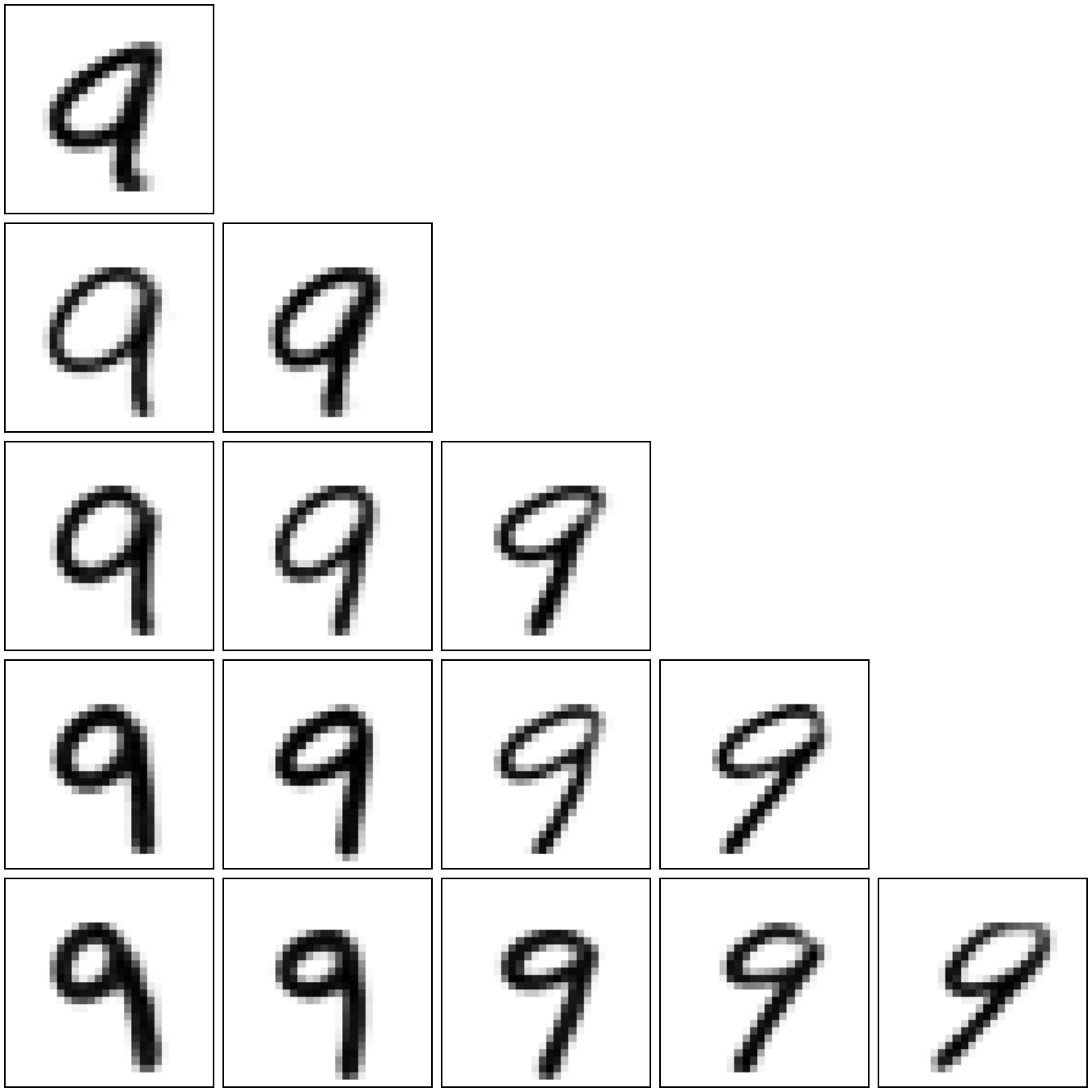}}\label{fig:MNIST2MNIST_mfld_9_non_OT}
    \caption{The style of each MNIST digit is better preserved by the computed optimal transport map. The triangular table is produced using linear interpolation in VAE latent space.}\label{fig:data_mfld}
\end{figure*}

\paragraph{Task 3 (Inter-class transport: Map each class in Fashion MNIST to its corresponding class in MNIST.
)} The uncurated MNIST and Fashion MNIST images generated by the computed maps are shown in Figure \ref{fig:uncurate_task3}. While our method effectively recovers the overall profiles of Fashion MNIST images, the encoder–decoder scheme faces difficulties in capturing fine texture details. As a future research direction, we aim to enhance our approach by incorporating U-net architectures and directly performing OT in the pixel space. Furthermore, in Figure \ref{fig:kde_plot}, we present the KDE plots of the pushforward distributions $T_\mu^\nu[u_\theta]_\sharp \mu$ (resp. $T_\nu^\mu[u_\theta]_\sharp \nu$) together with their targets $\nu$ (resp. $\mu$), which demonstrate the satisfactory generative quality of the computed OT map $T_\mu^\nu[u_\theta]$; Figure \ref{fig:accuracy_vs_iter} presents the classification accuracy (\%) of the generated images on the test dataset over training iterations. We display only the first 70000 iterations, since the accuracy no longer improves as training progresses.

\begin{figure*}[t!]
    \centering
    \subfigure[MNIST images generated @ $t_f$.]{\includegraphics[trim={0 0 0 2cm},clip, width=0.45\textwidth]{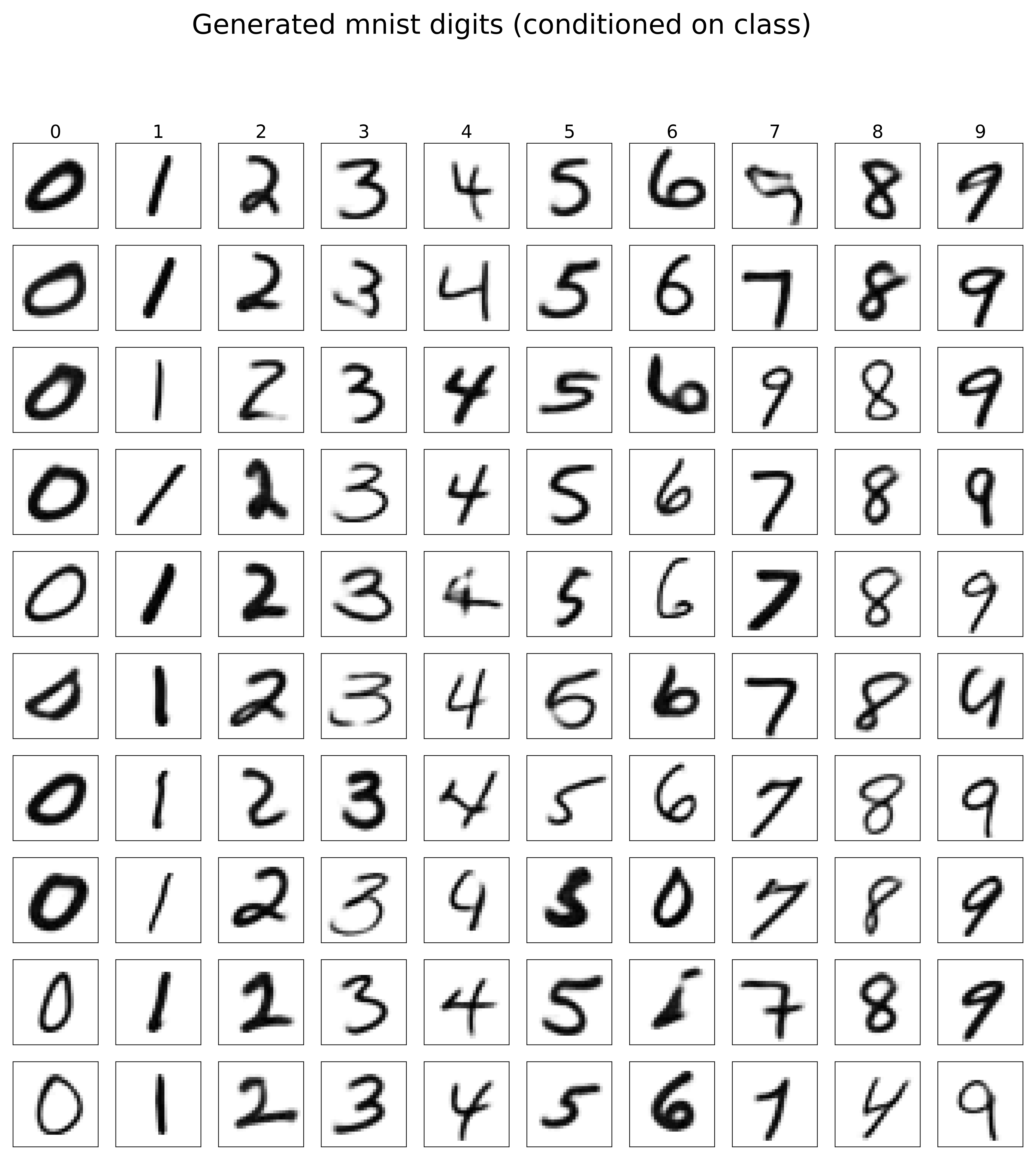}}\label{fig:3a} 
    \hfill
    \subfigure[Fashion MNIST images generated @ $0$.]{\includegraphics[trim={0 0 0 2cm},clip, width=0.45\textwidth]{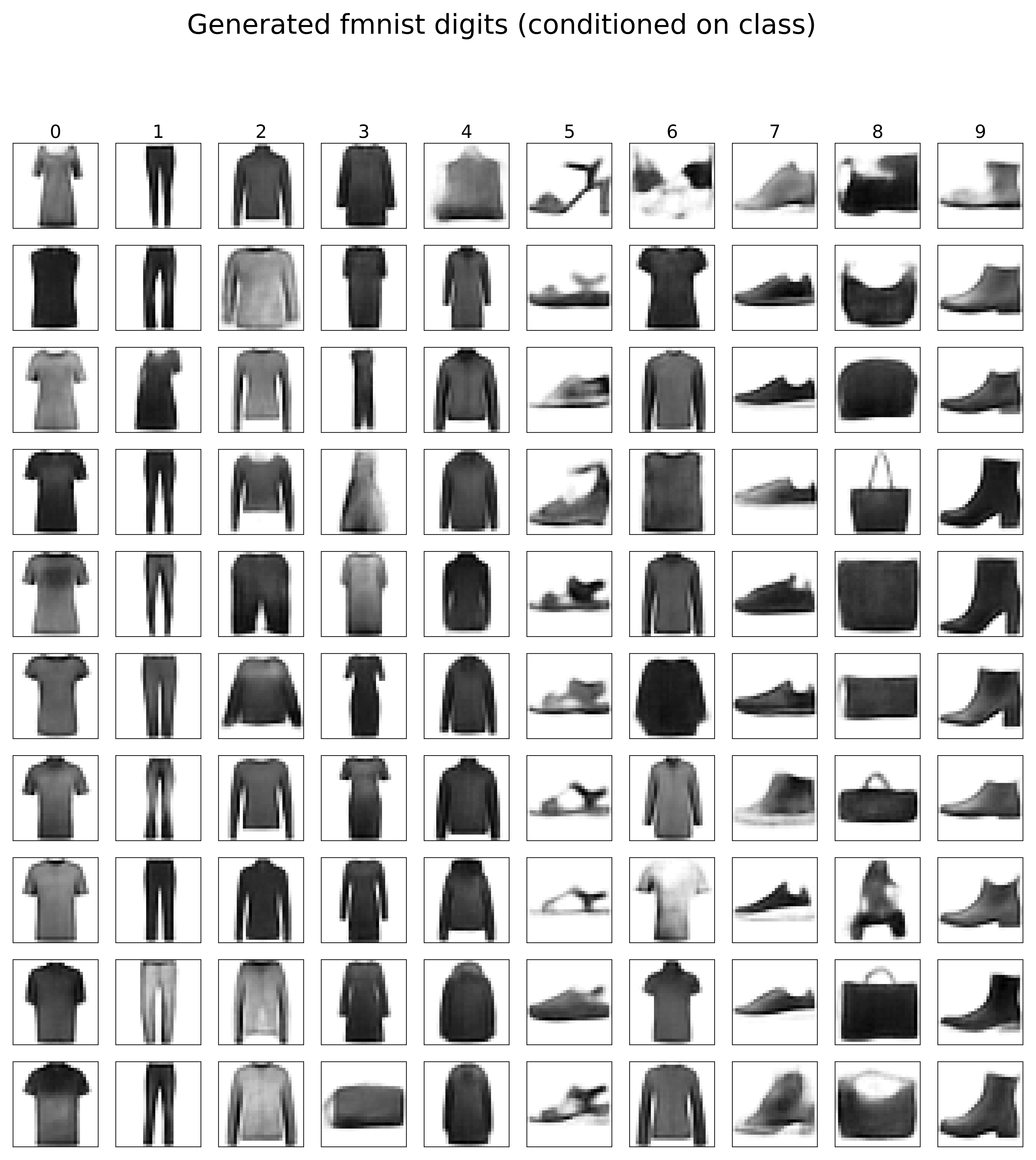}}\label{fig:3b}
    \caption{Task 3: Uncurated images generated using the computed forward (\textbf{Left}) \& backward (\textbf{Right}) OT maps.}\label{fig:uncurate_task3}
\end{figure*}

\begin{figure}
    \centering
    \includegraphics[width=0.9\linewidth]{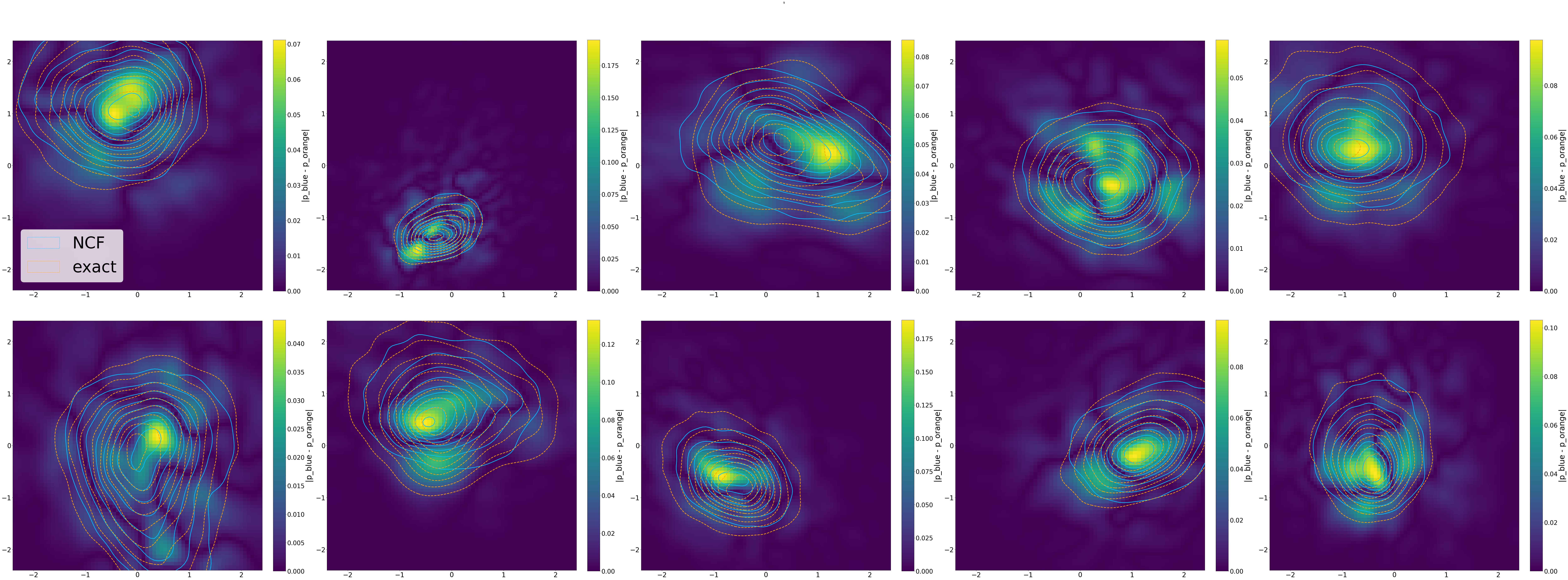}
    \caption{KDE contours of the MNIST latent samples generated using the computed OT map (blue) and the target samples (orange), conditioned on each MNIST class (0–9, arranged left to right and top to bottom). The samples are projected onto the first two PCA dimensions. The heat maps illustrate the discrepancies between the two distributions.}
    \label{fig:kde_plot}    
\end{figure}

\begin{figure}
    \centering
    \includegraphics[width=0.5\linewidth]{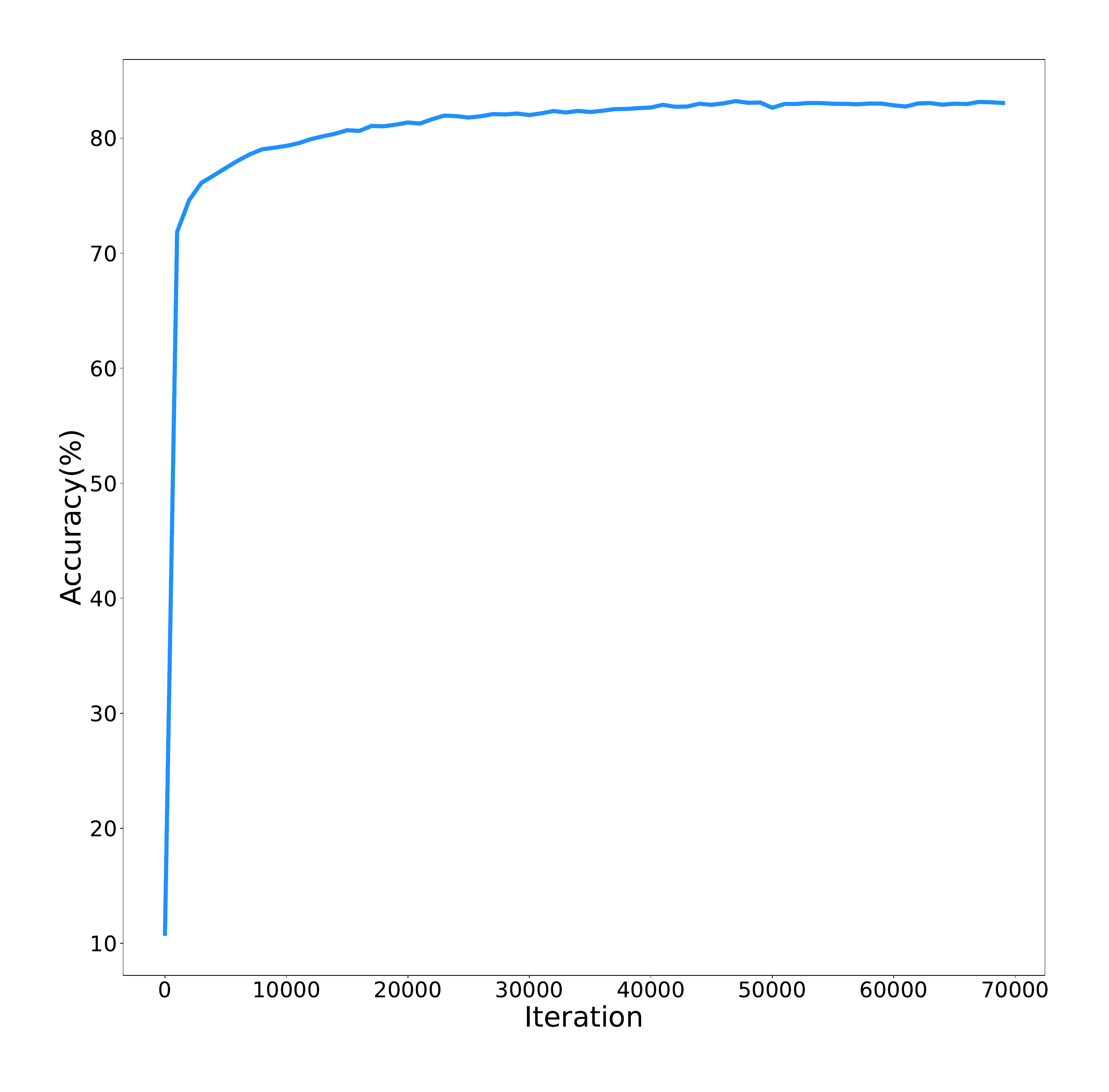}
    \caption{Accuracy(\%) of trained forward class-conditional transport map versus training iterations. Results are displayed for the first 70000 iterations, beyond which the accuracy exhibits no significant improvement. 
    }
    \label{fig:accuracy_vs_iter}
\end{figure}

\end{document}